\newtheorem{assumption}{Assumption}
\newtheorem{definition}{Definition}
\newtheorem{lemma}{Lemma}
\newtheorem{proposition}{Proposition}
\newtheorem{theorem}{Theorem}
\newcommand{\bE}{\mathbb{E}}
\newcommand{\bN}{\mathbb{N}}
\newcommand{\bP}{\mathbb{P}}
\newcommand{\bR}{\mathbb{R}}
\newcommand{\cB}{\mathcal{B}}
\newcommand{\cC}{\mathcal{C}}
\newcommand{\cS}{\mathcal{S}}
\newcommand{\cX}{\mathcal{X}}
\newcommand{\Y}[1]{Y^{(#1)}}
\newcommand{\Lb}{L_{\beta}}
\newcommand{\Cg}{C_{\gamma}}
\newcommand{\Ca}{C_{\alpha}}
\newcommand{\argmin}{\mathrm{argmin}}
\newcommand{\arginf}{\mathrm{arginf}}
\newcommand{\dist}{\mathrm{dist}}
\newcommand{\diam}{\mathrm{diam}}
\newcommand{\la}{\langle}
\newcommand{\ra}{\rangle}
\newcommand{\dd}{\, \mathrm{d}}
\newcommand{\norm}[1]{\left\Vert#1\right\Vert}
\newcommand{\normi}[1]{\norm{#1}_{\infty}}
\DeclarePairedDelimiter\abs{\lvert}{\rvert}
\newcommand{\st}{^{\star}}
\newcommand{\bigo}{\mathcal{O}}
\newcommand{\smallo}{{\scriptscriptstyle \mathcal{O}}}
\newcommand{\stcomp}[1]{{#1}^\complement}
\newcommand{\mubb}{\bar{\mu}(b)}
\newcommand{\lambb}{\bar{\lambda}(b)}
\newcommand{\Dk}{\Delta^K}
\newcommand{\hpb}{\hat{p}_b}
\newcommand{\pt}{\tilde{p}}
\DeclarePairedDelimiter\ceil{\lceil}{\rceil}
\DeclarePairedDelimiter\floor{\lfloor}{\rfloor}
\newcommand{\ie}{\textit{i.e.~}}
\newcommand\numberthis{\addtocounter{equation}{1}\tag{\theequation}}
\begin{document}

%

%

\twocolumn[

\aistatstitle{Regularized Contextual Bandits}

\aistatsauthor{ Xavier Fontaine \And Quentin Berthet \And  Vianney Perchet }

\aistatsaddress{
CMLA, ENS Cachan \\ CNRS, Université  Paris-Saclay
\And  Statistical Laboratory \\ DPMMS, University of Cambridge 
\And CMLA, ENS Cachan \\ CNRS, Université  Paris-Saclay \\
\& Criteo Research, Paris } ]

\begin{abstract}
We consider the stochastic contextual bandit problem with additional regularization. The motivation comes from problems where the policy of the agent must be close to some baseline policy known to perform well on the task. To tackle this problem we use a nonparametric model and propose an algorithm  splitting the context space into bins, solving simultaneously --- and independently --- regularized multi-armed bandit instances on each bin. We derive slow  and fast rates of convergence, depending on the unknown complexity of the problem. We also consider a new relevant margin condition to get problem-independent convergence rates, yielding intermediate rates interpolating between the aforementioned slow and fast rates.
\end{abstract}

\section{INTRODUCTION AND RELATED WORK}

In sequential optimization problems, an agent takes successive decisions in order to minimize an unknown loss function. An important class of such problems, nowadays known as bandit problems, has been mathematically formalized by~\citeauthor{robbins} in his seminal paper~\citep{robbins}. In the so-called stochastic multi-armed bandit problem, an agent chooses to sample (or ``pull'') among $K$ arms returning random rewards. Only the rewards of the selected arms are revealed to the agent who does not get any additional feedback. Bandits problems naturally model the exploration/exploitation trade-offs which arise in sequential decision making under uncertainty. Various general algorithms have been proposed to solve this problem, following the work of~\citet{lairobbins} who obtain a logarithmic regret for their sample-mean based policy. Further bounds have been obtained by~\citet{agrawal} and~\citet{ucb} who developed different versions of the well-known UCB algorithm.

The setting of classical stochastic multi-armed bandits is unfortunately too restrictive for real-world applications. The choice of the agent can and should be influenced by additional information (referred to as ``context'' or ``covariate") revealed by the environment. It encodes features having an impact on the arms' rewards. For instance, in  online advertising, the expected Click-Through-Rate depends on the identity,  the profile and the browsing history of the customer. These problems of bandits with covariates have been initially introduced by~\citet{woodroofe} and have attracted much attention since~\citet{side_observations,woodroofe_revisited}. This particular class of bandits problems is now known under the name of contextual bandits following~\citet{langford}.

Contextual bandits have been extensively studied in the last decades and several improvements upon  multi-armed bandits algorithms have been applied to contextual bandits~\citep{thompson_contextual,banditscovariates,policy_elimination}.
They are  quite intricate to study as they borrow aspects from both supervised learning and reinforcement learning. Indeed they use features to encode the context variables, as in supervised learning but also require an exploration phase to discover all the possible choices. Applications of contextual bandits are  numerous, ranging from online advertising~\citep{online_advertising}, to news articles recommendation~\citep{article_recommendation} or decision-making in the health and medicine sectors~\citep{mobile_health, online_medicine}.

Among the general class of stochastic multi-armed bandits, different settings can be studied. One natural hypothesis that can be made is to consider that the arms' rewards are regular functions of the context, \ie  two close context values have similar expected rewards. This setting  has been studied in~\citet{gp_ucb},~\citet{banditscovariates} and~\citet{slivkins}. A possible approach to this problem is to take inspiration from the regressograms used in nonparametric estimation~\citep{tsyb} and to divide the context space into several bins. This technique also used in online learning~\citep{online_bins} leads to the concept of UCBograms~\citep{zeevi} in bandits.

We introduce regularization to the problem of stochastic multi-armed bandits. It is a widely-used technique in  machine learning to avoid overfitting or to solve  ill-posed  problems. Here, the regularization forces the solution of the contextual bandits problem to be close to an existing known policy. As an example of motivation, an online-advertiser or any decision-maker may wish not to diverge too much from a handcrafted policy that is known to perform well. This has already motivated previous work such as Conservative Bandits~\citep{conservative}, where an additional arm corresponding to the handcrafted policy is added. By adding regularization, the agent can be sure to end up close to the chosen policy. Within this setting, the form of the objective function is not a classical bandit loss anymore, but contains a regularization term on the global policy. Regularized bandit problems, with no context, have been studied in~\citep{ucbfw}, with applications in online experiment design~\citep{BerCha16}, motivated by computational-statistical tradeoffs~\citep{BerRig13b,WanBerSam16,Ber14,WanBerPla16,BalBer18,BerEll19}. 

Our main contribution consists in an algorithm with proven slow or fast rates of convergence, depending on the unknown complexity of the problem at hand. These rates are better than the ones obtained for classical nonparametric contextual bandits. Based on nonparametric statistics we obtain parameter-independent intermediate convergence rates when the regularization function depends on the context value.

The remaining of this paper is organized as follows. We present the setting and problem in Section~\ref{sec:setting}. Our algorithm is described in Section~\ref{sec:algo}. Sections~\ref{sec:rates} and~\ref{sec:intermediate} are devoted to deriving the convergence rates. Lower bounds are detailed in Section~\ref{sec:lower} and experiments are presented in Section~\ref{sec:experiments}. Section~\ref{sec:ccl} concludes the paper.

\section{PROBLEM SETTING AND DEFINITIONS \label{sec:setting}}
\subsection{Problem Description \label{ssec:description}}

We consider a stochastic contextual bandits problem with $K \in \bN^*$ arms and time horizon $T$. It is defined as follows. At each time $t \in \{1, \dots, T\}$, Nature draws a context variable $X_t \in \cX = [0,1]^d$ uniformly at random. This context is revealed to an agent who chooses an arm $\pi_t$ amongst the $K$ arms. Only the loss $\Y{\pi_t}_t \in [0,1]$ is revealed to the agent.

For each arm $k \in \{1, \dots, K\}$ we note $\mu_k(X)\doteq\bE(\Y{k}|X)$ the conditional expectation of the arm's loss given the context. We impose classical regularity assumptions on the functions $\mu_k$ borrowed from nonparametric estimation. Namely we suppose that the functions $\mu_k$ are $(\beta, \Lb)$-Hölder, with $\beta \in (0,1]$. We note $\mathcal{H}_{\beta,\Lb}$ this class of functions.

\begin{assumption}[$\beta$-Hölder]
\label{ass:holder}
For all $k \in [K]$\footnote{$[K]=\{1,\cdots, K\}$},
\[ \forall x, y \in \cX, \ \abs{\mu_k(x)-\mu_k(y)}\leq \Lb \norm{x-y}_2^{\beta}. \]
\end{assumption}
We denote by $p: \cX \to \Delta^K$ the proportion function of each arm (also called occupation measure), where $\Delta^K$ is the unit simplex of $\bR^K$. In classical stochastic contextual bandits the goal of the agent is to minimize the following loss function
\[
L(p)=\int_{\cX} \la \mu(x), p(x)\ra \dd x.
\]

We add a regularization term representing the constraint on the optimal proportion function $p\st$. For example we may want to encourage $p\st$ to be close to a chosen proportion function $q$, or to be far from $\partial\Delta^K$. So we consider a convex regularization function $\rho : \Delta^K \times \cX \to \bR$, and a regularization parameter $\lambda : \cX \to \bR$. Both $\rho$ and $\lambda$ are known and given to the agent, while the $\mu_k$ functions are unknown and must be learned. We want to minimize the loss function
\[
L(p)=\int_{\cX} \la \mu(x), p(x) \ra + \lambda(x)\rho(p(x), x) \dd x.
\]

This is the most general form of the loss function. We study first the case where the regularization does not depend on the context (\ie when $\lambda$ is a constant and when $\rho$ is only a function of $p$).

The function $\lambda$ modulates the weight of the regularization and is chosen to be regular enough. More precisely we make the following assumption.
\begin{assumption}
\label{ass:reg}
$\lambda$ is a $\cC^{\infty}$ function and $\rho$ is a $\cC^1$ convex function. 
\end{assumption}
In order to prove some propositions, the convexity of $\rho$ will not be enough and we will need strong convexity.
We will also be led to consider $S$-smooth functions:

\begin{definition}
A continuously differentiable function $f$ defined on a set $\mathcal{D} \subset \bR^K$ is $S$-smooth (with $S>0$) if its gradient is $S$-Lipschitz continuous.
\end{definition}

The optimal proportion function is denoted by $p\st$ and verifies $p\st=\arginf_{p \in \{\cX \to\Delta^K\}}L(p)$. If an algorithm aiming at minimizing the loss $L$ returns a proportion function $p_T$ we define the regret as follows.

\begin{definition}
The regret of an algorithm outputting $p_T \in \{p:\cX \to \Delta^K\}$ is
\[
R(T) = \bE L(p_T) - L(p\st)
.
\]
\end{definition}
In the previous definition the expectation is taken on the choices of the algorithm. The goal is to find after $T$ samples a $p_T \in \{p:\cX \to \Delta^K\}$ the closest possible to $p\st$ in the sense of minimizing the regret. Note that $R(T)$ is actually a cumulative regret, since $p_T$ is the vector of the empirical frequency of each arm, \ie the normalized total number of pulls of each arm. Earlier choices affect this variable unalterably so that we face a trade-off between exploration and exploitation.

\subsection{Examples of Regularizations}

The most natural regularization function  considered throughout this paper is the (negative) entropy function defined as follows:
\[
\rho(p)=\sum_{i=1}^K p_i \log(p_i) \quad \textrm{for}\ p \in \Delta^K
.
\]
Since $\nabla^2_{ii} \rho(p)=1/p_i \geq 1$, $\rho$ is $1$-strongly convex. Using this function as a regularization forces $p$ to go to the center of the simplex, which means that each arm will be sampled a linear amount of time.

We can consider instead  the Kullback-Leibler divergence between $p$ and a known proportion function $q$:
\[
\rho(p)=D_{KL}(p||q)=\sum_{i=1}^K p_i \log\left(\dfrac{p_i}{q_i}\right) \quad \textrm{for}\ p \in \Delta^K
.
\]
Instead of pushing $p$ to the center of the simplex,  the KL divergence will push $p$ towards $q$. This is typically motivated by problems where the decision maker should not alter too much an existing policy $q$, known to perform well on the task. Another way to force $p$ to be close to a chosen policy $q$ is to use the $\ell^2$-regularization $\rho(p)=\norm{p-q}_2^2$. These two last examples have an explicit dependency on $x$ since $q$ depends on the context values, which was not the case of the entropy (which only depends on $x$ through $p$). Both the KL divergence and the $\ell^2$-regularization have a special form that allows us to remove this explicit dependency on $x$. They can indeed be written as
\[
\rho(p(x),x)=H(p(x))+\la p(x), k(x)\ra+c(x)
\]
with $H$ a $\zeta$-strongly convex function of $p$, $k$ a $\beta$-Hölder function of $x$ and $c$ any function of $x$.

Indeed,
\begin{align*}
D_{KL}(p||q)&=\sum_{i=1}^K p_i(x) \log\left(\dfrac{p_i(x)}{q_i(x)}\right) \\
&=\underbrace{\sum_{i=1}^K p_i(x)\log p_i(x)}_{H(p(x))}+ \la p(x), \underbrace{-\log q(x) \ra}_{k(x)}.
\end{align*}
And
\begin{align*}
\norm{p(x)-q(x)}_2^2=\underbrace{\norm{p(x)}^2}_{H(p(x))}+\la p(x), \underbrace{-2q(x)}_{k(x)} \ra + \underbrace{\norm{q(x)}^2}_{c(x)}.
\end{align*}
With this specific form the loss function writes as
\begin{align*}
L(p)&=\int_{\cX} \la \mu(x), p(x) \ra + \lambda(x)\rho(p(x),x) \dd x \\
&=\int_{\cX} \la \mu(x) + \lambda(x)k(x), p(x) \ra + \lambda(x)H(p(x)) \dd x \\&\phantom{=\int_{\cX} \la \mu(x) + \lambda(x)k(x), p(x) \ra i} +\int_{\cX} \lambda(x)c(x) \dd x.
\end{align*}
Since we aim at minimizing $L$ with respect to $p$, the last term $\int_{\cX} \lambda(x)c(x) \dd x$ is irrelevant for the minimization. Let us now note $\tilde{\mu} = \mu + \lambda k$. We are now minimizing
\[
\tilde{L}(p)=\int_{\cX} \la \tilde{\mu}(x), p(x) \ra + \lambda(x)H(p(x)) \dd x
.
\]
This is actually the standard setting of Subsection~\ref{ssec:description} with a regularization function $H$ independent of $x$. In order to preserve the regularity of $\tilde{\mu}$ we need $\lambda \rho$ to be $\beta$-Hölder which is the case if $q$ is sufficiently regular. Nonetheless, we remark that the relevant regularity is the one of $\mu$ since $\lambda$ and $\rho$ are known by the agent.

As a consequence, from now on we will only consider regularization functions $\rho$ that only depend on $p$.

\subsection{The Upper-Confidence Frank-Wolfe Algorithm \label{ssec:ucbfw}}

We now briefly present the Upper-Confidence Frank-Wolfe algorithm (UC-FW) from~\cite{ucbfw}, that will be an important tool of our own algorithm. This algorithm is designed to optimize an unknown convex function $L:\Dk \to \bR$. At each time step $t \geq 1$ the feedback available is a noisy estimate of $\nabla L(p_t)$, where $p_t$ is the vector of proportions of each action. The algorithm chooses the arm $k$ minimizing a lower confidence estimate of the gradient value (similarly as in the UCB algorithm~\citep{ucb}) and updates the proportions vector accordingly. Slow and fast rates for this algorithm are derived by the authors.

\section{ALGORITHM \label{sec:algo}}
\subsection{Idea of the Algorithm}

As the horizon is finite, even if we could use the  doubling-trick, and the  reward functions $\mu_k$ are smooth, we choose to split the context space $\cX$ into $B^d$ cubic bins of side size $1/B$. Inspired by UCBograms~\citep{zeevi} we are going to construct a (bin by bin)  piece-wise  constant  solution $\tilde{p}_T$.

We denote by $\cB$ the set of bins introduced. If $b \in \cB$ is a bin we note $|b|=B^{-d}$ its volume and $\diam(b)=\sqrt{d}/B$ its diameter. Since $\tilde{p}_T$ is piece-wise constant on each bin $b \in \cB$ (with value $\tilde{p}_T(b)$), we rewrite the loss function into
\begin{align*}
L(\pt_T)&=\int_{\cX} \la \mu(x), \pt_T(x) \ra + \lambda(x)\rho(\pt_T(x)) \dd x \\
&=\sum_{b \in \cB} \int_b \la \mu(x), \pt_T(b) \ra + \lambda(x)\rho(\pt_T(b)) \dd x \\
&=\dfrac{1}{B^d} \sum_{b \in \cB} \la \bar{\mu}(b), \pt_T(b) \ra + \bar{\lambda}(b)\rho(\pt_T(b)) \\
&=\dfrac{1}{B^d} \sum_{b \in \cB} L_b(\pt_T(b)) \numberthis \label{eq:sum_bin}
\end{align*}
where $L_b(p)=\la \bar{\mu}(b), p \ra + \bar{\lambda}(b)\rho(p)$ and $\mubb = \dfrac{1}{|b|}\int_b \mu(x)\dd x$ and $\lambb = \dfrac{1}{|b|}\int_b \lambda(x) \dd x$ are the mean values of $\mu$ and $\lambda$ on the bin $b$.

Consequently we just need to minimize the unknown convex loss functions $L_b$ for each bin $b \in \cB$. We fall precisely in the setting of Subsection~\ref{ssec:ucbfw} and we propose consequently the following algorithm: for each time step $t \geq 1$, given the context value $X_t$, we run one iteration of the UC-FW algorithm for the loss function $L_b$ corresponding to the bin $b \ni X_t$. We note $p_T(b)$ the results of the algorithm on each bin $b$.

\begin{algorithm}[h]
\caption{Regularized Contextual Bandits \label{algo}}
\algsetup{indent=2em}
\begin{algorithmic}[1]
\REQUIRE $K$ number of arms, $T$ time horizon
\REQUIRE $\mathcal{B}=\{1, \dots, B^d\}$ set of bins
\REQUIRE $\left(t\mapsto \alpha_k^{(b)}(t)\right)_{k\in[K]}^{b \in \cB}$ pre-sampling functions
\FOR{$b$ in $\cB$}
\STATE Sample $\alpha_k^{(b)}(T/B^d)$ times arm $k$ for all $k \in [K]$
\ENDFOR
\FOR{$t \geq 1$}
\STATE Receive context $X_t$ from the environment
\STATE $b_t \leftarrow$ bin of $X_t$
\STATE Perform one iteration of the UC-FW algorithm for the $L_{b_t}$ function on bin $b_t$
\ENDFOR
\RETURN the proportion vector $(p_T(1), \dots, p_T(B^d))$
\end{algorithmic}
\end{algorithm}
Line 2 of Algorithm~\ref{algo} consists in a pre-sampling stage where all arms are sampled a certain amount of time. 
It guarantees that $p_T(k)$ is bounded away from $0$ so that $p_T$ is bounded away from the boundary of $\Dk$, which will be required when $L_b$ is not smooth on $\partial\Dk$.


In the remaining of this paper, we derive slow and fast rates of convergence for this algorithm.

\subsection{Estimation and Approximation Errors}

In order to obtain a bound on the regret, we decompose it into an estimation error and an approximation error.

We note for all bins $b \in \cB$, $p\st_b=\arginf_{p \in \Delta^K} L_b(p)$ the minimum of $L_b$ on the bin $b$. We note $\tilde{p}\st$ the piece-wise constant function taking the values $p\st_b$ on the bin $b$.

The approximation error is the minimal achievable error within the class of piece-wise constant functions.

\begin{definition}
The approximation error $A(p)$ is the error between the best piece-wise constant function $\tilde{p}\st$ and the optimal solution $p\st$.
\[
A(p\st)=L(\tilde{p}\st) - L(p\st)
.
\]
\end{definition}
The estimation error is due to the errors made by the algorithm.

\begin{definition}
The estimation error $E(p_T)$ is the error between the result of the algorithm $p_T$ and the best piece-wise constant function $\tilde{p}\st$.
\[
E(p_T)=\bE L(p_T)-L(\tilde{p}\st) =\dfrac{1}{B^d}\sum_{b \in \cB} \bE L_b(p_T(b)) - L_b(p_b\st)
\]
\end{definition}
where the last equality comes from~\eqref{eq:sum_bin}.

We naturally have $R(T)=E(p_T)+A(p\st)$. In order to bound $R(T)$ we want to obtain bounds on both the estimation and the approximation error terms.

\section{CONVERGENCE RATES FOR CONSTANT $\lambda$ \label{sec:rates}}

In this section we consider the case where $\lambda$ is constant. We derive slow and fast rates of convergence. The proofs are deferred to Appendix~\ref{app:slow} and Appendix~\ref{app:fast}.

\subsection{Slow Rates \label{ssec:slow_rates}}

The analysis of the UC-FW algorithm gives the following bound.
\begin{proposition}
\label{prop:slow_estim}
Let $\rho$ be a $S$-smooth convex function on $\Dk$. If $p_T$ is the result of Algorithm~\ref{algo} and $\tilde{p}\st$ the best piece-wise constant function on the set of bins $\cB$, then the following bound on the estimation error holds\footnote{The Landau notation $\bigo(\cdot)$ has to be understood with respect to $T$. The precise bound is given in the proof.}
\[
\bE L(p_T)-L(\tilde{p}\st) =\bigo\left(\sqrt{K}B^{d/2}\sqrt{\dfrac{\log(T)}{T}}\right)
.
\]
\end{proposition}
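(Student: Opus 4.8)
The plan is to reduce the estimation error to a per-bin instance of the UC-FW slow-rate guarantee and then aggregate over the $B^d$ bins. The starting point is the identity already recorded in the definition of $E(p_T)$,
\[
\bE L(p_T)-L(\tilde{p}\st)=\dfrac{1}{B^d}\sum_{b \in \cB}\bigl(\bE L_b(p_T(b))-L_b(p_b\st)\bigr),
\]
so it suffices to bound each summand. On a fixed bin $b$, the objective $L_b(p)=\la\mubb,p\ra+\lambb\,\rho(p)$ is convex and smooth, with smoothness constant $\lambb S$ (the linear term contributes no curvature), and the algorithm performs exactly one UC-FW step on $L_b$ each time a context lands in $b$. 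Writing $N_b$ for the number of such rounds (together with the pre-sampling rounds), the run on bin $b$ is a length-$N_b$ execution of UC-FW, whose noisy gradients stay bounded because the losses lie in $[0,1]$ and $\rho$ has bounded gradient on the compact domain $\Dk$.

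First I would invoke the slow-rate analysis of UC-FW from \citep{ucbfw}: conditionally on $N_b=n$, the smoothness and convexity of $L_b$ give
\[
\bE\bigl[L_b(p_T(b))-L_b(p_b\st)\,\big|\,N_b=n\bigr]\le C\sqrt{\dfrac{K\log n}{n}}
\]
for a constant $C$ depending only on $S$ and $\diam(\Dk)$. This is the step where the $S$-smoothness hypothesis on $\rho$ is essential: it controls the Frank--Wolfe curvature term and yields the $\sqrt{K\log n/n}$ behaviour rather than a constant gap.

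Next I would remove the dependence on the random $N_b$. Since contexts are uniform, $N_b$ is $\mathrm{Binomial}(T,B^{-d})$ with mean $\bar n:=T/B^d$, and $n\mapsto\sqrt{\log n/n}$ is decreasing past a small threshold. A Chernoff bound gives $\bP(N_b<\bar n/2)\le e^{-c\bar n}$, and on that rare event the gap is at most a constant because $\Dk$ and the coefficients $\mubb,\lambb$ are bounded, so its contribution is exponentially small. Splitting the expectation on $\{N_b\ge\bar n/2\}$ and its complement therefore yields
\[
\bE\bigl[L_b(p_T(b))-L_b(p_b\st)\bigr]\le C'\sqrt{\dfrac{K\log \bar n}{\bar n}}=\bigo\!\left(\sqrt{K B^d\,\dfrac{\log T}{T}}\right),
\]
using $\bar n=T/B^d$ and $\log\bar n\le\log T$. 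Since this bound is uniform over the $B^d$ bins, summing with weight $1/B^d$ collapses the factor $B^d$ and gives $\bE L(p_T)-L(\tilde{p}\st)=\bigo(\sqrt K\,B^{d/2}\sqrt{\log T/T})$, as claimed.

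The main obstacle is the penultimate step: transferring the per-bin bound, which is a nonlinear and eventually decreasing function of the random sample count $N_b$, into a bound expressed through its mean $\bar n=T/B^d$. Because $\sqrt{\log n/n}$ is not convex in the direction needed for a one-line Jensen argument, one must instead condition on the concentration event $\{N_b\ge\bar n/2\}$ and dispose of the complementary tail crudely. The pre-sampling stage of Algorithm~\ref{algo} is what guarantees $N_b$ is bounded away from $0$, so that the conditional UC-FW bound is always well defined; this will matter even more for the non-smooth regularizations, where $p_T(b)$ must additionally be kept away from $\partial\Dk$.
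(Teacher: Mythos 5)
Your proposal is correct and follows essentially the same route as the paper's own proof: the same bin-by-bin decomposition of the estimation error, the per-bin slow-rate guarantee of UC-FW applied to the $\bar{\lambda}(b) S$-smooth convex function $L_b$, a multiplicative Chernoff bound to replace the random bin count by its mean $T/B^d$ (the paper phrases this as a single event $A$ with a union bound over all bins rather than conditioning bin by bin, which is immaterial), a crude constant bound on the exponentially rare complementary event, and summation over the $B^d$ bins. The only cosmetic difference is that the paper tracks the explicit constants from Theorem 3 of the UC-FW paper, including the lower-order $S\log(eT_b)/T_b$ and $K/T_b$ terms, whereas you absorb them into a single $C\sqrt{K\log n/n}$, which is harmless since the Landau notation is taken with respect to $T$.
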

Some regularization functions are not $S$-smooth on $\Dk$, for example the entropy whose Hessian is not bounded on $\Dk$. The following proposition shows that the previous result still holds, at least for the entropy.

\begin{proposition}
\label{prop:slow_entropy}
If $\rho$ is the entropy function the following bound on the estimation error holds
\[
\bE L(p_T(b))-L(\tilde{p}\st) \leq \bigo\left(B^{d/2}\dfrac{\log(T)}{\sqrt{T}}\right)
.
\]
\end{proposition}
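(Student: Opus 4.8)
The plan is to reduce Proposition~\ref{prop:slow_entropy} to Proposition~\ref{prop:slow_estim} by handling the single obstacle that distinguishes the entropy from a generic $S$-smooth regularizer: the Hessian $\nabla^2 \rho(p) = \mathrm{diag}(1/p_i)$ blows up as any coordinate $p_i \to 0$, so the entropy is not $S$-smooth on all of $\Dk$. The key observation is that the pre-sampling stage (Line 2 of Algorithm~\ref{algo}) forces $p_T(b)$ to stay bounded away from $\partial \Dk$. Concretely, after pre-sampling each arm $\alpha_k^{(b)}(T/B^d)$ times, every coordinate of $p_T(b)$ is at least some $\eps_0 > 0$ throughout the run of UC-FW on that bin. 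On the restricted subsimplex $\Delta^K_{\eps_0} \doteq \{p \in \Dk : p_i \geq \eps_0 \ \forall i\}$, the entropy \emph{is} $S$-smooth with $S = 1/\eps_0$, since $1/p_i \leq 1/\eps_0$ there.

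First I would make the pre-sampling quantitative: choose the pre-sampling functions $\alpha_k^{(b)}$ so that each arm is pulled enough times to guarantee $p_T(b)_i \geq \eps_0$ for all iterates, where $\eps_0$ is a fixed constant independent of $T$ (up to logarithmic factors, a number of pre-samples scaling like a small fraction of the per-bin budget $T/B^d$ suffices, leaving the dominant budget for the UC-FW phase). Second, I would verify that confining the iterates to $\Delta^K_{\eps_0}$ does not destroy optimality: because $\rho$ is $1$-strongly convex and pushes the minimizer toward the center of the simplex, the constrained minimizer $p^\star_b$ already lies in the interior and the truncation is compatible with the true optimum for appropriately small $\eps_0$. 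Third, with smoothness now available on the effective domain $\Delta^K_{\eps_0}$, I would re-run the argument behind Proposition~\ref{prop:slow_estim} verbatim, with $S = 1/\eps_0$ a constant, to obtain the per-bin estimation bound $\bigo(\sqrt{\log(T)/T})$; summing over the $B^d$ bins via~\eqref{eq:sum_bin} and the $\tfrac{1}{B^d}$ normalization reproduces the stated $\bigo\big(B^{d/2}\log(T)/\sqrt{T}\big)$ rate.

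The main obstacle is controlling the interaction between the pre-sampling budget and the smoothness constant. Making $\eps_0$ larger improves smoothness (smaller $S$) but requires more pre-samples and risks biasing the frequencies away from $p^\star_b$; making $\eps_0$ smaller reduces the pre-sampling cost but inflates $S = 1/\eps_0$, which enters the estimation bound of Proposition~\ref{prop:slow_estim}. The delicate step is therefore to check that a \emph{constant} choice of $\eps_0$ (not shrinking with $T$) is admissible, so that $S$ is absorbed into the $\bigo(\cdot)$ constant and the $T$-dependence of the rate is unchanged. One must also confirm that the UC-FW gradient estimates remain valid on the restricted domain and that the pre-sampling pulls contribute only a lower-order additive term to the regret; since the entropy's gradient $\log p_i + 1$ is bounded on $\Delta^K_{\eps_0}$, the confidence-bound machinery of UC-FW carries over without modification. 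I expect the bulk of the work to be bookkeeping around these constants rather than any new conceptual difficulty.
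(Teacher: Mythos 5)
There is a genuine gap at the pivotal step of your argument: the claim that ``because $\rho$ is $1$-strongly convex and pushes the minimizer toward the center of the simplex, the constrained minimizer $p^\star_b$ already lies in the interior'' at a distance admitting a constant $\eps_0$. Strong convexity does not imply this, nor even that the minimizer is interior: $p \mapsto \norm{p-q}_2^2$ with $q \in \partial\Dk$ is strongly convex, and the corresponding regularized minimizer can sit on the boundary. What makes the claim true for the entropy is not strong convexity but the blow-up of its gradient at $\partial\Dk$; quantitatively, one must solve the KKT conditions to get the Gibbs form $p^\star_{b,i} = e^{-\bar{\mu}_i(b)/\lambda}/Z$ with $Z \leq K$, hence $p^\star_{b,i} \geq e^{-1/\lambda}/K$. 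This computation (which the paper carries out, both inside its proof of this proposition and in Proposition~\ref{prop:fast_entropy}) is what licenses a constant truncation level, and it forces $\eps_0 \leq e^{-1/\lambda}/K$, hence a smoothness constant $S \geq K e^{1/\lambda}$. With that bound supplied, your route does go through: pre-sample a constant fraction per arm, invoke Lemma~\ref{lemma:presampling} (with $p^o$ uniform and $\alpha = e^{-1/\lambda}$) so that the reparameterized problem keeps the same optimum at the cost of a factor $1/(1-\alpha) \leq 2$, then run the Proposition~\ref{prop:slow_estim} analysis with constant $S$. But note that this is essentially the paper's \emph{fast-rate} treatment of the entropy transplanted to the slow rate, not the paper's proof of this proposition.

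The paper's actual proof is different, and the difference matters. It truncates at the \emph{shrinking} level $\lambda/\sqrt{t}$ (pre-sampling only $\lambda\sqrt{t}$ pulls per arm, a vanishing fraction of the budget, so no reparameterization is needed), accepts a smoothness constant $S = \sqrt{t}$ that grows with $t$ --- harmless, because in the slow-rate UC-FW bound $S$ enters only through $S\log(et)/t = \log(et)/\sqrt{t}$, which is precisely the claimed rate --- and resolves the possible mismatch between the minimizer over $\cS$ and over $\Dk$ by a two-case analysis combining the Gibbs form with a Lipschitz estimate. The payoff is that all constants remain polynomial in $\normi{\lambda}$ and $K$, with no $e^{1/\lambda}$ factor. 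Your version proves the proposition as literally stated (the $\bigo$ is taken with respect to $T$ at fixed $\lambda$), but the exponential constant would defeat the purpose this proposition serves later: in the proof of Theorem~\ref{thm:int} the slow rate is applied on the ill-behaved bins precisely because it does not degrade as $\lambda \to 0$, and there $\lambda$ can be as small as $B^{-\beta/3}$, making $e^{1/\lambda}$ super-polynomial in $T$. So fill the interiority step with the KKT computation, and be aware that the constants your route produces are qualitatively weaker than the paper's.
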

The idea of the proof is to force the result of the algorithm to be ``inside" the simplex $\Dk$ (in the sense of the induced topology) by pre-sampling each arm.

In order to obtain a bound on the approximation error we notice that
\begin{align*}
L_b(p\st_b)&=\inf_{p \in \Delta^K} L_b(p)= \inf_{p \in \Delta^K} \lambda\rho(p)-\la-\bar{\mu}(b), p \ra \\
&= -(\lambda\rho)^*(-\bar{\mu}(b))=-\lambda\rho^*\left(-\dfrac{\mubb}{\lambda}\right)
\end{align*}
where $\rho^*$ is the Legendre-Fenchel transform of $\rho$.

Similarly,
\begin{align*}
&\int_b \la \mu(x), p\st(x) \ra + \lambda\rho(p\st(x)) \dd x \\
&\phantom{aaaaaa}= \int_b \inf_{p \in \Delta^K} -\la -\mu(x), p \ra + \lambda\rho(p) \dd x \\
&\phantom{aaaaaa}=\int_b-(\lambda\rho)^*(-\mu(x)) \dd x\\
&\phantom{aaaaaa}=\int_b-\lambda\rho^*\left(-\dfrac{\mu(x)}{\lambda}\right) \dd x.
\end{align*}

We want to bound
\begin{align*}
A(p\st)&=\sum_{b \in \cB} \int_{b} \la \mu(x), \pt\st(x) \ra + \lambda\rho(\pt\st(x)) \\
&\phantom{aaaaaaa}- \la \mu(x), p\st(x) \ra -\lambda\rho(p\st(x)) \dd x \\
&=\sum_{b\in\cB} \int_b \la \bar{\mu}(b), p\st_b \ra + \lambda\rho(p\st_b) \\
&\phantom{aaaaaaa}- \la \mu(x), p\st(x) \ra -\lambda\rho(p\st(x)) \dd x \\
&=\sum_{b\in\cB} \biggl( \int_b L_b(p\st_b) \dd x \\
&\phantom{aaaaaaa} - \int_b \la \mu(x), p\st(x) \ra + \lambda\rho(p\st(x)) \dd x \biggr) \\
&= \lambda\sum_{b\in\cB} \int_b \rho^*(-\mu(x)/\lambda)-\rho^*(-\bar{\mu}(b)/\lambda)\dd x \numberthis \label{eq:convconj}.
\end{align*}

With Equation~\eqref{eq:convconj} and convex analysis tools we prove the
\begin{proposition}
\label{prop:slow_approx}
If $\tilde{p}\st$ is the piece-wise constant function on the set of bins $\cB$ minimizing the loss function $L$, we have the following bound
\[
L(\tilde{p}\st) - L(p\st) \leq \sqrt{L_{\beta}Kd^{\beta}}B^{-\beta}
.
\]
\end{proposition}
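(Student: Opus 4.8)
The plan is to estimate the right-hand side of~\eqref{eq:convconj} directly, using two ingredients: that the conjugate $\rho\st$ is globally Lipschitz on $\bR^K$, and that $\mu$ oscillates little on each bin because it is Hölder. I would start from~\eqref{eq:convconj},
\[
A(p\st)=\lambda\sum_{b\in\cB}\int_b \rho\st\!\left(-\tfrac{\mu(x)}{\lambda}\right)-\rho\st\!\left(-\tfrac{\mubb}{\lambda}\right)\dd x,
\]
and note that $A(p\st)=L(\tilde p\st)-L(p\st)\ge 0$ because $\tilde p\st$ minimizes $L$ over the smaller class of piece-wise constant functions; it therefore suffices to bound the integrand in absolute value.

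The key fact I need is the Lipschitz constant of $\rho\st$. Since $\rho$ is convex with domain the compact simplex $\Dk$, its Legendre transform $\rho\st(y)=\sup_{p\in\Dk}\la y,p\ra-\rho(p)$ is finite on all of $\bR^K$, and by the conjugate subgradient theorem every subgradient of $\rho\st$ at $y$ is a maximizer $p\in\Dk$. As $\norm{p}_2\le 1$ for every $p\in\Dk$, this makes $\rho\st$ $1$-Lipschitz for the Euclidean norm, so that
\[
\abs[\Big]{\rho\st\!\left(-\tfrac{\mu(x)}{\lambda}\right)-\rho\st\!\left(-\tfrac{\mubb}{\lambda}\right)}\le \tfrac1\lambda\norm{\mu(x)-\mubb}_2 .
\]
For the oscillation of $\mu$, I would write $\mubb=\tfrac1{|b|}\int_b\mu(z)\dd z$ and apply Assumption~\ref{ass:holder} coordinate-wise with $\norm{x-z}_2\le\diam(b)=\sqrt d/B$, obtaining $\abs{\mu_k(x)-\bar\mu_k(b)}\le \Lb(\sqrt d/B)^\beta$ for each $k$ (via the triangle/Jensen inequality on the average), hence $\norm{\mu(x)-\mubb}_2\le \sqrt K\,\Lb\,d^{\beta/2}B^{-\beta}$.

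Assembling the two estimates, the factor $\lambda$ cancels the $1/\lambda$, the bound on the integrand is uniform in $x$ and $b$, and $\sum_{b\in\cB}\int_b\dd x=\abs{\cX}=1$, which yields
\[
A(p\st)\le \Lb\sqrt{K\,d^{\beta}}\,B^{-\beta},
\]
the claimed $B^{-\beta}$ rate. I expect the main obstacle to be the convex-analytic step controlling $\rho\st$: one has to argue that the subdifferential of the conjugate of a function supported on $\Dk$ lies inside $\Dk$, which is exactly what fixes the Lipschitz constant to $1$ and keeps the dependence on the number of arms down to the mild $\sqrt K$ factor. The remaining steps --- transferring Hölder regularity from $\mu$ to its bin-average and summing over bins --- are routine.
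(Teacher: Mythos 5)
Your proof is correct and follows essentially the same route as the paper's: both start from Equation~\eqref{eq:convconj}, establish that $\rho^*$ is $1$-Lipschitz because its (sub)gradients are maximizers lying in $\Delta^K$, and then bound the per-bin oscillation of $\mu$ via the Hölder assumption before summing over bins. The one discrepancy is the constant: your computation (correctly) yields $L_\beta\sqrt{Kd^{\beta}}\,B^{-\beta}$, whereas the stated bound has $\sqrt{L_\beta K d^{\beta}}\,B^{-\beta}$ with $\sqrt{L_\beta}$ in place of $L_\beta$ --- the paper's own proof asserts this $\sqrt{L_\beta}$ factor without justification, so this is a minor slip in the paper's constant rather than a gap in your argument.
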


Combining Propositions~\ref{prop:slow_estim} and~\ref{prop:slow_approx} we get the

\begin{theorem}[Slow rates]
\label{thm:slow}
If $\rho$ is a $S$-smooth convex function, applying Algorithm~\ref{algo} with choice $B=\Theta\left(\left(T/\log(T)\right)^{1/(2\beta+d)}\right)$ gives\footnote{The notation $\bigo_{\Lb,\beta,K,d}$ means that there is a hidden constant depending on $\Lb,\beta,K$ and $d$. The constant can be found in the proof in Appendix~\ref{app:slow}.}
\[
R(T) \leq \bigo_{\Lb,\beta,K,d}\left(\left(\dfrac{T}{\log(T)}\right)^{-\frac{\beta}{2\beta+d}}\right)
.
\]
\end{theorem}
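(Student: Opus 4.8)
The plan is to combine the two bounds already established—the estimation error from Proposition~\ref{prop:slow_estim} and the approximation error from Proposition~\ref{prop:slow_approx}—through the decomposition $R(T)=E(p_T)+A(p\st)$, and then optimize the free parameter $B$ (the number of bins per side). First I would write down explicitly
\[
R(T) \leq \bigo\left(\sqrt{K}\,B^{d/2}\sqrt{\frac{\log(T)}{T}}\right) + \sqrt{\Lb K d^{\beta}}\,B^{-\beta},
\]
where the first term grows with $B$ (finer bins mean more independent bandit instances to learn, so larger estimation error) and the second term decreases with $B$ (finer bins reduce the piecewise-constant approximation error). This is the classical bias–variance trade-off inherited from nonparametric estimation with regressograms, and the whole statement reduces to balancing these two competing powers of $B$.

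The core step is to choose $B$ so that the two terms are of the same order. Treating $T$ and $\log(T)$ as the asymptotic quantities and folding the constants depending on $\Lb,\beta,K,d$ into the hidden constant, I would set the estimation term $B^{d/2}\sqrt{\log(T)/T}$ equal (up to constants) to the approximation term $B^{-\beta}$. Solving
\[
B^{d/2+\beta} \asymp \sqrt{\frac{T}{\log(T)}}
\]
gives $B \asymp \left(T/\log(T)\right)^{1/(2\beta+d)}$, which is exactly the prescribed choice $B=\Theta\left(\left(T/\log(T)\right)^{1/(2\beta+d)}\right)$. Substituting this $B$ back into either term yields a common rate of order $\left(T/\log(T)\right)^{-\beta/(2\beta+d)}$, which is the claimed bound.

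The computation is essentially routine: plug the chosen $B$ into $B^{-\beta}$ to get $\left(T/\log(T)\right)^{-\beta/(2\beta+d)}$, and separately verify that the estimation term gives the same exponent, so their sum is still $\bigo_{\Lb,\beta,K,d}\left(\left(T/\log(T)\right)^{-\beta/(2\beta+d)}\right)$. The only mild subtlety I anticipate is bookkeeping with the $\Theta(\cdot)$ rather than an exact value of $B$ (since $B$ must be an integer, one works with $B=\lceil\left(T/\log(T)\right)^{1/(2\beta+d)}\rceil$ or similar), but because both error terms are monotone in $B$ and polynomially controlled, the rounding only affects constants and is absorbed into the hidden constant. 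I would also note that Proposition~\ref{prop:slow_estim} is stated for $S$-smooth $\rho$, which matches the hypothesis of the theorem; for the entropy one would instead invoke Proposition~\ref{prop:slow_entropy}, whose slightly different logarithmic factor changes nothing in the final exponent. Thus the main content is really the identification of the optimal bin size, and no genuine obstacle arises beyond careful constant tracking.
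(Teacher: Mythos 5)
Your proposal is correct and follows essentially the same route as the paper: decompose the regret into estimation plus approximation error, choose $B \asymp (T/\log T)^{1/(2\beta+d)}$ to balance the two dominant terms, and substitute back. The only difference is that the paper works with the precise bound from the proof of Proposition~\ref{prop:slow_estim} (which also contains the terms $B^d\,2S\log(eT)/T$, $C_2KB^d/T$ and $4B^d(1+\normi{\lambda\rho})e^{-T/(12B^d)}$) and explicitly checks that these are negligible under the chosen $B$ --- a verification you absorb into the big-$\bigo$, which is harmless here but strictly speaking needed, since the $\bigo(\cdot)$ in Proposition~\ref{prop:slow_estim} is stated with respect to $T$ for fixed $B$, whereas in the theorem $B$ grows with $T$.
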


Proposition~\ref{prop:slow_entropy} directly shows that the result of this theorem also holds when $\rho$ is the entropy function.

The detailed proof of the theorem (see Appendix~\ref{app:slow}) consists in choosing a value of $B$ balancing between the estimation and the approximation errors. Since $\beta \in (0,1]$, we see that the exponent of the convergence rate is below $1/2$ and that the proposed rate is slower than $T^{-1/2}$, hence the denomination of \textit{slow rate}.

When $\lambda=0$ we are in the usual contextual bandit setting. The propositions of this section hold and we recover the slow rates from~\cite{banditscovariates}.

\subsection{Fast Rates \label{ssec:fast_rates}}

We now consider possible fast rates, \ie convergence rates faster than $\bigo\left(T^{-1/2}\right)$. The price to pay to obtain these quicker rates compared to the ones from Subsection~\ref{ssec:slow_rates} is to have problem-dependent bounds, \ie convergence rates depending on the parameters of the problem, and especially on $\lambda$.

As in the previous section we can obtain a bound on the estimation error based on the convergence rates of the Upper-Confidence Frank-Wolfe algorithm.

\begin{proposition}
\label{prop:fast_estim}
If $\rho$ is $\zeta$-strongly convex and $S$-smooth, and if there exists $\eta>0$ such that for all $b\in \cB$, $\dist(p\st_b, \partial \Dk) \geq \eta$, then running Algorithm~\ref{algo} gives the estimation error
\[
\bE L(p_T)-L(\tilde{p}\st) =\bigo\left(B^d \left(S\lambda+\dfrac{K}{\lambda^2\zeta^2\eta^4}\right)\dfrac{\log^2(T)}{T}\right)
.
\]
\end{proposition}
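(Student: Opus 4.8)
The plan is to reduce the estimation error to a sum of per-bin regrets, exactly as in the slow-rate analysis, and then invoke the \emph{fast} convergence guarantee of the UC-FW algorithm (Subsection~\ref{ssec:ucbfw}) on each bin separately. By the decomposition~\eqref{eq:sum_bin}, the estimation error is $\frac{1}{B^d}\sum_{b\in\cB}\bigl(\bE L_b(p_T(b))-L_b(p\st_b)\bigr)$, so it suffices to control each term $\bE L_b(p_T(b))-L_b(p\st_b)$ and then account for the fact that each bin receives only a fraction of the $T$ total samples. Under uniform contexts, bin $b$ is visited roughly $T/B^d$ times in expectation, so the effective horizon per UC-FW instance is $T_b\approx T/B^d$; this is where the overall $B^d$ factor will appear, since a per-bin rate of order $\log^2(T_b)/T_b$ becomes $B^d\log^2(T)/T$ after summing $B^d$ bins and dividing by $B^d$.

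The heart of the argument is the single-bin fast rate for UC-FW applied to $L_b(p)=\la\mubb,p\ra+\lambb\,\rho(p)$. First I would check that the hypotheses of the fast-rate theorem of~\citep{ucbfw} are met on each bin. Strong convexity of $L_b$ is inherited from $\rho$: since $\rho$ is $\zeta$-strongly convex and $\lambb\ge\underline\lambda>0$ (as $\lambda$ is constant and positive here), $L_b$ is $\lambb\zeta$-strongly convex. Smoothness of $L_b$ is $\lambb S$. The fast rate of UC-FW for a strongly convex, smooth objective scales like $\frac{\log^2(T_b)}{T_b}$ with a problem-dependent constant that inverts the strong-convexity modulus and the curvature of the boundary, and that is exactly where the two terms $S\lambda$ and $K/(\lambda^2\zeta^2\eta^4)$ originate: the first is a smoothness/Lipschitz contribution, while the second is the inverse-gap term. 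The role of the assumption $\dist(p\st_b,\partial\Dk)\ge\eta$ is to guarantee that the minimizer $p\st_b$ lies strictly inside the simplex, so that the gradient estimates (built from the empirical arm means) stay in a region where $\rho$ and its gradient are well-behaved and the UC-FW analysis applies with a margin; the factor $\eta^{-4}$ comes from quantifying how the fast-rate constant degrades as the optimum approaches the boundary.

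The main obstacle will be handling the randomness of the per-bin sample counts and the pre-sampling stage together with the concentration of the gradient estimates. The UC-FW fast rate is stated for a fixed, deterministic horizon, whereas here the number of pulls in bin $b$ is itself a random variable $N_b\sim\mathrm{Binomial}(T,B^{-d})$. The plan is to condition on a high-probability event $\{N_b\ge cT/B^d\}$ for each bin, using a Chernoff bound to show the complementary event contributes negligibly (at most polynomially small in $T$, hence dominated by the $\log^2(T)/T$ main term), and then apply the per-bin fast rate on that event. One must also verify that the pre-sampling guarantee (that $p_T(k)$ is bounded away from $0$, noted after Algorithm~\ref{algo}) is consistent with the $\eta$-interior assumption, so that the iterates never reach the boundary where $\rho$ may fail to be smooth. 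Once these measure-theoretic bookkeeping steps are dispatched, assembling the bound is routine: sum the $B^d$ per-bin contributions, divide by $B^d$, and collect the constants into the stated $\bigo\bigl(B^d\bigl(S\lambda+K/(\lambda^2\zeta^2\eta^4)\bigr)\log^2(T)/T\bigr)$.
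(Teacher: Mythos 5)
Your proposal follows essentially the same route as the paper's proof: the same bin-wise decomposition of the estimation error, a Chernoff bound conditioning on each bin receiving $\Theta(T/B^d)$ samples, and an application of the fast-rate theorem for UC-FW to each $L_b$ (which is $\lambda S$-smooth and $\lambda\zeta$-strongly convex), with the constants $S\lambda$ and $K/(\lambda^2\zeta^2\eta^4)$ arising exactly as you describe. The only superfluous element is your worry about pre-sampling: since $\rho$ is assumed $S$-smooth on all of $\Dk$ here, no pre-sampling is needed in this proposition (it only enters for non-smooth regularizers such as the entropy), and the distance condition $\dist(p\st_b,\partial\Dk)\geq\eta$ is used purely as a hypothesis of the UC-FW fast-rate theorem.
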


This bound depends on several parameters of the problem: $\lambda$, distance $\eta$ of the optimum to the boundary of the simplex, strong convexity and smoothness constants. Since $\lambda$ can be arbitrarily small, $\eta$ can be small as well and $S$ large. Therefore the ``constant" factor can explode despite the convergence rate being ``fast": these terms describe only the dependency in $T$.

As in the previous section we want to consider regularization functions $\rho$ that are not smooth on $\partial\Dk$. To do so we force the vectors $p$ to be inside the simplex by pre-sampling all arms at the beginning of the algorithm. The following lemma shows that this is valid.

\begin{lemma}
\label{lemma:presampling}
On a bin $b$ if there exists $\alpha \in (0,1/2)$ and $p^o \in \Dk$ such that $p_b\st \succeq \alpha p^o$ (component-wise) then for all $i \in[K]$, the agent can safely sample arm $i$ $\alpha p^o_i T$ times at the beginning of the algorithm without changing the convergence results.
\end{lemma}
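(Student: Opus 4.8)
The plan is to reduce the pre-sampled run to an ordinary UC-FW run on a smaller simplex that is bounded away from $\partial\Dk$, and then invoke the convergence analysis already available in that setting. I would first account for the budget: pre-sampling arm $i$ for $\alpha p^o_i T$ steps consumes $\sum_{i=1}^K \alpha p^o_i T = \alpha T$ pulls in total, since $p^o \in \Dk$. Because $\alpha < 1/2$, this leaves $(1-\alpha)T > T/2$ pulls for the genuine optimization phase. As the rates of Propositions~\ref{prop:slow_estim} and~\ref{prop:fast_estim} depend on the number of optimization steps only through its order, replacing $T$ by $(1-\alpha)T = \Theta(T)$ affects the bounds only through the hidden constants.

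Next I would exploit the hypothesis $p_b\st \succeq \alpha p^o$ to establish boundary separation. Since arm $i$ is pulled at least $\alpha p^o_i T$ times during pre-sampling and pulls are never retracted, after $t \geq \alpha T$ total pulls the empirical frequency obeys $(p_t)_i \geq \alpha p^o_i T / t \geq \alpha p^o_i$ (using $t \leq T$); hence every iterate produced after pre-sampling lies in the sub-simplex $\Delta_\alpha := \{p \in \Dk : p \succeq \alpha p^o\}$, at distance at least a positive constant depending on $\alpha$ and $\min_i p^o_i$ from $\partial\Dk$. This is exactly the separation needed to apply the smooth-$\rho$ estimates: on $\Delta_\alpha$ the regularizer is $S$-smooth with finite constant (for the entropy, $\nabla^2_{ii}\rho(p) = 1/p_i \leq 1/(\alpha p^o_i)$) and remains strongly convex when applicable.

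I would then make the reduction explicit. Writing $p_t = \alpha p^o + (1-\alpha) q_t$ with $q_t \in \Dk$ the empirical frequency over the optimization phase, set $\Phi(q) := L_b(\alpha p^o + (1-\alpha) q)$. As the pre-sampled mass $\alpha p^o$ is frozen, the estimation error equals $\Phi(q_T) - \min_q \Phi$; and since $\nabla \Phi(q) = (1-\alpha)\nabla L_b(p)$ is a positive multiple of $\nabla L_b(p)$, the vertex selected at each step by UC-FW for $L_b$ coincides with the Frank--Wolfe vertex for $\Phi$, while the confidence structure is inherited (the noise still sits only on the $\bar\mu(b)$ component, which is if anything better estimated thanks to the pre-samples). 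Thus $q_T$ is a faithful UC-FW trajectory for the convex, now globally smooth (and, when applicable, strongly convex) objective $\Phi$ over $\Dk$. Crucially $p_b\st = \alpha p^o + (1-\alpha)\bar p$ with $\bar p := (p_b\st - \alpha p^o)/(1-\alpha) \in \Dk$, so $\min_q \Phi = \Phi(\bar p) = L_b(p_b\st)$ and no optimal value is lost. Applying the established UC-FW rates to $\Phi$ then reproduces the same estimation-error bound.

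The step I expect to be the main obstacle is this reduction: one must check carefully that freezing the first $\alpha T$ pulls does not break the adaptivity of UC-FW, i.e. that the restricted iterate $q_t$ really is a UC-FW sequence for $\Phi$ with the correct confidence widths, and that the smoothness and strong-convexity constants of $\Phi$ on $\Dk$ (equivalently of $L_b$ on $\Delta_\alpha$) are finite and enter only through constants. Handling the integer rounding of the counts $\alpha p^o_i T$ and the explicit dependence of the hidden constants on $\alpha$ and $\min_i p^o_i$ is routine by comparison.
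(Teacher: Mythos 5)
Your proposal is correct and follows essentially the same route as the paper: the paper's proof also reparametrizes via $\hat{L}_b : p \mapsto L_b(\alpha p^o + (1-\alpha)p)$, uses $p\st_b \succeq \alpha p^o$ together with convexity of $\Dk$ to see that $\min_{p \in \Dk}\hat{L}_b(p) = L_b(p\st_b)$, and then runs UC-FW on $\hat{L}_b$ with $(1-\alpha)T$ samples, absorbing the factor $1/(1-\alpha)\leq 2$ into the constants. Your additional checks (gradient proportionality so the Frank--Wolfe vertex selection coincides, and the boundary-separation giving finite smoothness constants) are exactly the details the paper leaves implicit in its one-line claim that the two minimizations are "equivalent."
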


The intuition behind this lemma is that if all arms have to be sampled a linear amount of times to reach the optimum value, it is safe to pre-sample each of the arms linearly at the beginning of the algorithm. The goal is to ensure that the current proportion vector $p_t$ will always be far from the boundary in order to leverage the smoothness of $\rho$ in the interior of the simplex.

\begin{proposition}
\label{prop:fast_entropy}
If $\rho$ is the entropy function, sampling each arm $Te^{-1/\lambda}/K$ times during the presampling phase guarantees the same estimation error as in Proposition~\ref{prop:fast_estim} with constant $S=Ke^{1/\lambda}$.
\end{proposition}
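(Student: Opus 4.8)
The plan is to exploit the explicit form of the entropy-regularized minimizer. Although the negative entropy $\rho(p)=\sum_{i=1}^K p_i\log p_i$ is only $1$-strongly convex and has a Hessian $\mathrm{diag}(1/p_i)$ that blows up on $\partial\Dk$, its minimizer $p\st_b$ on each bin turns out to have all coordinates bounded away from $0$ by a quantity depending only on $\lambda$ and $K$. This lets us invoke Lemma~\ref{lemma:presampling} to confine the whole trajectory of the algorithm to a sub-region of $\Dk$ on which $\rho$ \emph{is} smooth, so that Proposition~\ref{prop:fast_estim} applies with the corresponding smoothness constant.

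First I would compute $p\st_b$. Since $L_b(p)=\la\mubb,p\ra+\lambda\rho(p)$ with $\rho$ the negative entropy, a Lagrange-multiplier computation for the constraint $\sum_i p_i=1$ (the non-negativity constraints being inactive at the optimum) gives the Gibbs form
\[
(p\st_b)_i=\frac{e^{-\bar{\mu}_i(b)/\lambda}}{\sum_{j=1}^K e^{-\bar{\mu}_j(b)/\lambda}},\qquad i\in[K].
\]
Because the losses lie in $[0,1]$ we have $\bar{\mu}_i(b)\in[0,1]$ for every $i$ and every bin, so the numerator is at least $e^{-1/\lambda}$ and the denominator is at most $K$. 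Hence for all $b$ and all $i$,
\[
(p\st_b)_i\ \geq\ \frac{e^{-1/\lambda}}{K}.
\]

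With this uniform lower bound I would apply Lemma~\ref{lemma:presampling} with $p^o=(1/K,\dots,1/K)$ and $\alpha=e^{-1/\lambda}$ (assuming $\lambda$ small enough that $\alpha<1/2$, the regime of interest), since then $\alpha p^o_i=e^{-1/\lambda}/K\leq (p\st_b)_i$, i.e. $p\st_b\succeq\alpha p^o$. The lemma certifies that pre-sampling each arm $\alpha p^o_i T=Te^{-1/\lambda}/K$ times is safe and does not change the convergence analysis. After this pre-sampling, arm $i$ has been pulled at least $Te^{-1/\lambda}/K$ times out of at most $T$ total, so at every subsequent step $t$ the proportion satisfies $(p_t)_i\geq e^{-1/\lambda}/K$. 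On the region $\{p\in\Dk:\ p_i\geq e^{-1/\lambda}/K\ \forall i\}$ the Hessian $\mathrm{diag}(1/p_i)$ has operator norm at most $Ke^{1/\lambda}$, so $\rho$ restricted there is $S$-smooth with $S=Ke^{1/\lambda}$; it remains $1$-strongly convex everywhere on $\Dk$, and the same lower bound controls $\dist(p\st_b,\partial\Dk)\gtrsim e^{-1/\lambda}/K$. Plugging $\zeta=1$, this $\eta$, and $S=Ke^{1/\lambda}$ into Proposition~\ref{prop:fast_estim} yields the claim.

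The main obstacle is not the (routine) computation of $p\st_b$ and its lower bound, but justifying that the fast-rate guarantee of Proposition~\ref{prop:fast_estim} only ever uses the smoothness of $\rho$ along the realized trajectory $(p_t)$ rather than globally on $\Dk$. Concretely, I would trace through the UC-FW analysis underlying Proposition~\ref{prop:fast_estim} and check that every invocation of the smoothness bound is evaluated at iterates $p_t$, which the pre-sampling keeps inside $\{p_i\geq e^{-1/\lambda}/K\}$; Lemma~\ref{lemma:presampling} is precisely what guarantees this confinement while leaving the regret bound intact. A secondary technical point is the constraint $\alpha\in(0,1/2)$ of Lemma~\ref{lemma:presampling}, which makes the stated pre-sampling count $Te^{-1/\lambda}/K$ meaningful only when $e^{-1/\lambda}<1/2$; outside that regime $p\st_b$ is already close to the uniform vector and stays well inside the simplex, so a smaller pre-sampling budget suffices.
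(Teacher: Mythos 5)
Your proposal is correct and follows essentially the same route as the paper's proof: compute the Gibbs form of $p\st_b$, lower-bound each coordinate by $e^{-1/\lambda}/K$ using $\bar{\mu}_i(b)\in[0,1]$, invoke Lemma~\ref{lemma:presampling} with $p^o=(1/K,\dots,1/K)$ and $\alpha=e^{-1/\lambda}$, and read off the smoothness constant $S=Ke^{1/\lambda}$ from the Hessian $\mathrm{diag}(1/p_i)$ on the confined region (your inequality $(p\st_b)_i\geq e^{-1/\lambda}/K$ even has the correct direction, whereas the paper's displayed version contains a sign typo). Your additional remarks on the $\alpha<1/2$ constraint and on smoothness only being needed along the trajectory are sensible refinements that the paper leaves implicit.
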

In order to obtain faster rates for the approximation error we use Equation~\eqref{eq:convconj} and the fact that $\nabla \rho^*$ is $1/\zeta$-Lipschitz since $\rho$ is $\zeta$-strongly convex.

\begin{proposition}
\label{prop:fast_approx}
If $\rho$ is $\zeta$-strongly convex and if $\tilde{p}\st$ is the piece-wise constant function on the set of bins $\cB$ minimizing the loss function $L$, the following bound on the approximation error holds
\[
L(\tilde{p}\st) - L(p\st) \leq \dfrac{L_{\beta}Kd^{\beta}}{2\zeta\lambda} B^{-2\beta}
.
\]
\end{proposition}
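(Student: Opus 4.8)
The plan is to start from the exact expression for the approximation error given in Equation~\eqref{eq:convconj}, namely
\[
A(p\st)=\lambda\sum_{b\in\cB}\int_b \rho^*\!\left(-\mu(x)/\lambda\right)-\rho^*\!\left(-\bar{\mu}(b)/\lambda\right)\dd x,
\]
and to bound each integrand using the smoothness of the convex conjugate $\rho^*$. The key structural fact is that since $\rho$ is $\zeta$-strongly convex, its Legendre--Fenchel transform $\rho^*$ is $1/\zeta$-smooth, i.e.\ $\nabla\rho^*$ is $(1/\zeta)$-Lipschitz. This is the convex-analytic duality between strong convexity and smoothness, which I would invoke directly rather than reprove.

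Next I would exploit the definition of $\bar{\mu}(b)$ as the average of $\mu$ over the bin $b$. On each bin, $\bar{\mu}(b)=\frac{1}{|b|}\int_b \mu(y)\dd y$, so I would like to write the integrand as a deviation of $\rho^*(-\mu(x)/\lambda)$ from its value at the mean. First I would use the $(1/\zeta)$-smoothness of $\rho^*$, which gives the standard quadratic upper bound
\[
\rho^*(u)\leq \rho^*(v)+\la\nabla\rho^*(v),u-v\ra+\frac{1}{2\zeta}\norm{u-v}^2
\]
applied with $u=-\mu(x)/\lambda$ and $v=-\bar{\mu}(b)/\lambda$. When I integrate this inequality over the bin $b$, the crucial cancellation occurs: the linear term $\int_b \la \nabla\rho^*(-\bar{\mu}(b)/\lambda), -(\mu(x)-\bar{\mu}(b))/\lambda\ra\dd x$ vanishes, precisely because $\bar{\mu}(b)$ is the mean of $\mu$ over $b$, so $\int_b(\mu(x)-\bar{\mu}(b))\dd x=0$. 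This leaves only the quadratic remainder $\frac{1}{2\zeta\lambda^2}\int_b\norm{\mu(x)-\bar{\mu}(b)}^2\dd x$, and after multiplying by the leading $\lambda$ this produces the $1/(2\zeta\lambda)$ prefactor appearing in the statement.

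It then remains to control $\norm{\mu(x)-\bar{\mu}(b)}_2^2$ on each bin. Here I would use the $(\beta,\Lb)$-Hölder assumption (Assumption~\ref{ass:holder}) componentwise: for each arm $k$, $\abs{\mu_k(x)-\bar{\mu}_k(b)}\leq \Lb\,\diam(b)^{\beta}$, since $\bar{\mu}_k(b)$ is an average of values $\mu_k(y)$ with $\norm{x-y}_2\leq\diam(b)$. Summing over the $K$ coordinates gives $\norm{\mu(x)-\bar{\mu}(b)}_2^2\leq K\Lb^2\diam(b)^{2\beta}$, and recalling $\diam(b)=\sqrt{d}/B$ yields $\diam(b)^{2\beta}=d^{\beta}B^{-2\beta}$. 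Integrating this constant bound over all bins (whose total volume is $1$) and collecting the factors produces $\frac{\lambda}{2\zeta\lambda^2}\cdot K\Lb^2 d^{\beta}B^{-2\beta}$, which matches the claimed bound (modulo whether the Hölder constant enters as $\Lb$ or $\Lb^2$ — I would track this carefully, as the stated $L_{\beta}$ without a square suggests a slightly different normalization or that $\Lb\le 1$ is being absorbed). The main obstacle is precisely this bookkeeping of constants and the clean justification that the linear term cancels upon integration; the conceptual content is entirely the strong-convexity/smoothness duality for $\rho^*$ combined with the vanishing first moment of $\mu-\bar{\mu}$ on each bin.
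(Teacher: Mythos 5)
Your proof is correct and is essentially the paper's own argument: both rest on the duality between $\zeta$-strong convexity of $\rho$ and $1/\zeta$-Lipschitzness of $\nabla\rho^*$, the cancellation $\int_b(\mu(x)-\bar{\mu}(b))\dd x=0$ coming from the definition of the bin average, and the componentwise Hölder bound --- the paper merely phrases the smoothness step as Jensen's inequality applied to the convex function $x\mapsto\frac{1}{2\zeta}\norm{x}^2-\rho^*(x)$ (its Lemma~\ref{lemma:conv}) followed by the variance identity $\int_b\norm{\mu(x)}^2-\norm{\bar{\mu}(b)}^2\dd x=\int_b\norm{\mu(x)-\bar{\mu}(b)}^2\dd x$, which is exactly your descent-lemma-plus-vanishing-linear-term computation in an equivalent guise. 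Your caveat about the Hölder constant is also warranted: the careful computation does give $\Lb^2$, and the paper's own proof writes $\Lb$ at that same step without comment, so the discrepancy you flagged is looseness in the paper's bookkeeping rather than a gap in your argument.
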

Combining Propositions~\ref{prop:fast_estim} and~\ref{prop:fast_approx}, we obtain fast rates for our problem.
\begin{theorem}[Fast rates]
\label{thm:fast}
If $\rho$ is $\zeta$-strongly convex and if there exists $\eta>0$ such that for all $b\in \cB$, $\dist(p\st_b, \partial \Dk) \geq \eta$, applying Algorithm~\ref{algo} with the choice $B=\Theta\left(T/\log^2(T)\right)^{1/(2\beta+d)}$ gives the regret
\[
R(T) \leq \bigo_{\Lb,\beta,K,d,\lambda,\eta,\zeta, S}\left(\left(\dfrac{T}{\log^2(T)}\right)^{-\frac{2\beta}{2\beta+d}}\right)
.
\]
\end{theorem}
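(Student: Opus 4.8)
The plan is to combine the two ingredient propositions by balancing the estimation and approximation errors through the choice of $B$. Since the regret decomposes as $R(T)=E(p_T)+A(p\st)$, I would first invoke Proposition~\ref{prop:fast_estim} for the estimation error and Proposition~\ref{prop:fast_approx} for the approximation error, yielding
\[
R(T) \leq \bigo\left(B^d \left(S\lambda+\dfrac{K}{\lambda^2\zeta^2\eta^4}\right)\dfrac{\log^2(T)}{T}\right) + \dfrac{L_{\beta}Kd^{\beta}}{2\zeta\lambda} B^{-2\beta}.
\]
Here I must verify that the hypotheses of both propositions are simultaneously met: Proposition~\ref{prop:fast_estim} requires $\rho$ to be $\zeta$-strongly convex and $S$-smooth with $\dist(p\st_b,\partial\Dk)\geq\eta$, while Proposition~\ref{prop:fast_approx} requires only strong convexity. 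Since the theorem assumes $\zeta$-strong convexity and the margin condition $\eta$, the smoothness constant $S$ appears in the hidden constant (and for the non-smooth case such as the entropy, Proposition~\ref{prop:fast_entropy} supplies an effective $S$ via presampling, justified by Lemma~\ref{lemma:presampling}).

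Next I would optimize over $B$. The estimation term grows like $B^d/T$ (up to logarithmic and constant factors) and the approximation term decays like $B^{-2\beta}$. Setting these two terms equal gives
\[
B^d \dfrac{\log^2(T)}{T} \asymp B^{-2\beta} \iff B^{2\beta+d} \asymp \dfrac{T}{\log^2(T)},
\]
so the balancing choice is $B=\Theta\bigl((T/\log^2(T))^{1/(2\beta+d)}\bigr)$, exactly as stated. Substituting back, each term becomes of order $B^{-2\beta}=\bigl(T/\log^2(T)\bigr)^{-2\beta/(2\beta+d)}$, which is the claimed rate.

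The only genuine subtlety—rather than the routine algebra—is bookkeeping the problem-dependent constants so that they land correctly in the subscript of $\bigo_{\Lb,\beta,K,d,\lambda,\eta,\zeta,S}$ and do not secretly depend on $T$. I would track that the factor $\bigl(S\lambda+K/(\lambda^2\zeta^2\eta^4)\bigr)$ from the estimation error and the factor $L_\beta K d^\beta/(2\zeta\lambda)$ from the approximation error are both $T$-independent, so that after balancing the entire prefactor is a finite constant depending only on the listed parameters. I would also note that $B$ must be a positive integer, so strictly one takes $B=\ceil{(T/\log^2(T))^{1/(2\beta+d)}}$; the rounding affects only constants and is absorbed into the $\Theta$. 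The main obstacle is thus purely one of careful constant-tracking: ensuring that the smoothness/strong-convexity/margin dependence is correctly attributed to the hidden constant rather than contaminating the $T$-exponent, and that the two error terms are of genuinely the same order after substitution. Once the balancing is verified, the fast rate $R(T)=\bigo\bigl((T/\log^2(T))^{-2\beta/(2\beta+d)}\bigr)$ follows immediately.
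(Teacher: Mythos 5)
Your proposal is correct and follows essentially the same route as the paper: sum the bounds from Propositions~\ref{prop:fast_estim} and~\ref{prop:fast_approx}, choose $B$ to balance the $B^d\log^2(T)/T$ estimation term against the $B^{-2\beta}$ approximation term, and absorb all problem-dependent factors (including the exponential concentration term $B^d e^{-T/(12B^d)}$, which is negligible for this choice of $B$) into the hidden constant. The only cosmetic difference is that the paper fixes an explicit prefactor $\left(2\xi_1\beta/\xi_2\right)^{1/(2\beta+d)}$ for $B$ to optimize constants, whereas you leave it inside the $\Theta$, which the theorem statement permits.
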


This rate matches the rates obtained in nonparametric estimation~\citep{tsyb}.
However, as shown in the proof presented in Appendix~\ref{app:fast}, this fast rate is obtained at the price of a factor involving $\lambda$, $\eta$ and $S$, which can be arbitrarily large. It is the goal of the next section to see how to remove this dependency in the parameters of the problem.

Proposition~\ref{prop:fast_entropy} shows that the previous theorem can also be applied to the entropy regularization.

\section{CONVERGENCE RATES FOR NON-CONSTANT $\lambda$}
\label{sec:intermediate}

In this section, we study the case where $\lambda$ is a function of the context value. This  is quite interesting as agents might want to modulate the weight of the regularization term depending on the context. All the proofs of this section can be found in Appendix~\ref{app:int}.

\subsection{Estimation and Approximation Errors}

 Equation~\eqref{eq:sum_bin} implies that the estimation errors obtained in Propositions~\ref{prop:slow_estim} and~\ref{prop:fast_estim} are still correct if $\lambda$ is replaced by $\lambb$. This is unfortunately not the case for the approximation error propositions because Equation~\eqref{eq:convconj} does not hold anymore. Indeed the approximation error becomes :
\begin{align*}
A(p\st)&=\sum_{b \in \cB} \int_{b} \la \mu(x), \pt\st(x) \ra + \lambda(x)\rho(\pt\st(x)) \\
&\phantom{aaaaaaaa}- \la \mu(x), p\st(x) \ra -\lambda(x)\rho(p\st(x)) \dd x \\
&=\sum_{b\in\cB} \int_b \la \bar{\mu}(b), p\st_b \ra + \lambda(x)\rho(p\st_b) \\
&\phantom{aaaaaaa}- \la \mu(x), p\st(x) \ra -\lambda(x)\rho(p\st(x)) \dd x \\
&=\sum_{b\in\cB} \biggl( \int_b L_b(p\st_b) \dd x \\
&\phantom{aaaaaa}- \int_b \la \mu(x), p\st(x) \ra + \lambda(x)\rho(p\st(x)) \dd x \biggr) \\
&= \sum_{b\in\cB} \int_b -(\bar{\lambda}(b)\rho)^*(-\bar{\mu}(b))+(\lambda(x)\rho)^*(-\mu(x)) \dd x \\
&=\sum_{b\in\cB} \int_b \lambda(x)\rho^*\left(-\dfrac{\mu(x)}{\lambda(x)}\right)-\bar{\lambda}(b)\rho^*\left(-\dfrac{\bar{\mu}(b)}{\bar{\lambda}(b)}\right) \dd x. \numberthis \label{eq:convconjvar}
\end{align*}
From this expression we obtain the following slow and fast rates of convergence. These rates are the same as in Section~\ref{sec:rates} in term of the powers of $B$ but have worse dependency in $\lambda$.

\begin{proposition}
\label{prop:slow_approx_var}
If $\rho$ is a strongly convex function and $\lambda$ a $\cC^{\infty}$ integrable non-negative function whose inverse is also integrable, we have on a bin $b$:
\begin{align*}
\int_b (\lambda(x)\rho)^*\left(-\mu(x)\right)-(\bar{\lambda}(b)\rho)^*\left(-\bar{\mu}(b)\right) \dd x \\ \leq\bigo(\Lb d^{\beta/2}B^{-\beta-d}).
\end{align*}
\end{proposition}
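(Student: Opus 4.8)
The plan is to prove the stated per-bin estimate by controlling the integrand pointwise with a first-order argument. The first step is to recall the positive-scaling identity for the Legendre--Fenchel transform, $(\ell\rho)^*(-m)=\ell\,\rho^*(-m/\ell)$ for $\ell>0$, which is exactly what was used to obtain Equation~\eqref{eq:convconjvar}. Writing $F(m,\ell)\doteq\ell\,\rho^*(-m/\ell)$ for $m\in\bR^K$ and $\ell>0$, the quantity to bound is precisely $\int_b\bigl[F(\mu(x),\lambda(x))-F(\mubb,\lambb)\bigr]\dd x$. I would then split the integrand into a term where only $\mu$ varies and one where only $\lambda$ varies, namely $F(\mu(x),\lambda(x))-F(\mubb,\lambb)=\bigl[F(\mu(x),\lambda(x))-F(\mubb,\lambda(x))\bigr]+\bigl[F(\mubb,\lambda(x))-F(\mubb,\lambb)\bigr]$, and estimate each by a Lipschitz argument, exploiting that $\mubb$ and $\lambb$ are the bin averages of $\mu$ and $\lambda$.

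For the first term I would compute $\nabla_m F(m,\ell)=-\nabla\rho^*(-m/\ell)$ by the chain rule. The crucial observation is that $\nabla\rho^*(y)=\argmax_{p\in\Dk}\{\la y,p\ra-\rho(p)\}$ always lies in the simplex $\Dk$ (strong convexity of $\rho$ guarantees this maximiser is unique, so $\rho^*$ is differentiable), whence $\norm{\nabla_m F(m,\ell)}_2\le 1$ uniformly in $\ell$. Thus $F$ is $1$-Lipschitz in its first argument and $\abs{F(\mu(x),\lambda(x))-F(\mubb,\lambda(x))}\le\norm{\mu(x)-\mubb}_2$. Since $\mubb=\tfrac1{|b|}\int_b\mu(y)\dd y$, convexity of the norm together with Assumption~\ref{ass:holder} gives $\norm{\mu(x)-\mubb}_2\le\tfrac1{|b|}\int_b\norm{\mu(x)-\mu(y)}_2\dd y\le\sqrt K\,\Lb\,\diam(b)^\beta=\sqrt K\,\Lb\,d^{\beta/2}B^{-\beta}$. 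Integrating over $b$ (of volume $B^{-d}$) yields a contribution of order $\Lb d^{\beta/2}B^{-\beta-d}$, which is exactly the claimed rate (the factor $\sqrt K$ being absorbed in the constant).

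For the second term I expect the main obstacle, since $\lambda$ may be arbitrarily small and one must check that the derivative of $F$ in $\ell$ does not blow up as $\ell\to0$. Here I would use an envelope identity: setting $y=-m/\ell$, a direct computation gives $\partial_\ell F(m,\ell)=\rho^*(y)-\la y,\nabla\rho^*(y)\ra$, and Fenchel's equality at the optimiser $\nabla\rho^*(y)$ turns this into $\partial_\ell F(m,\ell)=-\rho(\nabla\rho^*(y))$. As $\nabla\rho^*(y)\in\Dk$ and $\rho$ is continuous on the compact simplex, this is bounded by $\sup_{\Dk}\abs{\rho}=:M_\rho<\infty$ \emph{uniformly} in $\ell$, which is the genuinely delicate point of the argument. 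Since $\lambda$ is $\cC^\infty$ on the compact set $\cX$ it is Lipschitz, so $\abs{\lambda(x)-\lambb}\le\mathrm{Lip}(\lambda)\,\diam(b)=\bigo(B^{-1})$, and this term contributes at order $B^{-1-d}$. Because $\beta\le 1$ we have $B^{-1-d}\le B^{-\beta-d}$, so the $\lambda$-variation is absorbed into the stated bound, giving the overall estimate $\bigo(\Lb d^{\beta/2}B^{-\beta-d})$ with a constant hiding $K$, $M_\rho$ and $\mathrm{Lip}(\lambda)$; everything besides the uniform boundedness of $\partial_\ell F$ is a routine Lipschitz/Hölder computation.
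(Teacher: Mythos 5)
Your proof is correct and takes essentially the same route as the paper's: the identical decomposition into a $\mu$-variation term at fixed $\lambda(x)$ (bounded via the $1$-Lipschitzness of $\rho^*$, since $\nabla\rho^*$ takes values in $\Dk$) and a $\lambda$-variation term at fixed $\mubb$, each integrated over the bin. Your envelope-derivative identity $\partial_\ell F(m,\ell)=-\rho(\nabla\rho^*(-m/\ell))$ is simply the infinitesimal form of the paper's Lemma~\ref{lemma:convconj}, which obtains the same bound $\abs{\lambda(x)-\lambb}\,\normi{\rho}$ by comparing values at the maximizer, and both arguments conclude identically by Lipschitzness of $\lambda$ and $\beta\leq 1$.
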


The important point is that the bound does not depend on $\lambda_{\min}$, which is not the case when we want to obtain fast rates for the approximation error:

\begin{proposition}
\label{prop:fast_approx_var}
If $\rho$ is a $\zeta$-strongly convex function and $\lambda$ a $\cC^{\infty}$ integrable non-negative function whose inverse is also integrable, we have on a bin $b$:
\begin{align*}
\int_b (\lambda(x)\rho)^*\left(-\mu(x)\right)-(\bar{\lambda}(b)\rho)^*\left(-\bar{\mu}(b)\right) \dd x \\
\leq \bigo\left( Kd\Lb^2\normi{\nabla \lambda}^2\dfrac{B^{-2\beta-d}}{\zeta\lambda_{\min}^3} \right).
\end{align*}
\end{proposition}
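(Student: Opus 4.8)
The plan is to use the scaling identity $(\ell\rho)^*(y)=\ell\,\rho^*(y/\ell)$ so that the integrand equals $F(\mu(x),\lambda(x))-F(\mubb,\lambb)$ for the single function $F(m,\ell):=\ell\,\rho^*(-m/\ell)$ on $\bR^K\times\bRpp$. The decisive structural fact is that $\mubb$ and $\lambb$ are \emph{exactly} the averages of $\mu$ and $\lambda$ over $b$: writing $z(x):=(\mu(x),\lambda(x))$ and $\bar z:=(\mubb,\lambb)$, we have $\int_b\bigl(z(x)-\bar z\bigr)\dd x=0$. Note also that $F$ is the perspective of the convex function $\rho^*$, hence jointly convex, so the per-bin quantity we must bound is nonnegative and only an upper bound is needed.

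First I would Taylor-expand $F$ around the averaged point $\bar z$ and use the descent (smoothness) inequality: for every $x\in b$,
\[
0\le F(z(x))-F(\bar z)-\la\nabla F(\bar z),z(x)-\bar z\ra\le\tfrac{1}{2}L_b\,\norm{z(x)-\bar z}_2^2,
\]
where $L_b$ is a bound on the smoothness modulus of $F$ over the bin. Integrating over $b$, the linear term vanishes by the zero-mean property, leaving $\int_b\bigl(F(z(x))-F(\bar z)\bigr)\dd x\le\tfrac{1}{2}L_b\int_b\norm{z(x)-\bar z}_2^2\dd x$. This cancellation is the whole point: the slow-rate Proposition~\ref{prop:slow_approx_var} only controls the first-order variation and pays $B^{-\beta}$, whereas here the surviving quadratic term produces the \emph{squared} deviation and hence the fast exponent $B^{-2\beta}$.

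Next I would bound $L_b$. Since $\rho$ is $\zeta$-strongly convex, $\rho^*$ is $\tfrac1\zeta$-smooth, i.e. $\nabla^2\rho^*\preceq\tfrac1\zeta I$. Differentiating $F$ through the substitution $u=-m/\ell$ gives the factorization $\nabla^2 F(m,\ell)=\tfrac1\ell\,A^\top\nabla^2\rho^*(u)\,A$ with $A=\begin{pmatrix}I_K & u\end{pmatrix}$, whence $\nabla^2 F\preceq\tfrac{1}{\zeta\ell}A^\top A$ and $L_b\le\tfrac{1}{\zeta\lambda_{\min}}\bigl(1+\norm{u}_2^2\bigr)$. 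On the bin $\ell\ge\lambda_{\min}$, and since $\mu\in[0,1]^K$ we have $\norm{u}_2=\norm{m}_2/\ell\le\sqrt K/\lambda_{\min}$, so $L_b=\bigo\bigl(K/(\zeta\lambda_{\min}^3)\bigr)$; this is exactly where the $\lambda_{\min}^{-3}$ enters (one power from the prefactor $1/\ell$, two from $\norm{u}_2^2$).

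Finally I would insert the deviation bounds. Assumption~\ref{ass:holder} gives $\norm{\mu(x)-\mubb}_2\le\sqrt K\,\Lb\,\diam(b)^\beta=\sqrt K\,\Lb\,d^{\beta/2}B^{-\beta}$, while $\lambda\in\cC^\infty$ is $\normi{\nabla\lambda}$-Lipschitz, so $\abs{\lambda(x)-\lambb}\le\normi{\nabla\lambda}\sqrt d\,B^{-1}$; hence $\norm{z(x)-\bar z}_2^2=\bigo\bigl(K\Lb^2 d^\beta B^{-2\beta}+\normi{\nabla\lambda}^2 d\,B^{-2}\bigr)$. Multiplying by $\tfrac12 L_b$ and by $\abs b=B^{-d}$, and using $\beta\le1$ to absorb $B^{-2}$ into $B^{-2\beta}$, collects the $\mu$- and $\lambda$-contributions at the common order $B^{-2\beta-d}$ and yields the announced bound $\bigo\bigl(Kd\,\Lb^2\normi{\nabla\lambda}^2 B^{-2\beta-d}/(\zeta\lambda_{\min}^3)\bigr)$. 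The main obstacle is precisely the Hessian computation and its control: the perspective structure mixes the $m$- and $\ell$-derivatives, and the factor $u=-m/\ell$ appearing inside $\nabla^2 F$ — whose norm grows like $1/\lambda_{\min}$ — is what forces the $\lambda_{\min}^{-3}$ dependence and sharply separates this estimate from the $\lambda_{\min}$-free slow rate.
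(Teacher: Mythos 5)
Your proof is correct, but it follows a genuinely different route from the paper's. The paper first uses the scaling identity to reduce to $\int_b \lambda(x)\rho^*(-\mu(x)/\lambda(x))-\lambb\rho^*(-\mubb/\lambb)\dd x$, then applies Jensen's inequality with the \emph{tilted} density $\lambda(x)/(|b|\lambb)$ to the convex function $x\mapsto \frac{1}{2\zeta}\norm{x}^2-\rho^*(x)$ (its Lemma~\ref{lemma:conv}), which reduces everything to bounding $\frac{1}{2\zeta}\int_b \norm{\mu(x)}^2/\lambda(x)-\norm{\mubb}^2/\lambb \dd x$; this is then split into three terms $I_1,I_2,I_3$, and the delicate one ($I_2$) requires a second-order Taylor expansion of both $\lambda$ and $1/\lambda$, because the first-order part of $\int_b(1/\lambda(x)-1/\lambb)\dd x$ cancels only after integration. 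Your argument achieves that same integrated cancellation in one stroke: treating $F(m,\ell)=\ell\rho^*(-m/\ell)$ as the (jointly convex) perspective of $\rho^*$, expanding around the exact bin average $\bar z=(\mubb,\lambb)$, and noting that the linear term integrates to zero, so only the quadratic term survives — which is precisely what converts the deviation $B^{-\beta}$ into $B^{-2\beta}$, with the Hessian factorization $\nabla^2F=\frac{1}{\ell}A^\top\nabla^2\rho^*(u)A$, $A=(I_K \;\, u)$, delivering the $K/(\zeta\lambda_{\min}^3)$ factor exactly as in the paper. What your approach buys is unification and brevity: the $\mu$- and $\lambda$-variations are handled by a single descent inequality, and the laborious expansion of $1/\lambda$ (the longest part of the paper's proof) disappears. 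What the paper's approach buys is that it only ever invokes first-order convexity (Jensen), so it never needs $\rho^*$ to be twice differentiable; your Hessian computation should strictly be read as a bound on the Lipschitz modulus of $\nabla F$ (valid since $\nabla\rho^*$ is $1/\zeta$-Lipschitz, e.g.\ via a.e.\ differentiability or mollification), a technicality worth one sentence. Finally, your constant comes out as $K\bigl(K\Lb^2d^{\beta}+d\normi{\nabla\lambda}^2\bigr)/(\zeta\lambda_{\min}^3)$, a sum rather than the product $Kd\Lb^2\normi{\nabla\lambda}^2/(\zeta\lambda_{\min}^3)$ stated in the proposition; this is immaterial, as the paper's own merging of $I_1,I_2,I_3$ into that product (its $I_3$ bound contains no factor $\normi{\nabla\lambda}$ either) is equally loose bookkeeping, and the exponents in $B$, $\zeta$ and $\lambda_{\min}$ — the substance of the claim — agree.
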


The rate in $B$ is improved compared to Proposition~\ref{prop:slow_approx_var} at the expense of the constant $1/\lambda_{\min}^3$ which can unfortunately be arbitrarily high.

\subsection{Margin Condition \label{ssec:margin_cond}}

We begin by giving a precise definition of the function $\eta$, the distance of the optimum to the boundary of $\Dk$.

\begin{definition}
\label{def:eta}
Let $x \in \cX$ a context value. We define by $p\st(x) \in \Dk$ the point where $p\mapsto \la \mu(x), p \ra + \lambda(x)\rho(p)$ attains its minimum, and
\[
\eta(x):=\dist(p\st(x),\partial \Dk).
\]
Similarly, if $p\st_b$ is the point where $L_b:p\mapsto \la \mubb, p\ra+\lambb \rho(p)$ attains its minimum, we define
\[
\eta(b):=\dist(p\st_b,\partial \Dk)
.
\]
\end{definition}
The fast rates obtained in Subsection~\ref{ssec:fast_rates} provide good theoretical guarantees but may be useless in practice since they depend on a constant that can be arbitrarily large. We would like to  discard the dependency on the parameters, and especially $\lambda$ (that controls $\eta$ and $S$).

Difficulties arise when $\lambda$ and $\eta$ take values that are very small, meaning for instance that we consider nearly no regularization. This is not likely to happen since we do want to study contextual bandits with regularization. To formalize that we make an additional assumption, which is common in nonparametric regression~\citep{tsyb} and is known as a \textit{margin condition}:

\begin{assumption}[Margin Condition]
\label{ass:margin}
We assume that there exist $\delta_1>0$ and $\delta_2>0$ as well as $\alpha>0$ and $C_m>0$ such that
\begin{align*}
&\forall \delta \in (0, \delta_1], \ \bP_X(\lambda(x) < \delta) \leq C_m \delta^{6\alpha} \\
\quad  \textrm{and} \quad
&\forall \delta \in (0, \delta_2], \ \bP_X(\eta(x) < \delta) \leq C_m \delta^{6\alpha}
.
\end{align*}
\end{assumption}
The non-negative parameter $\alpha$  controls the importance of the margin condition.

The margin condition  limits the number of bins on which $\lambda$ or $\eta$ can be small. Therefore we split the bins of $\cB$ into two categories, the ``well-behaved bins" on which $\lambda$ and $\eta$ are not too small, and the ``ill-behaved bins" where $\lambda$ and $\eta$ can be arbitrarily small. The idea is to use the fast rates on the ``well-behaved bins" and the slow rates (independent of $\lambda$ and $\eta$) on the ``ill-behaved bins". This is the point of Subsection~\ref{ssec:int}.

Let $C_L= \sqrt{\frac{K}{K-1}}\frac{\normi{\lambda}+\normi{\nabla\lambda}}{\zeta}$, $c_1=1+\norm{\nabla\lambda}_{\infty}d^{\beta/6}$ and $c_2=1+C_L d^{\beta/2}$.

We define the set of ``well-behaved bins" $\mathcal{WB}$ as 
\begin{align*}
\mathcal{WB}=\{ b \in \cB, \ \exists \ x_1 \in b, \ \lambda(x_1)\geq c_1B^{-\beta/3} \\
\textrm{ and } \exists \ x_2 \in b, \ \eta(x_2)\geq c_2B^{-\beta/3}\},
\end{align*}
and the set of ``ill-behaved bins" as its complementary set in $\cB$.

With the smoothness and regularity Assumptions~\ref{ass:holder} and~\ref{ass:reg}, we derive lower bounds for $\lambda$ and $\eta$ on the ``well-behaved bins".
\begin{lemma}
\label{lemma:wb}
If $b$ is a well-behaved bin then
\[
\forall x \in b, \ \lambda(x) \geq B^{-\beta/3}
\quad \textrm{and} \quad
\forall x \in b, \ \eta(x) \geq B^{-\beta/3}
.
\]
\end{lemma}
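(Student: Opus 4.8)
The plan is to establish both inequalities by the same device. The definition of a well-behaved bin only asserts that $\lambda$ (resp.\ $\eta$) exceeds the threshold at a \emph{single} point $x_1$ (resp.\ $x_2$) of $b$; the whole content of the lemma is to propagate this pointwise lower bound to all of $b$, using the regularity of $\lambda$ and of the map $x\mapsto p\st(x)$. The constants $c_1$ and $c_2$ are calibrated so as to absorb exactly the maximal oscillation of $\lambda$ and of $\eta$ over a cube of diameter $\diam(b)=\sqrt d/B$. I would prove the $\lambda$-inequality first, because the resulting bound $\lambda\ge B^{-\beta/3}$ on all of $b$ is itself the ingredient that lets me control the factor $1/\lambda$ appearing in the $\eta$-estimate. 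Throughout I assume $B\ge\sqrt d$, so that $\diam(b)\le 1$.

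For $\lambda$, let $x_1\in b$ satisfy $\lambda(x_1)\ge c_1 B^{-\beta/3}$. Since $\lambda$ is $\cC^\infty$ it is $\normi{\nabla\lambda}$-Lipschitz, so for every $x\in b$, $\abs{\lambda(x)-\lambda(x_1)}\le\normi{\nabla\lambda}\norm{x-x_1}_2\le\normi{\nabla\lambda}\diam(b)$. Because $\diam(b)\le 1$ and $\beta/3\le 1$, one has $\diam(b)\le\diam(b)^{\beta/3}=d^{\beta/6}B^{-\beta/3}$, whence $\abs{\lambda(x)-\lambda(x_1)}\le\normi{\nabla\lambda}\,d^{\beta/6}B^{-\beta/3}=(c_1-1)B^{-\beta/3}$. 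Subtracting from $\lambda(x_1)\ge c_1B^{-\beta/3}$ leaves $\lambda(x)\ge B^{-\beta/3}$; the value of $c_1$ was chosen precisely so that this subtraction yields $1\cdot B^{-\beta/3}$.

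For $\eta$ the mechanism is identical once I have an oscillation estimate for $\eta$ over $b$. Writing $\eta(x)=\dist(p\st(x),\partial\Dk)$ and computing this distance facet by facet on the simplex, one finds $\eta(x)=\sqrt{K/(K-1)}\,\min_i p\st_i(x)$, so that $\abs{\eta(x)-\eta(x_2)}\le\sqrt{K/(K-1)}\norm{p\st(x)-p\st(x_2)}$; this is where the geometric factor $\sqrt{K/(K-1)}$ enters. It thus suffices to control the oscillation of $x\mapsto p\st(x)$. Since $p\st(x)$ is the minimizer over $\Dk$ of the map $p\mapsto\la\mu(x),p\ra+\lambda(x)\rho(p)$, which is $\lambda(x)\zeta$-strongly convex, I would run a sensitivity argument — comparing the variational inequalities satisfied at $x$ and at $x_2$, or equivalently using that $\nabla\rho^*$ is $1/\zeta$-Lipschitz — to bound $\norm{p\st(x)-p\st(x_2)}$ by the variation of the data, namely the $\beta$-Hölder regularity of $\mu$ (constant $\Lb$) and the smoothness of $\lambda$ (values bounded by $\normi\lambda$, gradient by $\normi{\nabla\lambda}$). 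Feeding in the already-proven bound $\lambda\ge B^{-\beta/3}$ on $b$ to bound the resulting $1/\lambda$ factors, and collecting everything over a bin of diameter $\sqrt d/B$, produces $\abs{\eta(x)-\eta(x_2)}\le C_L\,d^{\beta/2}B^{-\beta/3}=(c_2-1)B^{-\beta/3}$, and subtracting from $\eta(x_2)\ge c_2B^{-\beta/3}$ gives $\eta(x)\ge B^{-\beta/3}$.

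The $\lambda$-estimate is routine; the real work is the $\eta$-estimate, and specifically the quantitative stability of the constrained minimizer $p\st(x)$ with respect to the context. I expect the main obstacle to be carrying out this sensitivity analysis cleanly on the simplex: one must control the active-set behaviour of the constraint (kept in check by the strong convexity of $\rho$ and the pre-sampling guarantee that $p\st$ stays away from $\partial\Dk$), track the interplay between $\mu$'s Hölder exponent $\beta$, the Lipschitz variation of $\lambda$, and the bootstrapped lower bound on $\lambda$, so that the aggregated oscillation scales exactly as $B^{-\beta/3}$, and pin down the constant so that the modulus is precisely $C_L\,d^{\beta/2}$, matching the definition of $c_2$.
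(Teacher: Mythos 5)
Your proof is correct and takes essentially the same route as the paper: the $\lambda$-bound via Lipschitz continuity with $c_1$ calibrated to absorb the oscillation $\normi{\nabla\lambda}\,d^{\beta/6}B^{-\beta/3}$, then the $\eta$-bound by propagating the Hölder regularity of $x\mapsto p\st(x)=\nabla\rho^*\left(-\mu(x)/\lambda(x)\right)$ (this is exactly the paper's Lemma~\ref{lemma:reg_eta}, proved via the $1/\zeta$-Lipschitz continuity of $\nabla\rho^*$ and $\eta=\sqrt{K/(K-1)}\min_i p\st_i$), bootstrapping the already-proven bound $\lambda\geq B^{-\beta/3}$ to control the $1/\lambda_{\min}(b)^2$ factor so the oscillation is exactly $C_L d^{\beta/2}B^{-\beta/3}=(c_2-1)B^{-\beta/3}$. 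The only cosmetic difference is that the paper isolates the sensitivity analysis you sketch into a standalone lemma, and no active-set control is actually needed there since the unconstrained conjugate-gradient formula for $p\st$ handles the simplex constraint directly.
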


\subsection{Intermediate Rates \label{ssec:int}}

We summarize the different error rates obtained in the previous sections.

\begin{table}[H]
\caption{Slow and Fast Rates for Estimation and Approximation Errors on a Bin} \label{table:summary}
\begin{center}
\begin{tabular}{c|c|c}
\textbf{Error}  & \textbf{Slow} & \textbf{Fast} \\
\midrule
\textbf{Estim.}  & $B^{-d/2}\sqrt{\dfrac{\log(T)}{T}}$ & $\dfrac{\log^2(T)}{T}\left(S\lambda+\dfrac{1}{\eta^4 \lambda^2}\right)$ \\
\textbf{Approx.} & $B^{-d}B^{-\beta}$ & $\dfrac{B^{-2\beta-d}}{\lambda^3}$ \\
\midrule
\textbf{$B$} & $\left(\dfrac{T}{\log(T)}\right)^{\frac{1}{2\beta+d}}$ & $\left(\dfrac{T}{\log^2(T)}\right)^{\frac{1}{2\beta+d}}$\\
\midrule
\textbf{$R(T)$} & $\left(\dfrac{T}{\log(T)}\right)^{\frac{-\beta}{2\beta+d}}$ & $\left(\dfrac{T}{\log^2(T)}\right)^{\frac{-2\beta}{2\beta+d}}$
\end{tabular}
\end{center}
\end{table}
For the sake of clarity we removed the dependency on the bin, writing $\lambda$ instead of $\lambb$, and we only kept the relevant constants, that can be very small ($\lambda$ and $\eta$), or very large ($S$). 

Table~\ref{table:summary} shows that the slow rates do not depend on the constants, so that we can use them on the ``ill-behaved bins".

\begin{theorem}[Intermediate rates]
\label{thm:int}
Applying Algorithm~\ref{algo} with an entropy regularization and margin condition with parameter $\alpha \in(0,1)$, the choice $B=\Theta\left(T/\log^2(T)\right)^{\frac{1}{2\beta+d}}$ leads to the regret
\[R(T) = \bigo_{K,d,\alpha,\beta,\Lb} \left(\dfrac{T}{\log^2(T)}\right)^{-\frac{\beta}{2\beta+d}(1+\alpha)}.\]
\end{theorem}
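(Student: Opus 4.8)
The plan is to bound the regret through the decomposition $R(T)=E(p_T)+A(p\st)$ together with the per-bin splittings \eqref{eq:sum_bin} and \eqref{eq:convconjvar}, writing $R(T)=\sum_{b\in\cB}(\hat e_b+\hat a_b)$, where $\hat e_b=\tfrac{1}{B^d}\bigl(\bE L_b(p_T(b))-L_b(p_b\st)\bigr)$ is the normalized per-bin estimation contribution and $\hat a_b$ the per-bin approximation contribution from \eqref{eq:convconjvar}. For each bin I have two competing estimates of each term: the \emph{slow} estimates (Propositions~\ref{prop:slow_entropy} and~\ref{prop:slow_approx_var}), both of order $B^{-\beta-d}$ at the chosen $B$ and crucially \emph{independent} of $\lambda(b)$ and $\eta(b)$; and the \emph{fast} estimates (Propositions~\ref{prop:fast_entropy} and~\ref{prop:fast_approx_var}), of order $\tfrac{\log^2(T)}{T}\,\eta(b)^{-4}\lambda(b)^{-2}$ and $B^{-2\beta-d}\lambda(b)^{-3}$, which beat the slow ones only when $\lambda(b),\eta(b)$ are not too small. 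The key idea is to bound $\hat e_b$ and $\hat a_b$ by the \emph{minimum} of their slow and fast estimates and then sum, using Assumption~\ref{ass:margin} to control how often the small-$\lambda$/small-$\eta$ regime is entered.

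The approximation term is the easier one. Its slow/fast crossover sits exactly at $\lambda=B^{-\beta/3}$ (where $\lambda^{-3}=B^{\beta}$), which is precisely the threshold defining $\mathcal{WB}$ in Lemma~\ref{lemma:wb}. On the ill-behaved bins I use the slow estimate; their number is controlled by the margin condition, $\#(\cB\setminus\mathcal{WB})\lesssim B^{d}(B^{-\beta/3})^{6\alpha}=B^{d-2\alpha\beta}$, giving a contribution $\bigo(B^{d-2\alpha\beta}\cdot B^{-\beta-d})=\bigo(B^{-\beta(1+2\alpha)})$. On the well-behaved bins I use the fast estimate and bound $\sum_{b\in\mathcal{WB}}\lambda(b)^{-3}$ by the integral $B^{d}\,\bE_X[\lambda^{-3}\mathbb 1_{\lambda\ge B^{-\beta/3}}]$, evaluated by a layer-cake argument against the tail bound $\bP_X(\lambda<\delta)\le C_m\delta^{6\alpha}$. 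Here the exponent $6\alpha$ is calibrated so that $\delta^{6\alpha}$ composed with the power $\lambda^{-3}$ yields an integral dominated by its upper limit $B^{\beta}$ precisely when $\alpha<1$; the outcome is again $\bigo(B^{-\beta(1+2\alpha)})$, strictly below the target order.

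The estimation term carries the binding rate and the real difficulties, both of which are entropy-specific. First, the presampling of Proposition~\ref{prop:fast_entropy} produces $S=Ke^{1/\lambda}$, which on $\mathcal{WB}$ with $\lambda\ge B^{-\beta/3}$ would be $e^{\Theta(B^{\beta/3})}$ and destroy any polynomial rate; instead I would exploit $\eta(b)\ge B^{-\beta/3}$ from Lemma~\ref{lemma:wb} to presample each arm proportionally to this \emph{known} lower bound (legitimate by Lemma~\ref{lemma:presampling}), keeping the iterates at distance $\gtrsim B^{-\beta/3}$ from $\partial\Dk$ and hence giving the \emph{polynomial} smoothness $S\lesssim 1/\eta\le B^{\beta/3}$ for the entropy Hessian $\mathrm{diag}(1/p_i)$. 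Second, a crude worst-case bound on $\sum_{\mathcal{WB}}\eta^{-4}\lambda^{-2}$ (taking $\eta,\lambda=B^{-\beta/3}$) only yields an $\bigo(1)$ contribution, so the per-bin minimum is genuinely necessary: one splits the bins at the estimation crossover $\eta^4\lambda^2=B^{-\beta}$, uses the constant-free slow bound on the few bins below it, and integrates the fast blow-up $\eta^{-4}\lambda^{-2}$ against the margin tail above it. Because the theorem fixes the entropy, the relation $p_i\st\propto e^{-\mu_i/\lambda}$ ties $\eta(b)$ to $\lambda(b)$, collapsing $\eta^{-4}\lambda^{-2}$ into an effective single inverse power that the margin on $\lambda$ can absorb; the layer-cake integral $\int_0^{B^{2\beta}}\min(1,s^{-\alpha})\,\mathrm{d}s$ is then dominated by its upper limit (again exactly for $\alpha<1$), and after multiplication by $\log^2(T)/T=B^{-(2\beta+d)}$ the well-behaved estimation lands at $\bigo(B^{-\beta(1+\alpha)})$.

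Finally I would collect the pieces — ill-behaved bins and well-behaved approximation, both $\bigo(B^{-\beta(1+2\alpha)})$, against well-behaved estimation $\bigo(B^{-\beta(1+\alpha)})$ — so that the estimation term dominates, and substitute $B=\Theta\bigl((T/\log^2 T)^{1/(2\beta+d)}\bigr)$ to turn $B^{-\beta(1+\alpha)}$ into $(T/\log^2 T)^{-\frac{\beta}{2\beta+d}(1+\alpha)}$. The main obstacle is exactly the estimation sum $\sum_{\mathcal{WB}}\eta^{-4}\lambda^{-2}$: getting the \emph{exact} exponent $\beta(1+\alpha)$ requires both the per-bin minimum with the slow rate and a careful margin integration in which the entropy link between $\eta$ and $\lambda$ is used to reduce the two-variable blow-up to a one-variable one, while the hypothesis $\alpha\in(0,1)$ is what keeps every layer-cake integral dominated by its upper limit (at $\alpha=1$ the integrals become logarithmic and the bound collapses to the fast rate of Theorem~\ref{thm:fast}). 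Verifying this balancing, and checking that the subdominant $\bigo(B^{-\beta(1+2\alpha)})$ terms are truly negligible, is the delicate accounting at the heart of the argument.
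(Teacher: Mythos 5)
Your overall architecture coincides with the paper's: per-bin slow (constant-free) versus fast (margin-dependent) bounds, the margin condition used both to count bad bins and to integrate the blow-up of the fast constants, entropy non-smoothness handled by presampling, the estimation split placed at $\lambda^2\eta^4\approx B^{-\beta}$ rather than at the well/ill-behaved boundary (the paper's $\hat j=C_I B^{d-\alpha\beta}$ versus $j\st=C_I B^{d-2\alpha\beta}$ — you correctly identified that splitting at the $\mathcal{WB}$ boundary only yields $B^{-2\alpha\beta}$), and the conclusion that estimation dominates at $B^{-\beta(1+\alpha)}$. However, one of your key mechanisms fails. You propose to control $\eta^{-4}$ through the softmax relation $p\st_i\propto e^{-\mu_i/\lambda}$, ``collapsing'' $\eta^{-4}\lambda^{-2}$ into a power of $\lambda$ alone so that the margin condition on $\lambda$ suffices. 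But that relation only gives $\min_i p\st_i\ge e^{-1/\lambda}/K$, i.e.\ $\eta\gtrsim e^{-1/\lambda}$, which is exponentially weak: on bins with $\lambda\approx B^{-\beta/6}$ (just above your crossover) it allows $\eta^{-4}$ as large as $K^4e^{4B^{\beta/6}}$, which no polynomial margin on $\lambda$ can absorb. What is actually needed — and what the paper uses — is the second half of Assumption~\ref{ass:margin}, the margin condition on $\eta$ itself, combined with Lemma~\ref{lemma:eta_lambda} ($\eta$ is increasing in $\lambda$, so ordering bins by the \emph{known} $\lambda$ also orders the \emph{unknown} $\eta$, making the per-bin lower bounds $\gamma_j=\Cg(j/B^d)^{1/6\alpha}$ computable by the agent); in your layer-cake formulation, the union bound $\bP(\lambda^2\eta^4<\delta^6)\le\bP(\lambda<\delta)+\bP(\eta<\delta)\le 2C_m\delta^{6\alpha}$ would serve the same purpose. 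Computability matters here because the presampling amounts must be decided by the algorithm from observable quantities, and $\eta$ depends on the unknown $\mu$.

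Second, your presampling is pinned at the uniform level $B^{-\beta/3}$ from Lemma~\ref{lemma:wb}, which only gives $S\lesssim B^{\beta/3}$ on every well-behaved bin. The fast estimation constant of Proposition~\ref{prop:fast_estim} is $S\lambda+K/(\zeta^2\lambda^2\eta^4)$, and the $S\lambda$ part, summed over the $\Theta(B^d)$ well-behaved bins (many of which have $\lambda=\Theta(1)$), contributes $\tfrac{\log^2 T}{T}\,B^{d}\,B^{\beta/3}\normi{\lambda}=\Theta\bigl(B^{-5\beta/3}\bigr)$ at the chosen $B$; this exceeds the target $B^{-\beta(1+\alpha)}$ whenever $\alpha>2/3$, so your argument as written only proves the theorem for $\alpha\in(0,2/3]$. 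The paper avoids this by presampling bin $b_j$ at the \emph{adaptive} level $\gamma_j/2$, so that $S\le 2/\gamma_j$ and the whole constant is bounded by $C_F\gamma_j^{-6}$, whose sum integrates to $B^{-\beta(1+\alpha)}$ for all $\alpha\in(0,1)$. Two smaller slips: your layer-cake integral for the estimation term should run to $B^{\beta}$, not $B^{2\beta}$ (the latter corresponds to a crossover at $\lambda^2\eta^4=B^{-2\beta}$ and yields only $B^{-2\alpha\beta}$, contradicting both your stated crossover and the theorem); and the approximation integral against $s^{-2\alpha}$ is dominated by its upper limit only for $\alpha<1/2$ — for $\alpha\ge 1/2$ the bound is $B^{-2\beta}$ up to logarithms, which is harmless since both expressions are $\smallo\bigl(B^{-\beta(1+\alpha)}\bigr)$, matching the paper's bound $\xi_{app}B^{-\min(\beta+2\alpha\beta,2\beta)}\log B$.
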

As explained in the proof (Appendix~\ref{app:int}), we  use a pre-sampling stage on each bin to force the entropy to be smooth, as in the proofs of Propositions~\ref{prop:slow_entropy} and~\ref{prop:fast_entropy}.

We consider now the extreme values of $\alpha$.
If $\alpha\to 0$, there is no margin condition and the speed obtained is $T^{-\frac{\beta}{2\beta+d}}$ which is exactly the slow rate from Theorem~\ref{thm:slow}. If $\alpha \to 1$, there is a strong margin condition and the rate of Theorem~\ref{thm:int} tends to $T^{-\frac{2\beta}{2\beta+d}}$ which is the fast rate from Theorem~\ref{thm:fast}. Consequently we get that the intermediate rates from Theorem~\ref{thm:int} do interpolate between the slow and fast rates obtained previously.

\section{LOWER BOUNDS}
\label{sec:lower}

The results in Theorems~\ref{thm:slow} and~\ref{thm:fast} have optimal exponents in the dependency in $T$. For the slow rate, since the regularization can be equal to 0, or a linear form, the lower bounds on contextual bandits in this setting apply~\citep{audibert_tsyb,zeevi}, matching this upper bound. For the fast rates, the following lower bound holds, based on a reduction to nonparametric regression~\citep{tsyb,gyorfi}.

\begin{theorem}
\label{thm:lowfast}
For any algorithm with bandit input and output $\hat p_T$, for $\rho$ that is 1-strongly convex, we have
\[
\inf_{\hat p} \sup_{\substack{\mu \in \mathcal{H}_\beta\\ \rho \in \text{1-str. conv.}}}\Big\{ \bE[L(\hat p_T)] - L(p\st) \Big\} \ge C\,  T^{-\frac{2\beta}{2\beta+d}} \, \, ,\]
for a universal constant $C$.
\end{theorem}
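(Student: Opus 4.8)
The plan is to reduce the regret lower bound to a minimax lower bound for nonparametric regression, exploiting three ingredients: a strong-convexity argument that turns the excess loss into a squared $L^2$ distance, an explicit Legendre-transform link between $p\st$ and $\mu$, and Assouad's lemma applied to a standard family of bump functions. First I would fix a convenient adversarial instance: take $K=2$ (a two-arm sub-problem suffices for a universal constant), constant regularization $\lambda \equiv 1$, and $\rho(p)=\tfrac12\norm{p-c}_2^2$ with $c=(1/2,1/2)$, which is $1$-strongly convex on $\Dk$. Writing $f_x(p)=\la\mu(x),p\ra+\rho(p)$ and $p\st(x)=\argmin_{p\in\Dk} f_x(p)$, the decomposition $L(p)-L(p\st)=\int_\cX\big(f_x(p(x))-f_x(p\st(x))\big)\dd x$ together with the $1$-strong convexity of each $f_x$ gives the pointwise bound $f_x(p(x))-f_x(p\st(x))\geq \tfrac12\norm{p(x)-p\st(x)}_2^2$, hence
\[
\bE[L(\hat p_T)]-L(p\st)\geq \tfrac12\,\bE\norm{\hat p_T-p\st}_{L^2}^2 .
\]

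Second, I would make the link between $p\st$ and $\mu$ explicit. Restricting to instances where $p\st(x)$ stays in the relative interior of $\Dk$ (guaranteed by keeping $\mu_1$ close to $1/2$ and $\mu_2\equiv1/2$), the constrained first-order conditions give $p_1\st(x)=\tfrac12-\tfrac12\big(\mu_1(x)-\mu_2(x)\big)$, an affine, invertible map from $\mu$ to $p\st$. Consequently $\norm{p\st_\omega-p\st_{\omega'}}_{L^2}$ is, up to an absolute constant, equal to $\norm{\mu_1^{(\omega)}-\mu_1^{(\omega')}}_{L^2}$, so the problem reduces to lower bounding the squared $L^2$ minimax risk of estimating a $\beta$-Hölder function $\mu_1$ from bandit feedback.

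Third comes the core construction. With $h=\Theta\big(T^{-1/(2\beta+d)}\big)$, I partition $\cX=[0,1]^d$ into $m\asymp h^{-d}$ cubes of side $h$, fix a smooth bump $\phi$ supported on the unit cube, and set, for each $\omega\in\{0,1\}^m$,
\[
\mu_1^{(\omega)}(x)=\tfrac12+\sum_{j=1}^m \omega_j\,h^{\beta}\,\phi\!\left(\tfrac{x-x_j}{h}\right),
\]
normalizing $\phi$ so that every $\mu_1^{(\omega)}\in\mathcal{H}_{\beta,\Lb}$; flipping one coordinate changes $\mu_1$ only on one cube and contributes $\asymp h^{2\beta+d}$ to the squared $L^2$ distance. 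I then apply Assouad's lemma, which requires bounding the Kullback–Leibler divergence between the laws of the full (adaptively generated) observation trajectory under two instances differing in a single cube $j$. By the divergence decomposition for bandits this divergence equals the expected number of pulls of arm $1$ in cube $j$ times the per-observation divergence, which is $\bigo(h^{2\beta})$ for bounded losses; since contexts are uniform, the expected number of rounds landing in cube $j$ is $\asymp Th^d$, so the trajectory divergence is $\bigo(Th^{2\beta+d})=\bigo(1)$ by the choice of $h$. Assouad then yields $\bE\norm{\hat p_T-p\st}_{L^2}^2\gtrsim m\cdot h^{2\beta+d}=h^{2\beta}=\Theta\big(T^{-2\beta/(2\beta+d)}\big)$, which combined with the first display gives the claim for a universal constant $C$.

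The routine parts are the bump construction and the convexity reduction; the delicate step I expect to be the main obstacle is making the divergence-decomposition argument rigorous for history-dependent policies. One must handle the fact that the algorithm may deliberately avoid pulling arm $1$ inside a cube: the bound ``arm-$1$ pulls $\leq$ total pulls in the cube $\asymp Th^d$'' keeps the information controlled regardless of the (adaptive) policy, while the per-coordinate two-point test built into Assouad's lemma certifies that whenever arm $1$ is under-sampled on a cube the resulting estimate of $p\st$ on that cube is necessarily poor, so that no exploration strategy can escape the stated rate.
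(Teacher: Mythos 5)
Your proposal is correct, and at the decisive step it takes a genuinely different route from the paper. The shared skeleton: both fix $K=2$ and a $1$-strongly convex quadratic regularizer centered at $(1/2,1/2)$, observe that the excess loss equals (in the paper, exactly, since the loss is purely quadratic) or dominates (in your version, via strong convexity at the constrained minimizer) $\lambda\int_{\cX}\norm{\hat p_T(x)-p\st(x)}^2\dd x$, and use the affine correspondence between $p\st$ and $\mu$ to reduce the regret bound to a squared-$L^2$ minimax bound for estimating a $\beta$-Hölder function. The divergence is in how bandit feedback is handled. The paper chooses $\mu(x)=(-\eta(x),\eta(x))$, so the observed loss has mean $\pm\eta(X_t)$ with known sign \emph{whichever} arm is pulled: bandit feedback is then informationally equivalent to regression feedback, adaptivity of the policy is a non-issue, and the known nonparametric regression lower bound (Theorem 3.2 of Gy\"orfi et al.) applies verbatim. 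You instead make only arm 1 informative ($\mu_2\equiv 1/2$) and re-derive the bound from scratch: bump-function hypercube at scale $h=\Theta(T^{-1/(2\beta+d)})$, Assouad's lemma, and the divergence decomposition bounding the trajectory KL by (expected arm-1 pulls in a cube) $\times$ (per-sample KL), i.e.\ at most $Th^d\cdot\mathcal{O}(h^{2\beta})=\mathcal{O}(1)$, uniformly over history-dependent policies. The ``delicate step'' you flag is resolved by exactly this decomposition: the expected number of arm-1 pulls in a cube is bounded by the expected number of visits there regardless of the policy, and under-sampling arm 1 only shrinks the KL further, which can only strengthen the lower bound — no separate argument about poor estimation on under-sampled cubes is needed. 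What each approach buys: the paper's is shorter, outsourcing the information-theoretic work to a cited theorem at the cost of an instance in which the problem is not really a bandit problem at all; yours is self-contained, makes the bandit-feedback accounting explicit, and establishes hardness even for an instance where the information is genuinely one-sided, at the cost of carrying out the Assouad machinery by hand. Both yield the rate $T^{-2\beta/(2\beta+d)}$ with a universal constant.
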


The proof is in Appendix~\ref{app:lb}. The upper and lower bound match up to logarithmic terms. This bound is obtained for $K=2$, and the dependency of the rate in $K$ is not analyzed here.

\section{EMPIRICAL RESULTS}
\label{sec:experiments}

We present in this section experiments and simulations for the regularized contextual bandits problem. The setting we consider uses $K=3$ arms, with an entropy regularization and a fixed parameter $\lambda=0.1$. We run successive experiments for values of $T$ ranging from $1\,000$ to $100\, 000$, and for different values of the smoothness parameter $\beta$. The arms' rewards follow $3$ different probability distributions (Poisson, exponential and Bernoulli), with $\beta$-Hölder mean functions.

The results presented in Figure~\ref{fig:regret} shows that $T\mapsto T\cdot R(T)$ growths as expected, and the lower $\beta$, the slower the convergence rate,  as shown on the graph.
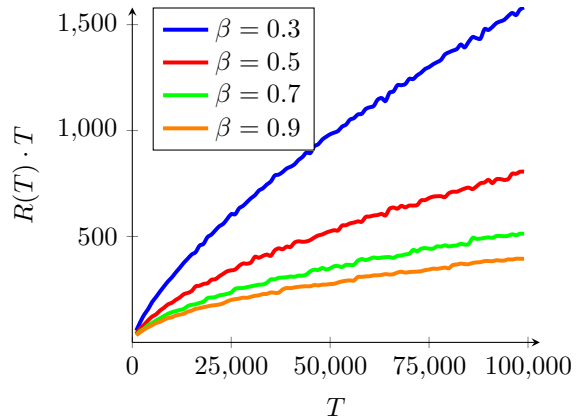
\begin{figure}[H]
\begin{center}
\begin{tikzpicture}[domain=0:5]
\begin{axis}
        [
        ,axis x line=bottom
  		,axis y line=center
		,ylabel near ticks,
		,xlabel near ticks,
        ,width=7cm
        ,xlabel=$T$
        ,ylabel=$R(T)\cdot T$
        ,xmin=0
        ,xmax=103000
        ,ymin=0
        ,scaled ticks=false
        ,xtick={0,25000,50000,75000,100000}
        ,xticklabels={0,$\textrm{25,000}$,$\textrm{50,000}$,$\textrm{75,000}$,$\textrm{100,000}$}
        ,legend style={at={(0.05,1)},anchor=north west}
        ]
        \addplot[smooth,blue,line width=1.5pt] table [x index={1}, y index={2}, col sep=comma] {file_0.30.csv};
        \addlegendentry{$\beta=0.3$};
        \addplot[smooth,red,line width=1.5pt] table [x index={1}, y index={2}, col sep=comma] {file_0.50.csv};
        \addlegendentry{$\beta=0.5$};
        \addplot[smooth,green,line width=1.5pt] table [x index={1}, y index={2}, col sep=comma] {file_0.70.csv};
        \addlegendentry{$\beta=0.7$};
        \addplot[smooth,orange,line width=1.5pt] table [x index={1}, y index={2}, col sep=comma] {file_0.90.csv};
        \addlegendentry{$\beta=0.9$};
    \end{axis}
\end{tikzpicture}
\caption{Regret as a Function of $T$\label{fig:regret}}
\end{center}
\end{figure}
In order to verify that the fast rates proven in Subsection~\ref{ssec:fast_rates} are indeed reached, we plot on Figure~\ref{fig:norm_regret} the ratio between the regret and the theoretical bound on the regret $\left(T/\log^2(T)\right)^{-\frac{2\beta}{2\beta+d}}$. We observe that this ratio is approximately constant as a function of $T$, which validates empirically the theoretical convergence rates.
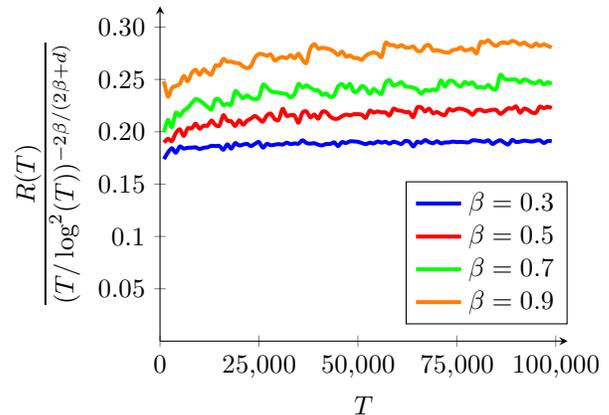
\begin{figure}[H]
\begin{center}
\begin{tikzpicture}[domain=0:5]
\begin{axis}
        [
        ,axis x line=bottom
  		,axis y line=center
		,ylabel near ticks,
		,xlabel near ticks,
        ,width=7cm
        ,xlabel=$T$
        ,ylabel=$\dfrac{R(T)}{(T/\log^2(T))^{-2\beta/(2\beta+d)}}$
        ,xmin=0
        ,xmax=103000
        ,ymin=0
        ,ymax=0.32
        ,scaled ticks=false
        ,xtick={0,25000,50000,75000,100000}
        ,xticklabels={0,$\textrm{25,000}$,$\textrm{50,000}$,$\textrm{75,000}$,$\textrm{100,000}$}
        ,ytick={0,0.05,...,0.35}
        ,yticklabels={0,0.05,0.1,0.15,0.20,0.25,0.30}
        ,legend style={at={(1,0.05)},anchor=south east}
        ]
        \addplot[smooth,blue,line width=1.5pt] table [x index={1}, y index={3}, col sep=comma] {file_0.30.csv};
        \addlegendentry{$\beta=0.3$};
        \addplot[smooth,red,line width=1.5pt] table [x index={1}, y index={3}, col sep=comma] {file_0.50.csv};
        \addlegendentry{$\beta=0.5$};
        \addplot[smooth,green,line width=1.5pt] table [x index={1}, y index={3}, col sep=comma] {file_0.70.csv};
        \addlegendentry{$\beta=0.7$};
        \addplot[smooth,orange,line width=1.5pt] table [x index={1}, y index={3}, col sep=comma] {file_0.90.csv};
        \addlegendentry{$\beta=0.9$};
    \end{axis}
\end{tikzpicture}
\caption{Normalized Regret as a Function of $T$\label{fig:norm_regret}}
\end{center}
\end{figure}

\section{CONCLUSION}
\label{sec:ccl}
We proposed an algorithm for the problem of contextual bandits with regularization reaching fast rates similar to the ones obtained in nonparametric estimation, and validated by our experiments. We can discard the parameters of the problem in the convergence rates by applying a margin condition that allows us to derive intermediate convergence rates interpolating perfectly between the slow and fast rates.

\subsubsection*{Acknowledgments}

Xavier Fontaine was supported by grants from Région Ile-de-France. Quentin Berthet was supported by The Alan Turing Institute under the EPSRC grant EP/N510129/1. Vianney Perchet has benefited from the support of the FMJH Program Gaspard Monge in optimization and operations research (supported in part by EDF), from the Labex LMH and from the CNRS through the PEPS program.

\bibliography{contextual_bandits}

\onecolumn


\appendix
\section{PROOFS OF SLOW RATES}
\label{app:slow}

We prove in this section the propositions and theorem of Subsection~\ref{ssec:slow_rates}.

We begin by a lemma on the concentration of $T_b$, the number of context samples falling in a bin $b$.

\begin{lemma}
\label{lemma:chernoff}
For all $b \in \cB$, let $T_b$ the number of context samples falling in the bin $b$. We have
\[
\bP\left(\exists b \in \cB, \ \abs*{T_b-\dfrac{T}{B^d}} \geq \dfrac{1}{2}\dfrac{T}{B^d}\right) \leq 2B^d \exp\left(-\dfrac{T}{12B^d}\right).
\]
\end{lemma}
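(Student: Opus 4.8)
The plan is to recognize that, since each context $X_t$ is drawn uniformly on $\cX=[0,1]^d$ and each bin $b$ has volume $\abs{b}=B^{-d}$, the indicator $\mathbf{1}\{X_t \in b\}$ is a Bernoulli variable with parameter $B^{-d}$. As the $X_t$ are drawn independently, $T_b=\sum_{t=1}^T \mathbf{1}\{X_t \in b\}$ follows a binomial distribution with parameters $T$ and $B^{-d}$, so that $\bE[T_b]=T/B^d$. The whole statement is then a standard deviation-bound-plus-union-bound argument.

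First I would fix a single bin $b$ and write $\mu_b := \bE[T_b]=T/B^d$. Then I would apply the two-sided multiplicative Chernoff bound for a sum of independent Bernoulli variables, namely
\[
\bP\big( \abs{T_b-\mu_b} \geq \delta \mu_b \big) \leq 2\exp\!\left(-\frac{\mu_b \delta^2}{3}\right),
\]
valid for $\delta \in (0,1)$. Specializing to $\delta=1/2$ gives the exponent $-\mu_b/12 = -T/(12B^d)$, which is exactly the per-bin bound needed.

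Finally I would apply a union bound over the $B^d$ bins of $\cB$, which multiplies the per-bin probability by $B^d$ and yields the claimed inequality. This argument has no genuine obstacle; the only point requiring care is tracking the constant in the multiplicative Chernoff bound so that the choice $\delta=1/2$ produces the exponent $\delta^2/3=1/12$ rather than some slightly different value. Using the (weaker) $1/3$ constant uniformly for both the upper and lower tails is what makes the two-sided bound clean and gives precisely the stated factor $1/12$.
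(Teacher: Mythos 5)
Your proposal is correct and follows exactly the same route as the paper: identify $T_b$ as a sum of i.i.d.\ Bernoulli$(B^{-d})$ variables with mean $T/B^d$, apply the two-sided multiplicative Chernoff bound with $\delta=1/2$ and the $1/3$ constant to get the exponent $-\frac{1}{3}\left(\frac{1}{2}\right)^2\frac{T}{B^d}=-\frac{T}{12B^d}$, and finish with a union bound over the $B^d$ bins. No gaps.
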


\begin{proof}
For a bin $b\in\cB$ and $t \in \{1,\dots,T\}$, let $Z^{(b)}_t=\mathds{1}_{\{X_t \in \cB\}}$ which is a random Bernoulli variable of parameter $1/B^d$.

We have $T_b=\sum_{t=1}^T Z_t$ and $\bE[T_b]=T/B^d$.

Using a multiplicative Chernoff's bound~\citep{hdp} we obtain:
\[
\bP\left( \abs*{T_b-\bE[T_b]} \geq \dfrac{1}{2}\bE[T_b] \right) \leq 2\exp\left(-\dfrac{1}{3}\left(\dfrac{1}{2}\right)^2\dfrac{T}{B^d}\right)=2\exp\left(-\dfrac{T}{12B^d}\right)
.
\]
We conclude with an union bound on all the bins.
\end{proof}

\begin{proof}[Proof of Proposition~\ref{prop:slow_estim}]
We have
\[
E(p_T)=\bE L(p_T)-L(\tilde{p}\st) =\dfrac{1}{B^d}\sum_{b \in \cB} \bE L_b(p_T(b)) - L_b(p_b\st)
\]
Let us now consider a single bin $b \in \cB$. We have run the UCB Frank-Wolfe~\citep{ucbfw} algorithm for the function $L_b$ on the bin $b$ with $T_b$ iterations.

For all $p \in \Dk$, $L_b(p)=\la \mubb, p\ra+\lambda \rho(p)$, then for all $p \in \Delta^K$, $\nabla L_b(p)=\mubb + \lambda \nabla \rho(p)$ and $\nabla^2 L_b(p)=\lambda \nabla^2 \rho(p)$. Since $\rho$ is a $S$-smooth convex function, $L_b$ is a $\lambda S$-smooth convex function.

We consider the event $A$:
\[
A\doteq\left\lbrace \forall b \in \cB, \ T_b \in \left[\dfrac{T}{2B^d},\dfrac{3T}{2B^d}\right]  \right\rbrace.
\]

Lemma~\ref{lemma:chernoff} shows that $\bP(\stcomp{A}) \leq 2B^d\exp\left(-\dfrac{T}{12B^d}\right)$.

Theorem 3 of~\citet{ucbfw} shows that, on event $A$:
\begin{align*}
\bE L_b(p_T(b))-L_b(p\st_b) &\leq 4\sqrt{\dfrac{3K\log(T_b)}{T_b}}+\dfrac{S\log(eT_b)}{T_b}+\left(\dfrac{\pi^2}{6}+K\right)\dfrac{2\normi{\nabla L_b}+\normi{L_b}}{T_b} \\
&\leq 4\sqrt{\dfrac{6K\log(T)}{T/B^d}}+\dfrac{2S\log(eT)}{T/B^d}+2\left(\dfrac{\pi^2}{6}+K\right)\dfrac{2\normi{\nabla L_b}+\normi{L_b}}{T/B^d}
.
\end{align*}

Since $\rho$ is of class $\cC^1$, $\rho$ and $\nabla \rho$ are bounded on the compact set $\Dk$. It is also the case for $L_b$ and consequently $\normi{L_b}$ and $\normi{\nabla L_b}$ exist and are finite and can be expressed in function of $\normi{\rho}$, $\normi{\nabla \rho}$ and $\normi{\lambda}$.
On event $\stcomp{A}$, $\bE L_b(p_T(b))-L_b(p\st_b) \leq 2\normi{L_b} \leq 2+2\normi{\lambda\rho}$.

Summing over all the bins in $\cB$ we obtain:
\[
\bE L(p_T)-L(p\st) \leq 4B^{d/2}\sqrt{\dfrac{6K\log(T)}{T}}+B^d\dfrac{2S\log(eT)}{T}+4KB^d\dfrac{4+2\normi{\lambda \nabla \rho}+\normi{\lambda \rho}}{T} + 4B^d(1+\normi{\lambda\rho})e^{-\frac{T}{12B^d}}  \numberthis \label{eq:slow_ucbfw}
.\]

The first term of Equation~\eqref{eq:slow_ucbfw} dominates the others and we can therefore write that
\[
\bE L(p_T)-L(p\st) = \bigo \left( \sqrt{K} B^{d/2}\sqrt{\dfrac{\log(T)}{T}} \right)
\]
where the $\bigo$ is valid for $T \to \infty$.

\end{proof}

\begin{proof}[Proof of Proposition~\ref{prop:slow_entropy}]
We consider a bin $b \in \cB$ containing $t$ samples.

Let $\cS \doteq \left\lbrace p \in \Delta^K \ | \ \forall i \in [K],\ p_i \geq \dfrac{\lambda}{\sqrt{t}} \right\rbrace$. In order to force all the successive estimations of $p\st_b$ to be in $\cS$ we sample each arm $\lambda \sqrt{t}$ times. Thus we have $\forall i \in [K], \ p_i \geq \lambda/\sqrt{t}$. Then we apply the UCB-Frank Wolfe algorithm on the bin $b$. Let
\[
\hpb \doteq \min_{p \in \cS} L_b(p) \quad \textrm{and}\quad p\st_b\doteq \min_{p \in \Dk} L_b(p).
\]

\begin{itemize}
\item \textbf{Case 1:} $\hpb=p\st_b$,  \ie the minimum of $L_b$ is in $\cS$.

For all $p \in \Dk$, $L_b(p)=\la \mubb, p\ra+\lambda \rho(p)$, then for all $p \in \Delta^K$, $\nabla L_b(p)=\mubb + \lambda(1+\log(p))$ and $\nabla^2_{ii}L_b(p)=\lambda/p_i$. Therefore on $\cS$ we have
\[
\nabla^2_{ii}L_b(p)\leq \sqrt{t}.
\]
And consequently $L_b$ is $\sqrt{t}$-smooth. And since $\nabla_i L_b(p)=1+\lambda \log(p_i)$, $\normi{\nabla L_b(p)} \lesssim \log(t)$. We can apply the same steps as in the proof of Proposition~\ref{prop:slow_estim} to find that
\[
\bE L_b(p_t(b))-L_b(p\st_b) \leq 4\sqrt{\dfrac{3K\log(t)}{t}}+\dfrac{\sqrt{t}\log(et)}{t}+\left(\dfrac{\pi^2}{6}+K\right)\dfrac{2\log(t)+\log(K)}{t}=\bigo\left(\dfrac{\log(t)}{\sqrt{t}}\right)
.
\]

\item \textbf{Case 2:} $\hpb\neq p\st_b$. By strong convexity of $L_b$, $\hpb$ cannot be a local minimum of $L_b$ and therefore $\hpb \in \partial \Dk$.

The Case 1 shows that
\[
\bE L_b(p_t(b))-L_b(\hpb) \leq \bigo\left(\dfrac{\log(t)}{\sqrt{t}}\right)
.
\]

Let $\pi = (\pi_1, \dots, \pi_K)$ with $\pi_i \doteq \max(\lambda/\sqrt{t},\hat{p}_{b,i})$. We have $\norm{\pi-\hpb}_2 \leq \sqrt{K}\lambda/\sqrt{t}$.

Let us derive an explicit formula for $p_b\st$ knowing the explicit expression of $\rho$. In order to find the optimal $\rho\st$ value let us minimize $(p\mapsto L_b(p))$ under the constraint that $p$ lies in the simplex $\Delta^K$. The KKT equations give the existence of $\xi \in \bR$ such that for each $i \in [K]$, $\bar{\mu}_i(b)+\lambda\log(p_i)+\lambda+\xi=0$ which leads to $p\st_{b,i}=e^{-\bar{\mu}_i(b)/\lambda}/Z$ where $Z$ is a normalization factor. Since $Z=\sum_{i=1}^K e^{-\bar{\mu}_i(b)/\lambda}$ we have $Z \leq K$ and $p\st_{b,i} \geq e^{-1/\lambda}/K$. Consequently for all $p$ on the segment between $\pi$ and $p_b\st$ we have $p_i\geq e^{-1/\lambda}/K$ and therefore $\lambda(1+\log(p_i)) \geq \lambda(1-\log K) -1$ and finally $\abs{\nabla_i L_b(p)} \leq 4 \normi{\lambda}\log(K)$.

Therefore $L_b$ is $4\sqrt{K}\log(K)$-Lipschitz and
\[
\norm{L_b(p\st_b)-L_b(\pi)}_2\leq 4\normi{\lambda}\sqrt{K}\log(K)\norm{\pi-\hpb}_2 \leq 4K\log(K)\normi{\lambda}^2/\sqrt{t}=\bigo(1/\sqrt{t}).
\]

Finally, since $L_b(\pi) \geq L_b(\hpb)$ (because $\pi \in \cS$), we have
\[
\bE L_b(p_t(b))-L_b(p\st_b) \leq \bE L_b(p_t(b))-L_b(\hpb)+L_b(\hpb)-L_b(p\st_b) \leq \bigo\left(\dfrac{\log(t)}{\sqrt{t}}\right) + L(\pi)-L(p_b\st) = \bigo\left(\dfrac{\log(t)}{\sqrt{t}}\right).
\]

We conclude by summing on the bins and using that $t\in[T/2B^d,3T/2B^d]$ with high probability, as in the proof of Proposition~\ref{prop:slow_estim}.

\end{itemize}

\end{proof}

\begin{proof}[Proof of Proposition~\ref{prop:slow_approx}]
We have to bound the quantity
\[
L(\tilde{p}\st) - L(p\st)= \lambda\sum_{b\in\cB} \int_b \rho^*(-\mu(x)/\lambda)-\rho^*(-\bar{\mu}(b)/\lambda)\dd x
.
\]

Classical results on convex conjugates~\citep{hiriart1} give that $\nabla \rho^*(y)=\argmin_{x \in \Delta^K} \rho(x)-\la x,y\ra$ for all $y \in \bR^K$. Consequently, $\nabla \rho^*(y) \in \Delta^K$ and for all $y \in \bR^K$, $\norm{\nabla \rho^*(y)} \leq 1$ showing that $\rho^*$ is $1$-Lipschitz continuous.
This leads to
\begin{align*}
L(\pt\st)-L(p\st)&\leq \lambda\sum_{b\in\cB} \int_b \norm{\dfrac{\mu(x)-\bar{\mu}(b)}{\lambda}} \dd x \\
&\leq \sum_{b\in\cB} \int_b \sqrt{L_{\beta}K}\left(\dfrac{\sqrt{d}}{B} \right)^{\beta} \dd x \\
&\leq \sqrt{L_{\beta}Kd^{\beta}}B^{-\beta}
\end{align*}
because all the $\mu_k$ are $(\Lb, \beta)$-Hölder.
\end{proof}

\begin{proof}[Proof of Theorem~\ref{thm:slow}]
We will denote by $C_k$ with increasing values of $k$ the constants.
Since the regret is the sum of the approximation error and the estimation error we obtain
\[
R(T) \leq \sqrt{\Lb d^{\beta} K} B^{-\beta} + C_1\sqrt{K}B^{d/2}\sqrt{\dfrac{\log(T)}{T}} + B^d\dfrac{2S\log(eT)}{T}+C_2 K\dfrac{B^d}{T} + 4B^d(1+\normi{\lambda\rho})\exp\left(-\dfrac{T}{12B^d}\right).
\]
With the choice of
\[
B=\left(C_2\beta\sqrt{\Lb}d^{\beta/2-1}\right)^{1/(\beta+d/2)}\left(\dfrac{T}{\log(T)}\right)^{1/(2\beta+d)},
\]
we find that the three last terms of the regret are negligible with respect to the first two.
This gives
\[
R(T) \leq \bigo\left(\left(3\sqrt{K}\Lb^{d/(4\beta+2d)}d^{\beta(4+d)/(4\beta+2d)}(C_2\beta)^{-\beta/(2\beta+d)} \right) \left(\dfrac{T}{\log(T)}\right)^{-\beta/(2\beta+d)}\right).
\]
\end{proof}

\section{PROOFS OF FAST RATES}
\label{app:fast}

We prove now the propositions and theorem of Subsection~\ref{ssec:fast_rates}.

\begin{proof}[Proof of Proposition~\ref{prop:fast_estim}]
The proof is very similar to the one of Proposition~\ref{prop:slow_estim}. We decompose the estimation error on the bins:
\[
\bE L(p_T)-L(\tilde{p}\st) =\dfrac{1}{B^d}\sum_{b \in \cB} \bE L_b(p_T(b)) - L_b(p_b\st)
.
\]
Let us now consider a single bin $b \in \cB$. We have run the UCB Frank-Wolfe algorithm for the function $L_b$ on the bin $b$ with $T_b$ samples.

As in the proof of Proposition~\ref{prop:slow_estim} we consider the event $A$.

Theorem 7 of~\citet{ucbfw}, applied to $L_b$ which is a $\lambda S$-smooth $\lambda \zeta$-strongly convex function, shows that on event $A$:
\[
\bE L(p_T) - L(p\st) \leq 2\tilde{c}_1 \dfrac{\log^2(T)}{T/B^d}+2\tilde{c}_2\dfrac{\log(T)}{T/B^d}+\tilde{c}_3\dfrac{2}{T/B^d}
\]
with $\tilde{c}_1=\dfrac{96K}{\zeta \lambda \eta^2}$, $\tilde{c}_2=\dfrac{24}{\zeta \lambda \eta^3}+\lambda S$ and $\tilde{c}_3=24\left(\dfrac{20}{\zeta\lambda\eta^2}\right)^2K+\dfrac{\lambda\zeta\eta^2}{2}+\lambda S$.
Consequently

\[
\bE L(p_T) - L(p\st) \leq 2\tilde{c}_1 \dfrac{\log^2(T)}{T/B^d}+2\tilde{c}_2\dfrac{\log(T)}{T/B^d}+\tilde{c}_3\dfrac{2}{T/B^d} + 4B^d(1+\normi{\lambda\rho})\exp\left(-\dfrac{T}{12B^d}\right)
.
\]

In order to have a simpler expression we can use the fact that $\lambda$ and $\eta$ are constants that can be small while $S$ can be large. Consequently $\tilde{c}_3$ is the largest constant among $\tilde{c}_1$, $\tilde{c}_2$ and $\tilde{c}_3$ and we obtain

\[
\bE L(p_T) - L(p\st) \leq \bigo\left(\left(\dfrac{K}{\lambda^2 \zeta^2 \eta^4}+S\lambda\right) B^d\dfrac{\log^2(T)}{T}\right),
\]
because the other terms are negligible.
\end{proof}

\begin{proof}[Proof of Lemma~\ref{lemma:presampling}]
We consider a single bin $b \in \cB$.
Let us consider the function
\[
\hat{L}_b:p\mapsto L_b(\alpha p^o+(1-\alpha)p)
.
\]
Since for all $i$, $p\st_{b,i} \geq \alpha p_i^o$ and since $\Dk$ is convex we know that $\min_{p \in \Dk}\hat{L}_b(p)=L_b(p_b\st)$.

If $p$ is the frequency vector obtained by running the UCB-Frank Wolfe algorithm for function $\hat{L}_b$ with $(1-\alpha)T$ samples then minimizing $\hat{L}_b$ is equivalent to minimizing $L$ with a presampling stage.

Consequently the whole analysis on the regret still holds with $T$ replaced by $(1-\alpha)T$. Thus fast rates are kept with a constant factor $1/(1-\alpha) \leq 2$.
\end{proof}

\begin{proof}[Proof of Proposition~\ref{prop:fast_entropy}]
For the entropy regularization, we have
\[
p\st_{b,i}=\dfrac{\exp(-\mubb_i/\lambda)}{\sum_{j=1}^K \exp(-\mubb_j/\lambda)}\leq \dfrac{\exp(-1/\lambda)}{K}.
\]

We apply Lemma~\ref{lemma:presampling} with $p^o=\left(\dfrac{1}{K},\dots,\dfrac{1}{K}\right)$ and $\alpha=\exp(-1/\lambda)$. Consequently each arm is presampled $T\exp(-1/\lambda)/K$ times and finally we have
\[
\forall i \in [K], p_i \geq \dfrac{\exp(-1/\lambda)}{K}.
\]
Therefore we have
\[
\forall i \in[K], \ \nabla_{ii}\rho(p)=\dfrac{1}{p_i}\leq K\exp(1/\lambda),
\]
showing that $\rho$ is $K\exp(1/\lambda)$-smooth.
\end{proof}

In order to prove the Proposition~\ref{prop:fast_approx} we will need the following lemma which is a direct consequence of a result on smooth convex functions.

\begin{lemma}
\label{lemma:conv}
Let $f:\bR^d \to \bR$ be a convex function of class $\cC^1$ and $L>0$. Let $g:\bR^d \ni x \mapsto \dfrac{L}{2}\norm{x}^2-f(x)$. Then $g$ is convex if and only if $\nabla f$ is $L$-Lipschitz continuous.
\end{lemma}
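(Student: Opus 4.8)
The plan is to prove the two implications separately, exploiting the characterization that a $\cC^1$ function is convex if and only if its gradient is a monotone operator. Observe first that $g$ is $\cC^1$ with $\nabla g(x) = Lx - \nabla f(x)$.

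For the direction where $\nabla f$ is $L$-Lipschitz, I would show directly that $\nabla g$ is monotone. For any $x, y \in \bR^d$,
\[
\la \nabla g(x) - \nabla g(y), x-y\ra = L\norm{x-y}^2 - \la \nabla f(x) - \nabla f(y), x-y\ra.
\]
By Cauchy--Schwarz and the Lipschitz assumption, $\la \nabla f(x) - \nabla f(y), x-y\ra \leq \norm{\nabla f(x) - \nabla f(y)}\norm{x-y} \leq L\norm{x-y}^2$, so the right-hand side is nonnegative. Since a $\cC^1$ function with monotone gradient is convex, this settles the implication.

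For the converse, I would first extract from the convexity of $g$ a quadratic upper bound on $f$. Writing the first-order convexity inequality $g(y) \geq g(x) + \la \nabla g(x), y-x\ra$ and substituting $\nabla g(x) = Lx - \nabla f(x)$, the quadratic terms combine via the identity $\frac{L}{2}\norm{y}^2 - \frac{L}{2}\norm{x}^2 - L\la x, y-x\ra = \frac{L}{2}\norm{y-x}^2$ to give
\[
f(y) \leq f(x) + \la \nabla f(x), y-x\ra + \frac{L}{2}\norm{y-x}^2.
\]
The heart of the argument --- and the main obstacle --- is then to upgrade this one-sided bound, together with the convexity of $f$, into genuine Lipschitz continuity of $\nabla f$; the inequality above alone is strictly weaker. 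For this I would use the standard co-coercivity trick: fixing $x$, the shifted function $\phi(z) = f(z) - \la \nabla f(x), z\ra$ is convex, attains its global minimum at $z = x$, and inherits the same quadratic bound. Evaluating that bound at $z = y - \frac{1}{L}\nabla\phi(y)$ yields $\phi(x) \leq \phi(y) - \frac{1}{2L}\norm{\nabla\phi(y)}^2$. Rewriting this in terms of $f$ and symmetrizing in $x$ and $y$ produces the co-coercivity inequality
\[
\frac{1}{L}\norm{\nabla f(x) - \nabla f(y)}^2 \leq \la \nabla f(x) - \nabla f(y), x-y\ra,
\]
from which a final application of Cauchy--Schwarz gives $\norm{\nabla f(x) - \nabla f(y)} \leq L\norm{x-y}$, completing the proof.
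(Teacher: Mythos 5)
Your proof is correct, and it takes a more self-contained route than the paper's. The paper performs the same algebraic reduction that you do for the converse direction --- writing the first-order convexity condition for $g$, substituting $\nabla g(x) = Lx - \nabla f(x)$, and completing the square to reach the equivalence between convexity of $g$ and the quadratic bound $f(y) \le f(x) + \la \nabla f(x), y-x \ra + \frac{L}{2}\norm{x-y}^2$ --- but it then closes the argument in one stroke by citing Theorem 2.1.5 of Nesterov's textbook, which states that for a \emph{convex} $\cC^1$ function this descent-type inequality is equivalent to $L$-Lipschitz continuity of the gradient. You instead prove the content of that citation yourself: the easy direction via monotonicity of $\nabla g$ and Cauchy--Schwarz (rather than running the paper's chain of equivalences backwards), and the hard direction via the standard co-coercivity argument, i.e.\ minimizing the shifted function $\phi(z) = f(z) - \la \nabla f(x), z \ra$ at $z = y - \frac{1}{L}\nabla \phi(y)$ and symmetrizing to get $\frac{1}{L}\norm{\nabla f(x)-\nabla f(y)}^2 \le \la \nabla f(x)-\nabla f(y), x-y\ra$. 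All the steps check out, including the key identity $\frac{L}{2}\norm{y}^2 - \frac{L}{2}\norm{x}^2 - L\la x, y-x\ra = \frac{L}{2}\norm{y-x}^2$. Your remark that the quadratic bound alone is strictly weaker than Lipschitz continuity of $\nabla f$, and that convexity of $f$ is needed to upgrade it, identifies exactly where the real content of the lemma lies --- that is precisely what the paper delegates to Nesterov. In short, the paper's proof buys brevity at the cost of an external black box; yours buys a complete, elementary argument at the cost of length, and the two are otherwise built on the same reduction.
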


\begin{proof}
Since $g$ is continuously differentiable we can write
\begin{align*}
g\textrm{ convex } &\Leftrightarrow \forall x,y \in \bR^d, \ g(y) \geq g(x) + \la \nabla g(x), y-x \ra \\
&\Leftrightarrow \forall x,y \in \bR^d, \dfrac{L}{2}\norm{y}^2-f(y) \geq \dfrac{L}{2}\norm{x}^2-f(x) + \la Lx-\nabla f(x), y-x \ra \\
&\Leftrightarrow \forall x,y \in \bR^d, f(y) \leq f(x)+\la \nabla f(x), y-x \ra + \dfrac{L}{2}\left(\norm{y}^2+\norm{x}^2-2\la x,y\ra\right) \\
&\Leftrightarrow \forall x,y \in \bR^d, f(y) \leq f(x)+\la \nabla f(x), y-x \ra + \dfrac{L}{2}\norm{x-y}^2 \\
&\Leftrightarrow \nabla f \textrm{ is $L$-Lipschitz continuous.}
\end{align*}
where the last equivalence comes from Theorem 2.1.5 of~\citet{nesterov}.
\end{proof}

\begin{proof}[Proof of Proposition~\ref{prop:fast_approx}]
Since $\rho$ is $\zeta$-strongly convex then $\nabla \rho^*$ is $1/\zeta$-Lipschitz continuous (see for example Theorem 4.2.1 at page 82 in~\citet{hiriart2}). Since $\rho^*$ is also convex, Lemma~\ref{lemma:conv} shows that $g:x\mapsto \frac{1}{2\zeta}\norm{x}^2-\rho^*(x)$ is convex.

Let us now consider the bin $b$ and the function $\mu=(\mu_1,\dots,\mu_k)$. Jensen's inequality gives:
\[
\dfrac{1}{|b|}\int_b g(-\mu(x)/\lambda) \dd x \geq g\left( \dfrac{1}{|b|}\int_b -\dfrac{\mu(x)}{\lambda} \dd x \right).
\]
This leads to
\begin{align*}
\int_b g(-\mu(x)/\lambda) \dd x &\geq \int_b g(-\bar{\mu}(b)/\lambda) \dd x \\
\int_b \dfrac{1}{2\zeta} \norm{-\mu(x)}^2/\lambda^2-\rho^*(-\mu(x)/\lambda) \dd x &\geq \int_b \dfrac{1}{2\zeta}\norm{-\bar{\mu}(b)}^2/\lambda^2-\rho^*(-\bar{\mu}(b)/\lambda) \dd x \\
\int_b \rho^*(-\mu(x)/\lambda)-\rho^*(-\bar{\mu}(b)/\lambda) \dd x &\leq  \dfrac{1}{2\zeta\lambda^2}\int_b \norm{\mu(x)}^2-\norm{\bar{\mu}(b)}^2 \dd x.
\end{align*}

We use the fact that $\int_b \norm{\mu(x)-\bar{\mu}(b)}^2 \dd x=\int_b \norm{\mu(x)}^2+\norm{\bar{\mu}(b)}^2-2\langle \mu(x), \bar{\mu}(b) \rangle \dd x = \int_b \norm{\mu(x)}^2+\norm{\bar{\mu}(b)}^2 \dd x - 2\langle \bar{\mu}(b), \int_b \mu(x) \dd x \rangle = \int_b \norm{\mu(x)}^2+\norm{\bar{\mu}(b)}^2 \dd x - 2\langle \bar{\mu}(b), |b| \bar{\mu}(b) \rangle = \int_b \norm{\mu(x)^2}-\norm{\bar{\mu}(b)}^2 \dd x$ and we get finally

\[
\int_b \rho^*(-\mu(x)/\lambda)-\rho^*(-\bar{\mu}(b)/\lambda) \dd x \leq  \dfrac{1}{2\zeta\lambda^2}\int_b \norm{\mu(x)-\bar{\mu}(b)}^2 \dd x.
\]
Equation~\eqref{eq:convconj} shows that
\begin{align*}
L(\pt\st)-L(p\st)&\leq \dfrac{1}{2\zeta\lambda}\sum_{b\in\cB} \int_b \norm{\bar{\mu}(b)-\mu(x)}^2  \dd x \\
&\leq \sum_{b\in\cB} \int_b \dfrac{L_{\beta}K}{2\zeta\lambda} \left(\dfrac{\sqrt{d}}{B}\right)^{2\beta} \dd x \\
& \leq \dfrac{L_{\beta}Kd^{\beta}}{2\zeta\lambda} \left(\dfrac{1}{B}\right)^{2\beta}
\end{align*}

because each $\mu_k$ is $(L_{\beta}, \beta)$-Hölder.

\end{proof}

\begin{proof}[Proof of Theorem~\ref{thm:fast}]
We denote again by $C_k$ the constants.
We sum the approximation and the estimation errors (given in Propositions~\ref{prop:fast_approx} and~\ref{prop:fast_estim}) to obtain the following bound on the regret:
\[
R(T) \leq C_1 \dfrac{\Lb Kd^{\beta}}{\zeta \lambda} B^{-2\beta}+ C_2\dfrac{\log^2(T)}{T}B^d\left(\dfrac{1}{\zeta\lambda\eta^3}+\dfrac{K}{\zeta^2\lambda^2\eta^4}+\lambda\zeta\eta^2+\lambda S\right) + 4B^d(1+\normi{\lambda\rho})\exp\left(-\dfrac{T}{12B^d}\right)
.
\]
For the sake of clarity let us note $\xi_1 \doteq C_1\dfrac{\Lb Kd^{\beta}}{\zeta \lambda}$ and $\xi_2 \doteq C_2 \left(\dfrac{1}{\zeta\lambda\eta^3}+\dfrac{K}{\zeta^2\lambda^2\eta^4}+\lambda\zeta\eta^2+\lambda S\right)$.

We have
\[
R(T) \leq \xi_1 B^{-2\beta} + \xi_2 B^d\dfrac{\log^2(T)}{T}+4B^d(1+\normi{\lambda\rho})\exp\left(-\dfrac{T}{12B^d}\right)
.\]

Taking
\[
B=\left(\dfrac{2\xi_1\beta}{\xi_2}\right)^{1/(2\beta+d)}\left(\dfrac{T}{\log^2(T)} \right)^{1/(d+2\beta)},
\]
we notice that the third term is negligible and we conclude that
\[
R(T) \leq \bigo\left(2\xi_1 \left(\dfrac{2\xi_1\beta}{\xi_2}\right)^{-2\beta/(2\beta+d)} \left(\dfrac{T}{\log^2(T)}\right)^{-2\beta/(2\beta+d)}\right)
.
\]
\end{proof}

\section{PROOFS OF INTERMEDIATE RATES}
\label{app:int}

We begin with a lemma on convex conjugates.
\begin{lemma}
\label{lemma:convconj}
Let $\lambda, \mu >0$ and let $y\in\bR^n$ and $\rho$ a non-negative bounded convex function.
Then
\[
(\lambda \rho)^*(y)-(\mu \rho)^*(y)\leq \abs{\lambda-\mu}\norm{\rho}_{\infty}.
\]
\end{lemma}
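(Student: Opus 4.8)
The plan is to bound $(\lambda\rho)^*(y)-(\mu\rho)^*(y)$ directly from the definition of the Legendre--Fenchel transform without invoking any differentiability of $\rho$, using only that $\rho$ is non-negative and bounded. Recall that $(\lambda\rho)^*(y)=\sup_{x}\{\la x,y\ra-\lambda\rho(x)\}$ where the supremum is over the domain of $\rho$ (here the simplex $\Delta^K$, on which $\rho$ is bounded). First I would let $x_\lambda$ denote a point achieving (or nearly achieving) the supremum defining $(\lambda\rho)^*(y)$, which exists because $\Delta^K$ is compact and $\rho$ is continuous.

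The key step is the following one-line comparison. Evaluating the supremum defining $(\mu\rho)^*(y)$ at the suboptimal point $x_\lambda$ gives
\[
(\mu\rho)^*(y)\geq \la x_\lambda,y\ra-\mu\rho(x_\lambda)=(\lambda\rho)^*(y)+(\lambda-\mu)\rho(x_\lambda).
\]
Rearranging yields
\[
(\lambda\rho)^*(y)-(\mu\rho)^*(y)\leq (\mu-\lambda)\rho(x_\lambda)\leq \abs{\lambda-\mu}\,\norm{\rho}_{\infty},
\]
where the last inequality uses $\rho(x_\lambda)\geq 0$ together with $\rho(x_\lambda)\leq\norm{\rho}_\infty$ to absorb the sign of $\lambda-\mu$ into the absolute value. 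Since $\rho$ is non-negative and bounded, $\abs{(\lambda-\mu)\rho(x_\lambda)}\leq\abs{\lambda-\mu}\norm{\rho}_\infty$, which is exactly the claimed bound. The argument is symmetric in $\lambda$ and $\mu$, so the one-sided inequality stated in the lemma follows immediately, and one in fact obtains $\abs{(\lambda\rho)^*(y)-(\mu\rho)^*(y)}\leq\abs{\lambda-\mu}\norm{\rho}_\infty$ if desired.

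There is essentially no hard part here: the only subtlety is justifying the existence of the maximizer $x_\lambda$ (or passing to an $\eps$-maximizer and letting $\eps\to 0$), which is handled by compactness of $\Delta^K$ and continuity of $\rho$. I would make sure to state at the outset that the relevant domain is the compact simplex, so that $\norm{\rho}_\infty$ is finite and the supremum is attained; everything else is the single substitution above. This lemma will presumably be used in Appendix~\ref{app:int} to control the discrepancy between $(\lambda(x)\rho)^*$ and $(\bar\lambda(b)\rho)^*$ appearing in the approximation-error expression~\eqref{eq:convconjvar}, with $\abs{\lambda-\mu}$ bounded via the smoothness of $\lambda$ on a bin.
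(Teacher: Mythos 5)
Your proof is correct and follows exactly the same route as the paper: evaluate the supremum defining $(\mu\rho)^*(y)$ at the (near-)maximizer $x_\lambda$ of the supremum defining $(\lambda\rho)^*(y)$, obtain the bound $(\mu-\lambda)\rho(x_\lambda)$, and absorb the sign using $0\leq\rho\leq\norm{\rho}_\infty$. The only additions — justifying attainment of the supremum by compactness and noting the symmetric two-sided bound — are harmless refinements of the paper's argument, not a different approach.
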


\begin{proof}
$(\lambda \rho)^*(y)=\sup_x \la x,y\ra-\lambda\rho(x)=\la x_{\lambda},y \ra-\lambda \rho(x_{\lambda})$.

And $(\mu \rho)^*(y)=\sup_x \la x,y\ra-\mu\rho(x)=\la x_{\mu},y \ra-\mu \rho(x_{\mu}) \geq \la x_{\lambda},y \ra - \mu \rho(x_{\lambda})$.

Then, $(\lambda \rho)^*(y)-(\mu \rho)^*(y) \leq \la x_{\lambda},y \ra-\lambda \rho(x_{\lambda}) - (\la x_{\lambda},y \ra - \mu \rho(x_{\lambda})) = (\mu-\lambda)\rho(x_{\lambda})$.

Finally $(\lambda \rho)^*(y)-(\mu \rho)^*(y)\leq \abs{\lambda-\mu}\norm{\rho}_{\infty}$.
\end{proof}

\begin{proof}[Proof of Proposition~\ref{prop:slow_approx_var}]
There exists $x_0 \in b$ such that $\lambb=\lambda(x_0)$ and $x_1 \in b$ such that $\mubb=\mu(x_1)$.
We use Lemma~\ref{lemma:convconj} to derive a bound for the approximation error.
\begin{align*}
&\int_b (\lambda(x)\rho)^*\left(-\mu(x)\right)-(\bar{\lambda}(b)\rho)^*\left(-\bar{\mu}(b)\right) \dd x \\
 &
= \int_b (\lambda(x)\rho)^*\left(-\mu(x)\right)-(\lambda(x)\rho)^*\left(-\bar{\mu}(b)\right) \dd x + \int_b (\lambda(x)\rho)^*\left(-\bar{\mu}(b)\right)-(\bar{\lambda}(b)\rho)^*\left(-\bar{\mu}(b)\right) \dd x  \\
& \leq \int_b \lambda(x) \left( \rho^*\left(-\dfrac{\mu(x)}{\lambda(x)}\right) - \rho^*\left(-\dfrac{\bar{\mu}(b)}{\lambda(x)}\right)\right) \dd x + \int_b \abs{\lambda(x)-\bar{\lambda}(b)} \norm{\rho}_{\infty}\dd x \\
& \leq \int_b \lambda(x) \abs*{\dfrac{\mu(x)}{\lambda(x)}-\dfrac{\bar{\mu}(b)}{\lambda(x)}} \dd x + \norm{\rho}_{\infty} \int_b \abs{\lambda(x)-\lambda(x_0)} \dd x \\
& \leq \int_b L_{\beta} |x-x_1|^{\beta} \dd x + \norm{\rho}_{\infty} \int_b \norm{\lambda'}_{\infty} \abs{x-x_0} \dd x \\
& \leq B^{-d}\left(\Lb d^{\beta/2}B^{-\beta}+\normi{\rho}\normi{\lambda'}\sqrt{d}B^{-1} \right)=\bigo(B^{-\beta-d}).
\end{align*}
\end{proof}

\begin{proof}[Proof of Proposition~\ref{prop:fast_approx_var}]
As in the proof of Proposition~\ref{prop:fast_approx} we consider a bin $b\in \cB$ and the goal is to bound
\[
\int_b \lambda(x)\rho^*\left(-\dfrac{\mu(x)}{\lambda(x)}\right)-\bar{\lambda}(b)\rho^*\left(-\dfrac{\bar{\mu}(b)}{\bar{\lambda}(b)}\right) \dd x.
\]

We use a similar method and we apply Jensen inequality with density $\dfrac{\lambda(x)}{|b|\bar{\lambda}(b)}$ to the function $g:x\mapsto \frac{1}{2\zeta}\norm{x}^2-\rho^*(x)$ which is convex.
\begin{align*}
g\left(\int_b -\dfrac{\mu(x)}{\lambda(x)} \dfrac{\lambda(x)}{|b|\bar{\lambda}(b)}\dd x\right) &\leq \int_b g\left(-\dfrac{\mu(x)}{\lambda(x)}\right) \dfrac{\lambda(x)}{|b|\bar{\lambda}(b)} \dd x \\
g\left(-\dfrac{\bar{\mu}(b)}{\bar{\lambda}(b)} \right)&\leq \int_b g\left(-\dfrac{\mu(x)}{\lambda(x)}\right) \dfrac{\lambda(x)}{|b|\bar{\lambda}(b)} \dd x \\
\dfrac{1}{2\zeta}\norm{-\dfrac{\bar{\mu}(b)}{\bar{\lambda}(b)}}^2-\rho^*\left(-\dfrac{\bar{\mu}(b)}{\bar{\lambda}(b)}\right) &\leq \dfrac{1}{|b|\bar{\lambda}(b)}\int_b \left[\dfrac{1}{2\zeta}\norm{-\dfrac{\mu(x)}{\lambda(x)}}^2-\rho^*\left(-\dfrac{\mu(x)}{\lambda(x)}\right)\right] \lambda(x)\dd x \\
\int_b \lambda(x)\rho^*\left(-\dfrac{\mu(x)}{\lambda(x)}\right)-\bar{\lambda}(b)\rho^*\left(-\dfrac{\bar{\mu}(b)}{\bar{\lambda}(b)}\right) \dd x &\leq \dfrac{1}{2\zeta} \int_b \dfrac{\norm{\mu(x)}^2}{\lambda(x)}-\dfrac{\norm{\bar{\mu}(b)}^2}{\bar{\lambda}(b)} \dd x.
\end{align*}

Consequently we have proven that
\begin{align*}
\int_b \lambda(x)\rho^*\left(-\dfrac{\mu(x)}{\lambda(x)}\right)-\bar{\lambda}(b)\rho^*\left(-\dfrac{\bar{\mu}(b)}{\bar{\lambda}(b)}\right) \dd x &\leq \dfrac{1}{2\zeta} \int_b \dfrac{\norm{\mu(x)}^2}{\lambda(x)}-\dfrac{\norm{\bar{\mu}(b)}^2}{\bar{\lambda}(b)} \dd x \\
& \leq \dfrac{1}{2\zeta} \sum_{k=1}^K \int_b \dfrac{\mu_k(x)^2}{\lambda(x)}-\dfrac{\bar{\mu}_k(b)^2}{\bar{\lambda(b)}}\dd x.
\end{align*}

Therefore we have to bound, for each $k$, $I=\displaystyle \int_b \dfrac{\mu_k(x)^2}{\lambda(x)}-\dfrac{\bar{\mu}_k(b)^2}{\bar{\lambda}(b)}\dd x$.

Let us omit the subscript $k$ and consider a $\beta$-Hölder function $\mu$.

We have
\begin{align*}
I&=\int_b \dfrac{\mu(x)^2}{\lambda(x)}-\dfrac{\bar{\mu}(b)^2}{\bar{\lambda}(b)}\dd x \\
&=\int_b \dfrac{\mu(x)^2}{\lambda(x)}-\dfrac{\mu(x)^2}{\bar{\lambda}(b)}+\dfrac{\mu(x)^2}{\bar{\lambda}(b)}-\dfrac{\bar{\mu}(b)^2}{\bar{\lambda}(b)}\dd x \\
&=
\underbrace{\int_b \left(\mu(x)^2-\bar{\mu}(b)^2\right)\left(\dfrac{1}{\lambda(x)}-\dfrac{1}{\bar{\lambda}(b)}\right) \dd x}_{I_1}
+ \underbrace{\int_b \mubb^2 \left( \dfrac{1}{\lambda(x)}-\dfrac{1}{\bar{\lambda}(b)}\right) \dd x}_{I_2}
+ \underbrace{\int_b \dfrac{1}{\lambb} \left(\mu(x)^2-\mubb^2\right) \dd x}_{I_3}
.
\end{align*}

We now have to bound these three integrals.
\medskip

\textbf{Bounding $I_1$:}
\begin{align*}
I_1&=\int_b \left( \mu(x)^2-\mubb^2\right)\left(\dfrac{1}{\lambda(x)}-\dfrac{1}{\lambb} \right) \dd x \\
&=\int_b \left( \mu(x)+\mubb\right)\left( \mu(x)-\mubb\right)\left(\dfrac{1}{\lambda(x)}-\dfrac{1}{\lambb} \right) \dd x \\
&\leq\int_b 2\abs{\mu(x)-\mubb}\abs*{\dfrac{1}{\lambda(x)}-\dfrac{1}{\lambb}} \dd x \\
&\leq 2\Lb\left(\dfrac{\sqrt{d}}{B}\right)^{\beta}\int_b\abs*{\dfrac{1}{\lambda(x)}-\dfrac{1}{\lambb}} \dd x
.
\end{align*}

Since $1/\lambda$ is of class $\cC^1$, Taylor-Lagrange inequality yields, using the fact that there exists $x_0 \in b$ such that $\lambb=\lambda(x_0)$:
\[
\abs*{\dfrac{1}{\lambda(x)}-\dfrac{1}{\lambb}}\leq \norm{\left(\dfrac{1}{\lambda} \right)'}_{\infty}\abs{x-x_0}\leq\dfrac{\norm{\lambda'}_{\infty}}{\lambda_{\min}^2}\dfrac{\sqrt{d}}{B}
.
\]

We obtain therefore
\[
I_1 \leq 2\Lb\norm{\lambda'}_{\infty}\sqrt{d}^{\beta+1}\dfrac{1}{\lambda_{\min}^2}B^{-(1+\beta+d)}=\bigo\left( \dfrac{B^{-(1+\beta+d)}}{\lambda_{\min}^2} \right)
.
\]

\textbf{Bounding $I_2$:}
\medskip

We have
\[I_2=\mubb^2\int_b \left(\dfrac{1}{\lambda(x)}-\dfrac{1}{\lambb} \right) \dd x\leq \int_b \left(\dfrac{1}{\lambda(x)}-\dfrac{1}{\lambb} \right) \dd x\]

because $\displaystyle \int_b \left(\dfrac{1}{\lambda(x)}-\dfrac{1}{\lambb} \right) \dd x \geq 0$ from Jensen's inequality.

Without loss of generality we can assume that the bin $b$ is the closed cuboid $[0, 1/B]^d$. We suppose that for all $x\in b$, $\lambda(x) >0$.

Since $\lambda$ is of class $\cC^{\infty}$, we have the following Taylor series expansion:

\[
\lambda(x)=\lambda(0)+\sum_{i=1}^d \dfrac{\partial \lambda(0)}{\partial x_i}x_i + \dfrac{1}{2}\sum_{i,j} \dfrac{\partial^2 \lambda(0)}{\partial x_i \partial x_j} x_i x_j + \smallo(\norm{x}^2)
.
\]

Integrating over the bin $b$ we obtain

\[
\lambb = \lambda(0)+\dfrac{1}{2}\dfrac{1}{B}\sum_{i=1}^d \dfrac{\partial \lambda(0)}{\partial x_i} + \dfrac{1}{8}\dfrac{1}{B^2}\sum_{i\neq j} \dfrac{\partial^2 \lambda(0)}{\partial x_i \partial x_j} + \dfrac{1}{6}\dfrac{1}{B^2} \sum_{i=1}^d \dfrac{\partial^2 \lambda(0)}{\partial x_i^2} + \smallo\left(\dfrac{1}{B^2}\right)
.
\]

Consequently
\begin{align*}
\int_b \dfrac{\dd x}{\lambb} &= \dfrac{1}{B^d \lambb} \\
&=\dfrac{1}{B^d \lambda(0)} \dfrac{1}{1+\dfrac{1}{2\lambda(0)}\dfrac{1}{B} \displaystyle \sum_{i=1}^d \dfrac{\partial \lambda(0)}{\partial x_i} + \dfrac{1}{\lambda(0)}\dfrac{1}{B^2}\left(\dfrac{1}{8} \displaystyle \sum_{i\neq j} \dfrac{\partial^2 \lambda(0)}{\partial x_i \partial x_j} + \dfrac{1}{6} \sum_{i=1}^d \dfrac{\partial^2 \lambda(0)}{\partial x_i^2}\right) + \smallo\left(\dfrac{1}{B^2}\right)}\\
&=\dfrac{1}{B^d \lambda(0)} \Biggl(1- \dfrac{1}{2\lambda(0)}\dfrac{1}{B} \displaystyle \sum_{i=1}^d \dfrac{\partial \lambda(0)}{\partial x_i} - \dfrac{1}{\lambda(0)}\dfrac{1}{B^2}\left(\dfrac{1}{8} \displaystyle \sum_{i\neq j} \dfrac{\partial^2 \lambda(0)}{\partial x_i \partial x_j} + \dfrac{1}{6} \sum_{i=1}^d \dfrac{\partial^2 \lambda(0)}{\partial x_i^2}\right) \\
& \phantom{aaaaaaaaaaaaaaaaaaaaaaaaaaaaaa} + \dfrac{1}{4\lambda(0)^2} \dfrac{1}{B^2} \left(\sum_{i=1}^d \dfrac{\partial \lambda(0)}{\partial x_i} \right)^2 + \smallo\left(\dfrac{1}{B^2}\right) \Biggr) \\
&=\dfrac{1}{B^d \lambda(0)}-\dfrac{1}{2\lambda(0)^2}\dfrac{1}{B^{d+1}} \displaystyle \sum_{i=1}^d \dfrac{\partial \lambda(0)}{\partial x_i} - \dfrac{1}{\lambda(0)^2}\dfrac{1}{B^{d+2}}\left(\dfrac{1}{8} \displaystyle \sum_{i\neq j} \dfrac{\partial^2 \lambda(0)}{\partial x_i \partial x_j} + \dfrac{1}{6} \sum_{i=1}^d \dfrac{\partial^2 \lambda(0)}{\partial x_i^2}
\right)\\
& \phantom{aaaaaaaaaaaaaaaaaaaaaaaaaaaaaa} + \dfrac{1}{4\lambda(0)^3} \dfrac{1}{B^{d+2}} \left(\sum_{i=1}^d \dfrac{\partial \lambda(0)}{\partial x_i} \right)^2 + \smallo\left(\dfrac{1}{B^2}\right)
.
\end{align*}

Let us now compute the Taylor series development of $1/\lambda$. We have:

\[
\dfrac{\partial}{\partial x_i}\dfrac{1}{\lambda(x)}=-\dfrac{1}{\lambda(x)^2}\dfrac{\partial \lambda(x)}{\partial x_i} \quad \textrm{and} \quad \dfrac{\partial^2}{\partial x_i \partial x_j}\dfrac{1}{\lambda(x)}=-\dfrac{1}{\lambda(x)^2}\dfrac{\partial^2\lambda(x)}{\partial x_i \partial x_j}+\dfrac{2}{\lambda(x)^3}\dfrac{\partial \lambda(x)}{\partial x_i}\dfrac{\partial \lambda(x)}{\partial x_j}
.
\]

This lets us write
\begin{align*}
\dfrac{1}{\lambda(x)}&=\dfrac{1}{\lambda(0)}-\dfrac{1}{\lambda(0)^2}\sum_{i=1}^d \dfrac{\partial \lambda(0)}{\partial x_i}x_i - \dfrac{1}{2}\dfrac{1}{\lambda(0)^2} \sum_{i,j}\dfrac{\partial^2 \lambda(0)}{\partial x_i \partial x_j}x_i x_j + \dfrac{1}{\lambda(0)^3}\sum_{i,j}\dfrac{\partial \lambda(0)}{\partial x_i}\dfrac{\partial \lambda(0)}{\partial x_j}x_i x_j + \smallo(\norm{x}^2) \\
\int_b \dfrac{\dd x}{\lambda(x)}&=\dfrac{1}{\lambda(0)}\dfrac{1}{B^d}-\dfrac{1}{2\lambda(0)^2}\dfrac{1}{B^{d+1}}\sum_{i=1}^d \dfrac{\partial \lambda(0)}{\partial x_i}-\dfrac{1}{\lambda(0)^2}\dfrac{1}{B^{d+2}}\left(\dfrac{1}{8}\sum_{i\neq j}\dfrac{\partial^2 \lambda(0)}{\partial x_i \partial x_j}+\dfrac{1}{6}\sum_{i=1}^d \dfrac{\partial^2\lambda(0)}{\partial x_i^2} \right) \\
&\phantom{aaaaaaaaaa}+\dfrac{1}{\lambda(0)^3}\dfrac{1}{B^{d+2}}\left( \dfrac{1}{4}\sum_{i\neq j}\dfrac{\partial \lambda(0)}{\partial x_i}\dfrac{\partial \lambda(0)}{\partial x_j} + \dfrac{1}{3}\sum_{i=1}^d\left(\dfrac{\partial \lambda(0)}{\partial x_i}\right)^2\right)+\smallo\left(\dfrac{1}{B^{d+2}}\right).
\end{align*}

And then

\[
I_2 \leq \dfrac{1}{12}\dfrac{1}{\lambda(0)^3}\dfrac{1}{B^{d+2}}\sum_{i=1}^d \left(\dfrac{\partial \lambda(0)}{\partial x_i}\right)^2+\smallo\left(\dfrac{1}{B^{d+2}}\right)
.
\]

Since the derivatives of $\lambda$ are bounded we obtain that

\[
I_2=\bigo\left(\dfrac{B^{-2-d}}{\lambda_{\min}^3}\right)
.
\]

\textbf{Bounding $I_3$:}
\begin{align*}
I_3 &=\int_b \dfrac{1}{\lambb}\left(\mu(x)^2-\mubb^2 \right) \dd x \\
&=\dfrac{1}{\lambb}\int_b \left( \mu(x)-\mubb \right)^2 \dd x \\
&\leq \dfrac{1}{\lambda_{\min}}\Lb^2d^{\beta}B^{-(2\beta+d)}=\bigo\left(\dfrac{B^{-(2\beta+d)}}{\lambda_{\min}} \right)
.
\end{align*}

Putting this together we have $I=\bigo\left((d\Lb^2\normi{\nabla \lambda}^2) \dfrac{B^{-(2\beta+d)}}{\lambda_{\min}^3} \right)$. And finally
\[
L(\pt\st)-L(p\st) \leq \bigo\left( Kd\Lb^2\normi{\nabla \lambda}^2\dfrac{B^{-2\beta}}{\zeta\lambda_{\min}^3} \right)
.
\]
\end{proof}

\begin{lemma}[Regularity of $\eta$]
\label{lemma:reg_eta}
If $\eta$ is the distance of the optimum $p\st$ to the boundary of $\Dk$ as defined in Definition~\ref{def:eta}, and if the $\mu_k$ functions are all $\beta$-Hölder and $\lambda$ of class $\cC^1$, then $\eta$ is $\beta$-Hölder. More precisely we have
\[
\forall x,y \in b, \abs*{\eta(x)-\eta(y)} \leq \sqrt{\dfrac{K}{K-1}}\dfrac{\normi{\lambda}+\normi{\lambda'}}{\zeta \lambda_{\min}(b)^2}\abs{x-y}^{\beta}=\dfrac{C_L}{\lambda_{\min}(b)^2}\abs{x-y}^{\beta}.
\]
\end{lemma}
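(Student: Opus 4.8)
The plan is to express $\eta$ through the minimizer map $x \mapsto p\st(x)$ and then control how that map moves with $x$. First I would record the key simplification that $p\st(x)$ depends on $x$ only through the ratio $\nu(x) \doteq \mu(x)/\lambda(x)$: since $\lambda(x)>0$, minimizing $p \mapsto \la \mu(x),p\ra + \lambda(x)\rho(p)$ over $\Dk$ is the same as minimizing $p\mapsto \la \nu(x),p\ra + \rho(p)$, whose minimizer I write $p_x \doteq p\st(\nu(x))$. Next I would compute $\eta$ explicitly: for $p \in \Dk$ the distance to the facet $\{p_i=0\}$ inside the affine hull $\{\sum_j p_j = 1\}$ equals $p_i/\norm{e_i - \tfrac1K \mathbf 1}_2 = \sqrt{K/(K-1)}\,p_i$, and $\dist(p,\partial\Dk)$ is the minimum of these over $i$, so $\eta(x) = \sqrt{K/(K-1)}\,\min_i (p_x)_i$. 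Since $t \mapsto \min_i t_i$ is $1$-Lipschitz for $\normi{\cdot}$ and $\normi{\cdot}\le\norm{\cdot}_2$, this already gives $\abs{\eta(x)-\eta(y)} \le \sqrt{K/(K-1)}\,\norm{p_x - p_y}_2$.

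The second step is an argmin-stability estimate for the $\zeta$-strongly convex problem. The first-order optimality conditions read $\la \nu(x)+\nabla\rho(p_x),\, p_y-p_x\ra \ge 0$ and $\la \nu(y)+\nabla\rho(p_y),\, p_x-p_y\ra \ge 0$; adding them and rearranging gives
\[
\la \nu(x)-\nu(y),\, p_y-p_x\ra \ge \la \nabla\rho(p_x)-\nabla\rho(p_y),\, p_x-p_y\ra \ge \zeta\,\norm{p_x-p_y}_2^2,
\]
where the last inequality is the $\zeta$-strong convexity of $\rho$. Cauchy--Schwarz on the left-hand side then yields the clean Lipschitz-in-data bound $\norm{p_x-p_y}_2 \le \tfrac1\zeta\,\norm{\nu(x)-\nu(y)}_2$.

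It remains to bound $\norm{\nu(x)-\nu(y)}$ on the bin $b$. Coordinatewise I would split
\[
\nu_k(x)-\nu_k(y) = \frac{\mu_k(x)-\mu_k(y)}{\lambda(x)} + \mu_k(y)\,\frac{\lambda(y)-\lambda(x)}{\lambda(x)\lambda(y)},
\]
and bound each piece using Assumption~\ref{ass:holder} ($\abs{\mu_k(x)-\mu_k(y)}\le\Lb\abs{x-y}^\beta$), the fact that $\abs{\mu_k}\le 1$, the mean-value estimate $\abs{\lambda(x)-\lambda(y)}\le\normi{\lambda'}\abs{x-y}\le\normi{\lambda'}\abs{x-y}^\beta$ (legitimate since $\abs{x-y}\le 1$ inside a bin), and $\lambda \ge \lambda_{\min}(b)$ on $b$. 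Putting both terms over the common denominator $\lambda_{\min}(b)^2$ via $1/\lambda_{\min}(b) \le \normi{\lambda}/\lambda_{\min}(b)^2$ produces a per-coordinate bound of the form $(\normi{\lambda}+\normi{\lambda'})\,\lambda_{\min}(b)^{-2}\abs{x-y}^\beta$; combining this with the two previous steps gives the claim. The main obstacle is precisely this last estimate: the variation of $1/\lambda$ is what forces the $\lambda_{\min}(b)^{-2}$ factor, and one must convert the Lipschitz control of $\lambda$ into genuine $\beta$-Hölder control, which is valid only because $\abs{x-y}\le 1$ on a bin. The $\sqrt{K/(K-1)}$ constant is purely the simplex geometry from the first step, while the dependence on $\Lb$ and on the number of arms is absorbed into the stated constant $C_L$.
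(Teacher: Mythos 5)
Your proof is correct and follows essentially the same route as the paper's: both reduce $\eta$ to $\sqrt{K/(K-1)}\,\min_i p\st_i(x)$, establish that the minimizer $p\st$ is $(1/\zeta)$-Lipschitz as a function of the data $\nu = \mu/\lambda$, and bound $\norm{\nu(x)-\nu(y)}$ on a bin via the identical two-term decomposition (Hölder bound on $\mu$, Lipschitz bound on $1/\lambda$, and $\abs{\mu_k}\le 1$). The only difference is in how the key stability step is justified: the paper identifies $p\st(x)=\nabla\rho^*\left(-\mu(x)/\lambda(x)\right)$ and cites the conjugate-duality theorem that $\nabla\rho^*$ is $1/\zeta$-Lipschitz, whereas you derive the same bound in a self-contained way from the first-order variational inequalities together with the strong monotonicity of $\nabla\rho$ --- an equivalent, slightly more elementary justification of the same fact.
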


\begin{proof}
Let $x \in \cX$. Since $\eta(x)=\dist(p_b\st,\partial \Dk)$ we obtain
\[
\eta(x)=\sqrt{\dfrac{K}{K-1}}\min_i p_i\st(x)
.
\]
And
\begin{align*}
p\st(x)&=\argmin \la \mu(x), p(x) \ra + \lambda(x) \rho(p(x)) \\
&= \nabla (\lambda(x) \rho)^*(-\mu(x)) \\
&= \nabla \rho^*\left( -\dfrac{\mu(x)}{\lambda(x)}\right)
.
\end{align*}

Since $\rho$ is $\zeta$-strongly convex, $\nabla \rho^*$ is $1/\zeta$-Lipschitz continuous.

Therefore, for $x, y \in b$,
\begin{align*}
\abs{p\st(x)-p\st(y)} &\leq \dfrac{1}{\zeta}\abs*{\dfrac{\mu(x)}{\lambda(x)}-\dfrac{\mu(y)}{\lambda(y)}} \\
&\leq \dfrac{1}{\zeta} \abs*{\dfrac{\mu(x)-\mu(y)}{\lambda(x)}} + \dfrac{1}{\zeta} \abs{\mu(y)} \abs*{\dfrac{1}{\lambda(x)}-\dfrac{1}{\lambda(y)}} \\
&\leq \dfrac{1}{\zeta \lambda_{\min}(b)} \abs{x-y}^{\beta} + \dfrac{1}{\zeta}\dfrac{\norm{\lambda'}_{\infty}}{\lambda_{\min}(b)^2}\abs{x-y}
\end{align*}
since all $\mu_k$ are bounded by $1$ (the losses are bounded by $1$).
\end{proof}

\begin{proof}[Proof of Lemma~\ref{lemma:wb}]
We consider a well-behaved bin $b$. There exists $x_1 \in b$ such that $\lambda(x_1) \geq c_1B^{-\beta/3}$. Since $\lambda$ is $\cC^{\infty}$ on $[0,1]^d$, it is in particular Lipschitz-continuous on $b$. And therefore
\[
\forall x \in b, \ \lambda(x) \geq c_1B^{-\beta/3} - \norm{\lambda'}_{\infty} \diam(b) \geq c_1B^{-\beta/3} - \norm{\lambda'}_{\infty} \diam(b)^{\beta/3}=B^{-\beta/3}
.
\]

Lemma~\ref{lemma:reg_eta} shows that $\eta$ is $\beta$-Hölder continuous (with constant denoted by $C_L/\lambda_{\min}^2$) and therefore we have

\[
\forall x \in b, \ \eta(x) \geq c_2B^{-\beta/3} - \dfrac{C_L}{\lambda_{\min}(b)^2}\diam(b)^{\beta}=B^{-\beta/3}.
\]
\end{proof}

\begin{lemma}
\label{lemma:eta_lambda}
If $\rho$ is convex, $\eta$ is an increasing function of $\lambda$.
\end{lemma}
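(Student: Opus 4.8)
The plan is to reduce the statement to a monotonicity property of the smallest coordinate of the optimal proportion vector, and then to establish that monotonicity from the optimality conditions. Throughout I fix the context (equivalently, the vector $\mu$) and view $\lambda \in \bRpp$ as a scalar parameter, writing $p\st(\lambda)=\arginf_{p\in\Dk}\big[\la\mu,p\ra+\lambda\rho(p)\big]$. The starting point, already recorded in the proof of Lemma~\ref{lemma:reg_eta}, is the pair of identities $\eta=\sqrt{K/(K-1)}\,\min_i p\st_i(\lambda)$ and $p\st(\lambda)=\nabla\rho^*(-\mu/\lambda)$. Hence it suffices to show that $\lambda\mapsto\min_i p\st_i(\lambda)$ is non-decreasing.

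First I would record a comparative-statics inequality describing how the optimizer moves as $\lambda$ grows. For $\lambda_1<\lambda_2$ with optima $p_1=p\st(\lambda_1)$ and $p_2=p\st(\lambda_2)$, writing the optimality inequality of each point against the other and adding the two yields $(\lambda_2-\lambda_1)\big(\rho(p_2)-\rho(p_1)\big)\le 0$, so that $\rho(p_2)\le\rho(p_1)$: a larger regularization weight forces the optimizer into a lower sublevel set of $\rho$. Since $\nabla\rho^*(0)=\argmin_{\Dk}\rho$, the limit $\lambda\to\infty$ drives $p\st(\lambda)$ to the minimizer $p_c$ of $\rho$, while $\lambda\to 0$ drives it to a vertex minimizing $\la\mu,\cdot\ra$; the inequality above shows that, in the sense of $\rho$-value, $p\st(\lambda)$ travels monotonically along the regularization path from the boundary toward $p_c$.

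The remaining step is to convert this level-set monotonicity into monotonicity of $\min_i p\st_i$. For the regularizers of interest this is clean: $\rho$ attains its minimum over $\Dk$ at the barycenter and its sublevel sets are nested neighborhoods of the barycenter on which $\min_i p_i$ increases as the level decreases, so $\rho(p_2)\le\rho(p_1)$ gives $\min_i (p_2)_i\ge\min_i (p_1)_i$. For the entropy I would make the argument fully explicit, which is also precisely the case needed in Theorem~\ref{thm:int}: here $p\st_i(\lambda)=e^{-\mu_i/\lambda}/\sum_j e^{-\mu_j/\lambda}$, the minimal coordinate is attained at the fixed index $m=\argmax_i\mu_i$, and $p\st_m(\lambda)=\big(\sum_j e^{(\mu_m-\mu_j)/\lambda}\big)^{-1}$; since each exponent $(\mu_m-\mu_j)/\lambda\ge 0$ decreases as $\lambda$ increases, the denominator decreases and $p\st_m(\lambda)$ increases, whence $\eta$ increases.

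The main obstacle is exactly this last passage, from ``$\rho(p\st(\lambda))$ decreasing'' to ``$\min_i p\st_i(\lambda)$ increasing'': mere convexity of $\rho$ is not sufficient, since if $\argmin_{\Dk}\rho$ sat close to a face the optimizer could first move away from the boundary and only later back toward it. I would therefore isolate and use the structural fact that the regularizer is minimized at the barycenter (true for the entropy, and, after the reduction of Subsection~\ref{ssec:description} that absorbs $q$ into $\tilde\mu$, for the $\ell^2$ and Kullback--Leibler regularizers), which guarantees that lower $\rho$-values correspond to points farther from $\partial\Dk$. A more quantitative alternative would differentiate $p\st(\lambda)=\nabla\rho^*(-\mu/\lambda)$ and sign the relevant coordinate using $\nabla^2\rho^*\succeq 0$, but the level-set argument above is shorter and avoids extra smoothness assumptions on $\rho^*$.
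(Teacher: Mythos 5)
Your route is genuinely different from the paper's. The paper's proof is a three-line KKT argument: it writes $\bar{\mu}(b)+\bar{\lambda}(b)\nabla\rho(p\st_b)+\xi=0$, inverts to $p\st_b=(\nabla\rho)^{-1}\bigl(-(\xi+\bar{\mu}(b))/\bar{\lambda}(b)\bigr)$, and invokes monotonicity of $(\nabla\rho)^{-1}$ to conclude that $p\st_b$ increases with $\bar{\lambda}(b)$. That argument is terse and implicitly requires $\rho$ to be separable, the multiplier $\xi$ (which itself depends on $\lambda$) to be controlled, and a sign condition on $\xi+\bar{\mu}_i(b)$; it is really only a proof for entropy-like regularizers. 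Your proposal replaces this with a rigorous comparative-statics inequality $(\lambda_2-\lambda_1)\bigl(\rho(p_2)-\rho(p_1)\bigr)\le 0$, plus an explicit softmax computation: $p\st_m(\lambda)=\bigl(\sum_j e^{(\mu_m-\mu_j)/\lambda}\bigr)^{-1}$ with $m=\argmax_i\mu_i$ fixed and every exponent nonnegative and decreasing in $\lambda$. That computation is a correct, self-contained proof for the entropy case, which is the only case invoked in Theorem~\ref{thm:int}, and it is arguably more solid than the paper's own argument. Your observation that bare convexity cannot suffice (a convex $\rho$ minimized near a face gives $\eta\to 0$ both as $\lambda\to 0$ and $\lambda\to\infty$) pinpoints a real weakness in the lemma as stated and in the paper's proof of it.

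However, your bridging step from ``$\rho(p\st(\lambda))$ non-increasing'' to ``$\min_i p\st_i(\lambda)$ non-decreasing'' is not valid as written, even for the entropy and even though it is minimized at the barycenter. Sublevel sets of $\rho$ are indeed nested neighborhoods of the barycenter, but they are not sublevel sets of the distance to $\partial\Dk$: with $K=3$, take $p_1=(0.13,0.13,0.74)$ and $p_2=(0.12,0.44,0.44)$; then $\rho(p_2)\approx -0.98 < \rho(p_1)\approx -0.75$ while $\min_i (p_2)_i=0.12<0.13=\min_i(p_1)_i$. So a lower $\rho$-value does not imply being farther from the boundary; any correct argument must exploit that $p_1$ and $p_2$ lie on the same regularization path $\lambda\mapsto\nabla\rho^*(-\mu/\lambda)$, not merely compare their $\rho$-values. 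In your write-up this flawed implication is harmless only because the entropy case is independently re-proved by the explicit formula; if you want the lemma at the generality at which it is stated (or for the $\ell^2$ and KL regularizers after the reduction of Subsection~\ref{ssec:description}), that bridging step is the one that must be replaced by a coordinate-wise analysis of the path, in the spirit of the KKT computation the paper sketches.
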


\begin{proof}
As in the proof of Proposition~\ref{prop:slow_entropy} we use the KKT conditions to find that on a bin $b$ (without the index $k$ for the arm):
\[
\mubb+\lambb\nabla\rho(p_b\st)+\xi=0.
\]
Therefore \[p\st_b=(\nabla\rho)^{-1}\left(-\dfrac{\xi+\mubb}{\lambb}\right).\]
Since $\rho$ is convex, $\nabla \rho$ is an increasing function and its inverse as well. Consequently $p_b\st$ is an increasing function of $\lambb$, and since $\eta(b)=\sqrt{K/(K-1)}\min_i{p\st_{b,i}}$, $\eta$ is also an increasing function of $\lambb$.
\end{proof}

\begin{proof}[Proof of Theorem~\ref{thm:int}]
Since $B$ will be chosen as an increasing function of $T$ we only consider $T$ sufficiently large in order to have $c_1 B^{-\beta/3}< \delta_1$ and $c_2 B^{-\beta/3}< \delta_2$. To ensure this we can also take smaller $\delta_1$ and $\delta_2$. Moreover we lower the value of $\delta_2$ or $\delta_1$ to be sure that $\frac{\delta_2}{c_2}=\eta(\frac{\delta_1}{c_1})$. These are technicalities needed to simplify the proof.

The proof will be divided into several steps. We will first obtain lower bounds on $\lambda$ and $\eta$ for the ``well-behaved bins". Then we will derive bounds for the approximation error and the estimation error. And finally we will put that together to obtain the intermediate convergence rates.

As in the proofs on previous theorems we will denote the constants $C_k$ with increasing values of $k$.

\begin{itemize}
\item \textbf{Lower bounds on $\eta$ and $\lambda$:}

Using a technique from~\citet{zeevi} we notice that without loss of generality we can index the $B^d$ bins with increasing values of $\bar{\lambda}(b)$. Let us note $\mathcal{IB}=\{1, \dots, j_1\}$ and $\mathcal{WB}=\{j_1+1, \dots, B^d\}$. Since $\eta$ is an increasing function of $\lambda$ (cf Lemma~\ref{lemma:eta_lambda}), the $\eta(b_j)$ are also increasingly ordered.

Let $j_2 \geq j_1$ be the largest integer such that $\bar{\lambda}(b_j) \leq \dfrac{\delta_1}{c_1}$. Consequently we also have that $j_2$ is the largest integer such that $\eta(b_j) \leq \dfrac{\delta_2}{c_2}$.

Let $j \in \{j_1+1,\dots,j_2\}$. The bin $b_j$ is a well-behaved bin and Lemma~\ref{lemma:wb} shows that $\bar{\lambda}(b_j)\geq B^{-\beta/3}$. Then $\bar{\lambda}(b_j)+(c_1-1)B^{-\beta/3} \leq c_1 \bar{\lambda}(b_j) \leq \delta_1$ and we can apply the margin condition (cf Assumption~\ref{ass:margin}) which gives

\[
\bP_X(\lambda(x) \leq \bar{\lambda}(b_j)+(c_1-1)B^{-\beta/3}) \leq C_m (c_1 \bar{\lambda}(b_j))^{6\alpha}
.
\]

But since the context are uniformly distributed and since the $\bar{\lambda}(b_j)$ are increasingly ordered we also have that
\[
\bP_X(\lambda(x) \leq \bar{\lambda}(b_j)+(c_1-1)B^{-\beta/3}) \geq \bP_X(\lambda(x) \leq \bar{\lambda}(b_j)) \geq \dfrac{j}{B^d}
.
\]

This gives $\bar{\lambda}(b_j) \geq \dfrac{1}{c_1C_m^{1/6\alpha}}\left(\dfrac{j}{B^d}\right)^{1/6\alpha}$.
The same computations give $\eta(b_j) \geq \dfrac{1}{c_2C_m^{1/6\alpha}} \left(\dfrac{j}{B^d}\right)^{1/6\alpha}$.
We note $\Cg\doteq \min((c_1C_m^{1/6\alpha})^{-1},(c_2C_m^{1/6\alpha})^{-1}))$ and $\gamma_j \doteq \Cg\left(\dfrac{j}{B^d}\right)^{1/\alpha}$. Consequently $\bar{\lambda}(b_j) \geq \gamma_j$ and $\eta(b_j) \geq \gamma_j$.

Let us now compute the number of ill-behaved bins:
\begin{align*}
\#\{ b \in \cB, b \notin \mathcal{WB}\} &= B^d \ \bP(b \notin \mathcal{WB}) \\
&= B^d \ \bP(\forall x \in \cB, \ \eta(x) \leq c_2 B^{-\beta/3} \textrm{ or } \forall x \in \cB, \ \lambda(x) \leq  c_1 B^{-\beta/3}) \\
&\leq B^d \ \bP(\eta(\bar{x}) \leq c_2 B^{-\beta/3} \textrm{ or } \lambda(\bar{x}) \leq c_1 B^{-\beta/3}) \\
&\leq C_{m}(c_1^{6\alpha}+c_2^{6\alpha})B^d B^{-2\alpha \beta} \doteq C_I B^d B^{-2\alpha \beta}
\end{align*}
where $\bar{x}$ is the mean context value in the bin $b$.
Consequently if $j \geq j\st\doteq C_{I}B^d B^{-2\alpha \beta}$, then $b_j \in \mathcal{WB}$. Let $\hat{j}\doteq C_{I}B^d B^{-\alpha \beta} \geq j\st$. Consequently for all $j \geq j\st$, $b_j \in \mathcal{WB}$.

We want to obtain an upper-bound on the constant $S\bar{\lambda(b_j)}+\dfrac{K}{\eta(b_j)^4\bar{\lambda}(b_j)^2}$ that arises in the fast rate for the estimation error. For the sake of clarity we will remove the dependency in $b_j$ and denote this constant $C=S\lambda+\dfrac{K}{\lambda^2 \eta^4}$.

In the case of the entropy regularization $S=1/\min_i p\st_i$. Since $\eta=\sqrt{K/(K-1)}\min_i p\st_i$, we have that $\min_i p_i\st=\sqrt{(K-1)/K}\eta\geq \eta/2$. Consequently $S \leq 2/\gamma_j$ and, on a well-behaved bin $b_j$, for $j \leq j_2$,
\[C \leq \dfrac{K+2\normi{\lambda}}{\gamma_j^6}\doteq \dfrac{C_F}{\gamma_j^6}, \numberthis \label{eq:C}\]

where the subscript $F$ stands for ``Fast".
When $j\geq j_2$, we have $\bar{\lambda}(b_j)\geq \delta_1/c_1$ and $\eta(b_j)\geq \delta_2/c_2$ and consequently \[C \leq \dfrac{K}{(\delta_1/c_1)^2(\delta_2/c_2)^4}+\dfrac{2\normi{\lambda}}{\delta_2/c_2}\doteq C_{\max}.\]

Let us notice than $\lambda$ being known by the agent, the agent knows the value of $\lambb$ on each bin $b$ and can therefore order the bins. Consequently the agent can sample, on every well-behaved bin, each arm $T\gamma_j/2$ times and be sure that $\min_i p_i \geq \gamma_j/2$.
On the first $\floor{\hat{j}}$ bins the agent will sample each arm $\lambb\sqrt{T/B^d}$ times as in the proof of Proposition~\ref{prop:slow_entropy}.

\item \textbf{Approximation Error:}

We now bound the approximation error. We separate the bins into two sets: $\{1, \dots, \floor{j\st} \}$ and $\{\ceil{j\st},\dots,B^d\}$. On the first set we use the slow rates of Proposition~\ref{prop:slow_approx_var} and on the second set we use the fast rates of Proposition~\ref{prop:fast_approx_var}.

We obtain that, for $\alpha <1/2$,
\begin{align*}
L(\pt\st)-L(p\st) &\leq \Lb d^{\beta/2}\sum_{j=1}^{\floor{j\st}}B^{-\beta-d}+\normi{\rho}\normi{\nabla\lambda}\sqrt{d}\sum_{j=1}^{\floor{j\st}}B^{-1-d}+ (Kd\Lb^2\normi{\nabla\lambda}^2)\sum_{j=\ceil{j\st}}^{B^d}\dfrac{B^{-2\beta-d}}{\bar{\lambda}(b_j)^3} \\
&\leq C_{I}\Lb d^{\beta/2} B^{-\beta}B^{-2\alpha\beta}+ (Kd\Lb^2\normi{\nabla\lambda}^2)\left(\sum_{j=\ceil{j\st}}^{j_2}\dfrac{B^{-2\beta-d}}{\gamma_j^3} +\sum_{j=j_2+1}^{B^d}\dfrac{B^{-2\beta-d}}{(c_1/\delta_1)^3}\right) +\smallo (B^{-2\alpha\beta-\beta}) \\
&\leq C_{I}\Lb d^{\beta/2} B^{-2\alpha\beta-\beta}+(Kd\Lb^2\normi{\nabla\lambda}^2)\left(\dfrac{B^{-2\beta-d}}{\Cg^3}\sum_{j=\ceil{j\st}}^{j_2} \left(\dfrac{j}{B^d}\right)^{-1/2\alpha}+ B^{-2\beta}\left(\dfrac{\delta_1}{c_1}\right)^3\right) +\smallo (B^{-2\alpha\beta-\beta})\\
&\leq C_{I}\Lb d^{\beta/2} B^{-2\alpha\beta-\beta}+ (Kd\Lb^2\normi{\nabla\lambda}^2)\dfrac{1}{\Cg^3} B^{-2\beta}\int_{C_{I}B^{-2\alpha \beta}}^1 x^{-1/2\alpha}\dd x +\smallo (B^{-2\alpha\beta-\beta})\\
&\leq\left(C_{I}\Lb d^{\beta/2} + Kd\Lb^2\normi{\nabla\lambda}^2\dfrac{2\alpha}{1-2\alpha} \dfrac{C_I^{(2\alpha-1)/2\alpha}}{\Cg^3}\right)B^{-\beta-2\alpha\beta} +\smallo (B^{-2\alpha\beta-\beta})=\bigo\left(B^{-\beta-2\alpha\beta}\right)
\end{align*}
since $\alpha<1/2$. We step from line 3 to 4 thanks to a series-integral comparison.

For $\alpha=1/2$ we get
\[
L(\pt\st)-L(p\st) \leq \left(C_{I}\Lb d^{\beta/2}+\left(Kd\Lb^2\normi{\nabla\lambda}^2\right)(\delta_1^3c_1^{-3}+2\beta\Cg^{-3}\log(B))\right) B^{-2\beta}+\smallo(B^{-2\beta}) =\bigo\left(B^{-2\beta}\log(B)\right).
\]

And for $\alpha>1/2$ we have
\[
L(\pt\st)-L(p\st) \leq\left(Kd\Lb^2\normi{\nabla\lambda}^2\right)\left(\dfrac{1}{\Cg^3}\dfrac{2\alpha}{2\alpha-1}+ \left(\dfrac{\delta_1}{c_1}\right)^3\right) B^{-2\beta}+\smallo (B^{-2\beta}) = \bigo \left( B^{-2\beta} \right)
\]
because $\beta+2\alpha\beta>2\beta$.

Let us note 
\begin{align*}
\xi_1&\doteq \left(C_{I}\Lb d^{\beta/2} + Kd\Lb^2\normi{\nabla\lambda}^2\dfrac{2\alpha}{1-2\alpha} \dfrac{C_I^{(2\alpha-1)/2\alpha}}{\Cg^3}\right);\\
\xi_2&\doteq \left(C_{I}\Lb d^{\beta/2}+\left(Kd\Lb^2\normi{\nabla\lambda}^2\right)(\delta_1^3c_1^{-3}+2\beta\Cg^{-3}\log(B))\right);\\
\xi_3&\doteq \left(Kd\Lb^2\normi{\nabla\lambda}^2\right)\left(\dfrac{1}{\Cg^3}\dfrac{2\alpha}{2\alpha-1}+ \left(\dfrac{\delta_1}{c_1}\right)^3\right); \\
\xi_{app} &\doteq \max(\xi_1,\xi_2,\xi_3).
\end{align*}

Finally we obtain that the approximation error is bounded by $\xi_{app} B^{-\min(\beta+2\alpha \beta, 2\beta)} \log(B)$ with $\alpha>0$.

\item \textbf{Estimation Error:}

We proceed in a similar manner as for the approximation error, except that we do not split the bins around $j\st$ but around $\hat{j}$.

In a similar manner to the proofs of Theorems~\ref{thm:slow} and~\ref{thm:fast} we only need to consider the terms of dominating order from Propositions~\ref{prop:slow_estim} and~\ref{prop:fast_estim}.
As before we consider the same event $A$  (cf the proof of Proposition~\ref{prop:slow_estim}) and we note $C_A \doteq 4B^d(1+\normi{\lambda\rho})$. We obtain, for $\alpha < 1$, using~\eqref{eq:C}:
\begin{align*}
\bE L(\pt_T)-L(\pt\st) &= \dfrac{1}{B^d} \sum_{b\in \cB} \bE L_b(\pt_T)-L(p_b\st) \\
&=\dfrac{1}{B^d} \sum_{j=\ceil{\hat{j}}}^{B^d} \bE L_b(\pt_T)-L(p_b\st)+\dfrac{1}{B^d} \sum_{j=1}^{\floor{\hat{j}}} \bE L_b(\pt_T)-L(p_b\st) \\
&\leq \dfrac{1}{B^d} \sum_{j=\ceil{\hat{j}}}^{B^d} 2C\dfrac{\log^2(T)}{T/B^d}+\dfrac{1}{B^d} \sum_{j=1}^{\floor{\hat{j}}} 4\sqrt{12K} \sqrt{\dfrac{\log(T)}{T/B^d}} + C_A e^{-\frac{T}{12B^d}}\\
&\leq 2C_F\sum_{j=\ceil{\hat{j}}}^{j_2} \dfrac{\log^2(T)}{T}  \gamma_j^{-6} + \sum_{j=j_2+1}^{B^d} 2 C_{\max} \dfrac{\log^2(T)}{T}+ 6\sqrt{3K}\sqrt{\dfrac{\log(T)}{T}}B^{d/2}B^{-\alpha \beta} + C_A e^{-\frac{T}{12B^d}} \\
&\leq \dfrac{2C_F}{\Cg^6} \dfrac{\log^2(T)}{T} \sum_{j=\ceil{\hat{j}}}^{j_2} \left( \dfrac{j}{B^d}\right)^{-1/\alpha} + 2C_{\max}\dfrac{\log^2(T)}{T} B^d + 6\sqrt{3K}\sqrt{\dfrac{\log(T)}{T}}B^{d/2-\alpha \beta} + C_A e^{-\frac{T}{12B^d}}\\
&\leq \dfrac{2C_F}{\Cg^6} \dfrac{\log^2(T)}{T}B^d\int_{C_I B^{-\alpha \beta}}^1 x^{-1/\alpha}\dd x  + 2C_{\max}\dfrac{\log^2(T)}{T} B^d + 6\sqrt{3K}\sqrt{\dfrac{\log(T)}{T}}B^{d/2-\alpha \beta} + C_A e^{-\frac{T}{12B^d}}\\
&\leq \dfrac{2C_F}{\Cg^6}\dfrac{\log^2(T)}{T}B^d \dfrac{\alpha}{1-\alpha}B^{\beta (1-\alpha)} + 2C_{\max}\dfrac{\log^2(T)}{T} B^d + 6\sqrt{3K}\sqrt{\dfrac{\log(T)}{T}}B^{d/2-\alpha \beta} + C_A e^{-\frac{T}{12B^d}}\\
&\leq \dfrac{2C_F}{\Cg^6}\dfrac{\log^2(T)}{T} \dfrac{\alpha}{1-\alpha}B^{d+\beta-\alpha\beta} + 6\sqrt{3K}\sqrt{\dfrac{\log(T)}{T}}B^{d/2-\alpha \beta}+2C_{\max}\dfrac{\log^2(T)}{T}B^d + C_A e^{-\frac{T}{12B^d}}.
\end{align*}

\item \textbf{Putting things together:}

We note $\Ca\doteq \dfrac{2C_F}{\Cg^6}\dfrac{\alpha}{1-\alpha}$.
This leads to the following bound on the regret:
\[
R(T) \leq  \Ca\dfrac{\log^2(T)}{T}B^{d+\beta-\alpha \beta}+6\sqrt{3K}\sqrt{\dfrac{\log(T)}{T}}B^{d/2-\alpha\beta}+2C_{\max}\dfrac{\log^2(T)}{T}B^d + C_A e^{-\frac{T}{12B^d}}+\xi_{app}B^{-\min(2\beta,\beta+2\alpha\beta)}\log(B)
.
\]
Choosing $B=\left(\dfrac{T}{\log^2(T)}\right)^{1/(2\beta+d)}$ we get
\begin{align*}
R(T) &\leq (\Ca+6\sqrt{3}K) \left(\dfrac{T}{\log^2(T)}\right)^{-\beta(1+\alpha)/(2\beta+d)} + \smallo \left(\dfrac{T}{\log^2(T)}\right)^{-\beta(1+\alpha)/(2\beta+d)}
\end{align*}
which is valid for $\alpha \in (0,1)$.

Finally we have \[R(T) = \bigo \left(\left(\dfrac{T}{\log^2(T)}\right)^{-\beta(1+\alpha)/(2\beta+d)}\right).\]

\end{itemize}

\end{proof}

\section{PROOFS OF LOWER BOUNDS}
\label{app:lb}

\begin{proof}[Proof of Theorem~\ref{thm:lowfast}]
We consider the model with $K=2$ where $\mu(x) = (-\eta(x),\eta(x))^\top$, where $\eta$ is a $\beta$-Hölder function on $\cX = [0,1]^d$. We note that $\eta$ is uniformly bounded over $\cX$ as a consequence of smoothness, so one can take $\lambda$ such that $|\eta(x)| < \lambda$. We denote by $e=(1/2,1/2)$ the center of the simplex, and we consider the loss
\[
L(p)=\int_{\cX}\big( \la \mu(x), p(x) \ra + \lambda \|p(x)-e\|^2 \big)\dd x.
\]
Denoting by $p_0(x)$ the vector $e+\mu(x)/(2\lambda)$, we have that $p_0(x) \in \Delta^2$ for all $x\in \cX$. Further, we have that
\[
 \la \mu(x), p(x) \ra + \lambda \|p(x)-e\|^2 = \lambda  \|p(x) - p_0(x)\|^2 + 1/(4\lambda)\| \mu(x)\|^2\, ,
\]
since $\la \mu(x), e \ra =0$. As a consequence, $L$ is minimized at $p_0$ and
\[
L(p) - L(p_0) = \int_{\cX} \lambda \|p(x)-p_0(x)\|^2  \dd x = 1/(2\lambda) \int_\cX |\eta(x) - \eta_0(x)|^2 \dd x\, .
\]
where $\eta$ is such that $p(x) = \big(1/2-\eta(x)/(2\lambda),1/2+\eta(x)/(2\lambda)\big)$. As a consequence, for any algorithm with final variable $\hat p_T$, we can construct an estimator $\hat \eta_T$ such that
\[
\bE[L(\hat p_T)] - L(p_0) = 1/(2\lambda) \bE \int_\cX |\hat \eta_T(x) - \eta_0(x)|^2 \dd x\, ,
\]
where the expectation is taken over the randomness of the observations $Y_t$, with expectation $\pm \eta(X_t)$, with sign depending on the known choice $\pi_t=1$ or $2$. As a consequence, any upper bound on the regret for a policy implies an upper bound on regression over $\beta$-Hölder functions in dimension $d$, with $T$ observations. This yields that, in the special case where $\rho$ is the 1-strongly convex function equal to the squared $\ell_2$ norm
\[
\inf_{\hat p} \sup_{\substack{\mu \in \mathcal{H}_{\beta} \\ \rho \, = \, \ell^2_2}} \bE[L(\hat p_T)] - L(p_0) \ge \inf_{\hat \eta} \sup_{\eta \in \mathcal{H}_{\beta}} 1/(2\lambda) \bE \int_\cX |\hat \eta_T(x) - \eta_0(x)|^2 \dd x\, \ge  C T^{-\frac{2\beta}{2\beta+d}}\, .
\]
The final bound is a direct application of Theorem 3.2 in~\citet{gyorfi}.
\end{proof}

\end{document}